\def\eqref#1{equation~\ref{#1}}
\def\1{\bm{1}}
\DeclareMathAlphabet{\mathsfit}{\encodingdefault}{\sfdefault}{m}{sl}
\SetMathAlphabet{\mathsfit}{bold}{\encodingdefault}{\sfdefault}{bx}{n}
\DeclareMathOperator*{\argmax}{arg\,max}
\newtheorem{theorem}{Theorem}
\newtheorem{corollary}{Corollary}
\newtheorem{definition}{Definition}
\newtheorem{lemma}{Lemma}
\newtheorem{remark}{Remark}
\newtheorem{assumption}{Assumption}
\newtheorem{claim}{Claim}[lemma]
\crefname{assumption}{Assumption}{Assumptions}
\title{Efficient Model-Based Multi-Agent Mean-Field \\ Reinforcement Learning}
\newcommand{\at}{\makeatletter @\makeatother}
\author{\name Barna Pásztor \email barna.pasztor\at ai.ethz.ch \\
      \addr ETH Zürich
      \AND
      \name Ilija Bogunovic \email i.bogunovic\at ucl.ac.uk\\
      \addr University College London
      \AND
      \name Andreas Krause \email krausea\at ethz.ch\\
      \addr ETH Zürich}
\begin{document}

\maketitle

\begin{abstract}
Learning in multi-agent systems is highly challenging due to several factors including the non-stationarity introduced by agents' interactions and the combinatorial nature of their state and action spaces.
In particular, we consider the \emph{Mean-Field Control} (MFC) problem which assumes an asymptotically infinite population of identical agents that aim to collaboratively maximize the collective reward. In many cases, solutions of an MFC problem are good approximations for large systems, hence, efficient learning for MFC is valuable for the analogous discrete agent setting with many agents.
Specifically, we focus on the case of \emph{unknown} system dynamics where the goal is to simultaneously optimize for the rewards and learn from experience. We propose an efficient \emph{model-based} reinforcement learning algorithm, \mfhucrl, that runs in episodes, balances between exploration and exploitation during policy learning, and \emph{provably} solves this problem. Our main theoretical contributions are the first general regret bounds for model-based reinforcement learning for MFC, obtained via a novel mean-field type analysis. To learn the system’s dynamics, \mfhucrl can be instantiated with various statistical models, e.g., neural networks or Gaussian Processes. Moreover, we provide a practical parametrization of the core optimization problem that facilitates gradient-based optimization techniques when combined with differentiable dynamics approximation methods such as neural networks.\looseness=-1
\end{abstract}

\vspace{-1ex}
\section{Introduction}
Multi-Agent Reinforcement Learning (MARL) extends the scope of reinforcement learning (RL) to multiple agents acting in a shared system. It is receiving considerable attention given a great number of real-world applications including autonomous driving \citep{shalev2016safe}, finance \citep{lee2002stock,lee2007multiagent,Lehalle2019ACorrelations}, social science \citep{castelfranchi2001theory,leibo2017multi}, swarm motion \citep{Almulla2017TwoGames}, e-sports \citep{vinyals2019grandmaster, pachocki2018openai}, and traffic routing \citep{eltantawy2013Multiagent}, to name a few.
Despite the recent popularity, analyzing agents' performance in these systems remains a particularly challenging task for several reasons including non-stationarity, scalability, competing learning goals, and varying information availability of agents. In this work, we target the challenges of \emph{non-stationarity} and \emph{scalability}. From the perspective of an individual agent, the dynamics of the system are non-stationary, since other agents update their behaviour, which alters over time how the environment reacts to certain actions of the agent. Multi-agent systems also suffer from a \emph{combinatorial nature} due to the exponential growth of the state and action space as the number of agents in the system increases. 
\looseness=-1

\looseness=-1
To tackle the previous challenges, {\em Mean-Field Control} (MFC) exploits the insight that many relevant MARL problems involve a large number of {\em very similar} agents that are coordinated via a centralized controller. As a motivating application, consider a ride-hailing service in which a central dispatcher coordinates the routes of many drivers around the city. MFC considers the limiting regime of controlling {\em infinitely many identical collaborative} agents, and utilizes the notion of mean-field approximation from statistical physics \citep{weiss1907hypothese} to model their interactions. Specifically, instead of focusing on the individual agents and their interactions, the state of the system is described via the agents' state distribution and the dynamics and rewards are formulated accordingly. Despite the introduced approximations, the solution to the MFC problem often remains adequate for the finite agent equivalent \citep{lacker2017limit}.
\looseness=-1

The focus of this work is on learning the optimal policies in MFC when the underlying dynamics are {\em unknown}. In complex domains such as fleets of autonomous vehicles or robot swarms, this is often the case, and the models consequently need to be learnt from data, giving rise to a well-known exploration--exploitation dilemma.
In this paper, we propose the \emph{Model-Based Multi-Agent Mean-Field Upper-Confidence RL} algorithm (\mfhucrl) that is provably efficient and can effectively trade-off between exploration and exploitation during policy learning.
Similarly to other Model-Based Reinforcement Learning (MBRL) algorithms, \mfhucrl collects data about the unknown dynamics online (i.e., by proposing and executing a policy in the true system) and estimates the possible dynamics the system might follow. To optimize the agent's actions, it simulates the unknown system by using the estimated dynamics. \citet{chua2018deep} has shown that MBRL can solve challenging single-agent RL problems with low sample complexity. We transfer this property to multi-agent problems and show that \mfhucrl can efficiently solve the MFC problem with unknown dynamics.
\looseness=-1

\textbf{Related work.} Our \mfhucrl extends the class of Model-Based Reinforcement Learning (MBRL) algorithms that rely on the \emph{optimism-in-the-face-of-uncertainty} principle. 
Algorithms based on this approach achieve optimal regret for tabular MDPs \citep{brafman2002r, auer2009near, efroni2019tight, domingues2020regret}, while in the continuous setting,
\cite{Abbasi-Yadkori2011RegretSystems} and \cite{jin2020provably} show $\Tilde{\mathcal{O}}(\sqrt{T})$ regret bounds for LQR and linear MDPs, respectively. The recently designed episodic model-based algorithms \citep{chowdhury2019online, kakade2020information, curi2020efficient, Sessa2022EfficientComputation} achieve similar regret in the kernelized setting.
\citet{Sessa2022EfficientComputation} consider the general multi-agent setting, and prove a regret bound with $\mathcal{O}(N^{H/2})$ dependence on the number $N$ of agents. In comparison, \mfhucrl assumes an asymptotically large population of identical agents. Crucially, as we will show, its performance does {\em not} depend on the number of agents, making it more suitable for large systems.
\looseness=-1

{\em Multi-Agent Reinforcement Learning (MARL)} has seen tremendous progress in recent years as many real-world applications involve interactions among a number of agents.
\cite{Busoniu2008ALearning} provide an overview of classical and early MARL results. Several other MARL surveys focus on: non-stationarity induced by the agent interactions \citep{Hernandez-Leal2017ANon-Stationarity}, deep MARL \citep{Hernandez-Leal2019ALearning, Nguyen2020DeepApplications}, theoretical foundations in Markov/stochastic/extensive-form games \citep{Zhang2019Multi-AgentAlgorithms} and cooperative MARL \citep{OroojlooyJadid2019ALearning}.
Recently, \cite{Zhang2019Multi-AgentAlgorithms} and \cite{Nguyen2020DeepApplications} recognize the lack of practical model-based MARL algorithms (with an early exception of the R-Max algorithm proposed by \cite{brafman2000near, brafman2002r} for two-player zero-sum Markov Games). We fill this gap by designing a practical model-based algorithm that considers a large population of agents, and that is compatible with deep models.\looseness=-1

Modeling interaction via the mean-field approach in MARL has recently gained popularity due to the introduction of Mean-Field Games (MFG) \citep{lasry2006jeux1,lasry2006jeux2, huang2006large, huang2007large}. MFG consider the limiting regime of a competitive game among identical agents with the objective of reaching a Nash Equilibrium. Various works \citep{Yin2010LearningGames, Yin2014LearningGames, Cardaliaguet2017LearningPlay, gueant2011mean, bensoussan2013mean, carmona2013probabilistic, carmona2018probabilistic, gomes2014mean} mainly consider solving MFGs in continuous-time, while some very recent ones \citep{Yang2017LearningBehavior, yang2018mean, Fu2019Actor-CriticGames, Guo2019LearningGames, Guo2020AGames, elie_convergence_2019, agarwal_reinforcement_2019} analyze the problem from the discrete-time RL perspective. The progress of the field has been recently summarized by \citet{LauriereLearningSurvey}.\looseness=-1%

At the same time, cooperative discrete-time {\em Mean-Field Control}, the main focus of this paper, has received significantly less attention. \citet{gast2012mean, Motte2019Mean-fieldControls, Gu2019DynamicMFCs,  Gu2021Mean-FieldAnalysis} and \citet{Bauerle2021MeanProcesses} consider planning with known dynamics.
In the learning setting, \citet{wang2020breaking, carmona2019model, Gu2021Mean-FieldAnalysis, Angiuli2022UnifiedProblems, Angiuli2021ReinforcementEconomics} extend the Q-Learning algorithm and show convergence for specific variants of MFC focusing on discrete state and action spaces.
The mean-field extension of the linear-quadratic problem has been studied by \citet{Carmona2019Linear-QuadraticMethods, Wang2021GlobalTime, Carmona2021Linear-quadraticOptimization} with policy optimisation methods.
\citet{subramanian_reinforcement_2019} also proposes a policy-gradient based algorithm that solves Mean-Field Games and Control problems locally.
Additionally, \citet{Chen2021PessimismRL, Li2021PermutationApproach, wang2020breaking} consider the approximation error between the Mean-Field and the related Multi-Agent Reinforcement Learning problems.
These previous model-free methods are either sample inefficient, assume access to a simulator or consider finite state and action spaces. On the contrary, our algorithm is model-based, sample-efficient, works for continuous spaces and learns the policy online by performing exploration on a real system.\looseness=-1

\textbf{Main contributions.} We design a novel \emph{Model-based Multi-agent Mean-field Upper Confidence} RL (\mfhucrl) algorithm for the centralized control problem of a large population of collaborative agents. \mfhucrl is practical, performs exploration on a real system (requires no simulator and efficient in terms of samples), and is compatible with neural network policy learning. Our main contributions are the general theoretical regret bounds obtained via a novel mean-field-based analysis, that we also specialize to Gaussian Process models. In the main step of our theoretical analysis, we bound the distance between the mean-field distributions under the true and the approximated dynamics via the model's total estimated uncertainty.
Finally, we pose the exploration via entropy maximization problem used in a recent MFG survey \citep{LauriereLearningSurvey} in the continuous state and action domain and demonstrate the performance of our algorithm in it. Our results show that \mfhucrl is capable of finding close-to-optimal policies within a few episodes, in contrast to model-free algorithms that require at least six orders of magnitude more samples.

\vspace{-1ex}
\section{Problem Statement}
\label{section:ProblemStatement}

We consider the episodic Mean-Field Control (MFC) problem with time-horizon $H$, compact state space $\stateSpace{} \subseteq \mathbb{R}^p$ and compact action space $\actionSpace{} \subseteq \mathbb{R}^q$ that are common for all the agents in the system. We index episodes with the variable $t$ and use $T$ to denote the number of episodes passed.
In standard $N$-agent settings, the system's state in episode $t$ at time $h \in \lbrace 0, \dots, H-1 \rbrace$ is described with individual agents' states $(\state{t,h}{(1)}, \dots, \state{t,h}{(N)}) \in \stateSpace{N}$ which grows exponentially in the number of agents $N$ and represents a common issue in scaling MARL algorithms. In MFCs, we assume that all agents are \emph{identical} and the population is \emph{asymptotically infinite}, i.e., $N\to\infty$, therefore, the agents' states can be described by the \emph{mean-field distribution}:
\vspace{-5pt}
\begin{equation*}
        \stateDist{t,h}(s) = \lim_{N \to \infty} \frac{1}{N} \sum_{i=1}^N \mathbb{I}_{ \lbrace \state{t,h}{(i)} = s \rbrace },
        \vspace{-5pt}
\end{equation*}
where $\stateDist{t,h}$ belongs to the space of probability measures $\stateDistSpace$ over $\stateSpace{}$. Due to the assumption of \emph{identical} agents in MFC, we can focus on a \emph{representative agent} from the population that interacts with the distribution of agents instead of individual agents and interactions.

\textbf{System Dynamics.} Before every episode $t$, the representative agent selects a policy profile $\policyProfile{t}{} = (\policy{t,0}{}, \dots, \policy{t,H-1}{})$ where the $H$ individual policies $\policy{t,h}{}: \stateSpace{} \times \stateDistSpace \rightarrow \actionSpace{}$ 
are chosen from a set of admissible policies $\admissiblePolicies{}$ (e.g., we parametrize our polices via neural networks; see below). At every time $h$, the representative agent selects an action $ \action{t,h}{} = \policy{t,h}{} (\state{t,h}{}, \stateDist{t,h})$. 
The environment then returns reward $r^{}(\state{t,h}{}, \action{t,h}{}, \stateDist{t,h})$, and the agent observes its new state $\state{t,h+1}{}$ and the mean-field distribution $\stateDist{t,h+1}$.
\looseness=-1

In MFCs, the system's dynamics are typically given by a McKean-Vlasov type of stochastic processes and depend on the agent's state, action and the mean-field distribution:\looseness=-1
\vspace{-2pt}
\begin{equation}
    \state{t,h+1}{} = f(\state{t,h}{}, \action{t,h}{}, \stateDist{t,h}) +   \noise{t,h}{},
    \label{eq:transitionDynamics}
    \vspace{-1pt}
\end{equation}
where $\noise{t,h}{}$ is an \emph{i.i.d.}~additive noise vector. This is in contrast to the standard single-agent model-based RL (cf. \cite{chowdhury2019online,curi2020efficient,kakade2020information}), where the dynamics only depend on agent's action and state. Crucially, since the focus of this work is on {\em learning} in MFC, we assume that the true dynamics are \emph{unknown}, and the goal is to explore the space and learn about $f$ over a number of episodes.
\footnote{
    We expect our results to easily extend and account for unknown rewards by using the same modeling assumptions for the reward function (e.g., as in  \cite{chowdhury2019online}).\looseness=-1
}
To do so, the representative agent relies on the collected random observations, $\mathcal{D}_{t} = \lbrace((\state{t,h}{}, \action{t,h}{}, \stateDist{t,h}), \state{t,h+1}{})\rbrace_{h=0}^{H-1}$, which come from the interaction with the true system (i.e., from policy rollouts) during episode $t$.

Finally, after every episode, we assume that the whole system is reset and the agent's initial state $\state{0}{}$ is drawn from a known initial distribution $\stateDist{0}$, i.e., $\state{0}{} \sim \stateDist{0}$, that remains the same in every episode. 

\textbf{Mean-field Flow.} Since, in MFC, all agents are identical, interact in a common environment, and follow the \emph{same policy} $\policyProfile{t}{}=(\policy{t,0}{}, \dots, \policy{t,H-1}{})$, the subsequent mean-field distributions satisfy the following mean-field flow property as shown by \cite{Gu2019DynamicMFCs}:\looseness=-1
\begin{lemma}
    \label{lemma:MFDistributionTransition}
    For a given initial distribution $\stateDist{0}$, dynamics $f$ and policy $\policyProfile{}{} = (\policy{0}{}, \dots, \policy{h-1}{})$, the mean-field distribution trajectory $\lbrace\stateDist{h}\rbrace_{h=0}^{H-1}$, follows
    \begin{equation}
        \stateDist{h+1}(ds') =
            \int_{s \in \stateSpace{}} \stateDist{h} (ds) \mathbb{P}[ \state{h+1}{} \in ds'],
    \label{eq:flowdynamics}
    \end{equation}
    where $\stateDist{h} \in \stateDistSpace$ for all $h \geq 0$, $\action{h}{} = \policy{h}{} (\state{h}{}, \stateDist{h})$, $\state{h+1}{}$ is given by \cref{eq:transitionDynamics}, and $\stateDist{h} (ds) = \mathbb{P}[\state{h}{} \in ds]$ under $\policyProfile{}{}$.
\end{lemma}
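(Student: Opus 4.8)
The plan is to prove \cref{lemma:MFDistributionTransition} by induction on $h$, recognizing the mean-field flow \eqref{eq:flowdynamics} as nothing more than the forward (Chapman--Kolmogorov) equation for the marginal law of the time-inhomogeneous Markov process that governs the representative agent's state. First I would observe that, once the initial distribution $\stateDist{0}$ and the policy profile $\policyProfile{}{}$ are fixed, the sequence $\lbrace \stateDist{h} \rbrace$ is a \emph{deterministic} sequence of probability measures, and the state process $\lbrace \state{h}{} \rbrace$ is a time-inhomogeneous Markov chain. Indeed, by the dynamics in \eqref{eq:transitionDynamics}, conditioned on $\state{h}{} = s$ we have $\state{h+1}{} = f(s, \policy{h}{}(s, \stateDist{h}), \stateDist{h}) + \noise{h}{}$, and since $\noise{h}{}$ is i.i.d.\ and independent of the past, this defines a transition kernel $P_h(s, ds') \coloneqq \mathbb{P}[\state{h+1}{} \in ds' \mid \state{h}{} = s]$ depending on time only through the deterministic objects $\stateDist{h}$ and $\policy{h}{}$.

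For the base case, $\stateDist{0}(ds) = \mathbb{P}[\state{0}{} \in ds]$ holds by the assumption $\state{0}{} \sim \stateDist{0}$. For the inductive step I would assume $\stateDist{h}(ds) = \mathbb{P}[\state{h}{} \in ds]$ and then marginalize over $\state{h}{}$ via the law of total probability, invoking the Markov property to obtain
\[
    \mathbb{P}[\state{h+1}{} \in ds'] = \int_{s \in \stateSpace{}} P_h(s, ds')\, \mathbb{P}[\state{h}{} \in ds] = \int_{s \in \stateSpace{}} \stateDist{h}(ds)\, P_h(s, ds'),
\]
which is exactly \eqref{eq:flowdynamics} once $P_h(s, ds')$ is identified with the shorthand $\mathbb{P}[\state{h+1}{} \in ds']$ used inside the integral in the statement. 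Since a Markov kernel maps probability measures to probability measures, $\stateDist{h+1} \in \stateDistSpace$, which also discharges the claim that $\stateDist{h} \in \stateDistSpace$ for all $h$ and closes the induction.

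The main obstacle is conceptual rather than computational: one must justify that the apparent dependence of the dynamics on the mean-field distribution does not destroy the Markov structure of $\lbrace \state{h}{} \rbrace$. This hinges on treating $\stateDist{h}$ as the \emph{deterministic} law of the representative agent — fixed once $\stateDist{0}$ and $\policyProfile{}{}$ are given — rather than as a random empirical measure; this is precisely the simplification afforded by the $N \to \infty$ limit, and it is what makes the kernel $P_h$ a bona fide (state-only) Markov kernel despite the McKean--Vlasov form of \eqref{eq:transitionDynamics}. A secondary, purely technical point is the measure-theoretic bookkeeping, namely checking that $s \mapsto P_h(s, \cdot)$ is a well-defined Markov kernel (measurability of $f$ and $\policy{h}{}$ together with the regularity of the noise law) and that interchanging conditioning and integration is licensed by Fubini's theorem; I would relegate these to the regularity assumptions already implicit in the model. \looseness=-1
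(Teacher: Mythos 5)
Your proof is correct and follows essentially the same route as the paper's: both reduce to the law of total probability (tower property) applied to the one-step Markov transition determined by $f$, $\policy{h}{}$, and the i.i.d.\ noise, using the identification $\stateDist{h}(ds)=\mathbb{P}[\state{h}{}\in ds]$. The only cosmetic differences are that the paper establishes the equality of measures by integrating an arbitrary bounded measurable test function (a weak formulation of the same computation) rather than writing the kernel identity directly, and that your explicit induction and your remark that the deterministic nature of $\stateDist{h}$ preserves the Markov structure make explicit what the paper uses implicitly.
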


We provide the proof in \cref{appendix:measure_formulation}. 
To shorten the notation, we use $\Phi (\stateDist{t,h}, \policy{t,h}{}, f)$ to denote the mean-field transition function from \cref{eq:flowdynamics}, i.e., we have 
$\stateDist{t,h+1}{} = \Phi (\stateDist{t,h}, \policy{t,h}{}, f)$. 
The lemma shows that the next mean-field distribution explicitly depends on the unknown $f$ (since $\state{t,h+1}{} = f(\state{t,h}{}, \action{t,h}{}, \stateDist{t,h}) + \noise{t,h}{}$), policy played by the agents $\policy{t,h}{}$, and previous state distribution $\stateDist{t,h}$.\looseness=-1

\textbf{Performance metric.} Given a policy profile $\policyProfile{}{} = (\policy{1}{}, \dots, \policy{H-1}{})$, the performance of the representative agent is measured via the expected cumulative reward:\looseness=-1
\begin{subequations}
    \label{eq:expectedCumulativeReward}
    \begin{align}
        \mathbb{J}(\policyProfile{}{}) &= \mathbb{E} \bigg[
                \sum_{h=0}^{H-1} r(\state{h}{}, \action{h}{}, \stateDist{h})
            \bigg]\\
        \text{s.t.} \quad \action{h}{} &= \policy{h}{}(\state{h}{}, \stateDist{h}), \\
        \state{h+1}{} &= f(\state{h}{}, \action{h}{}, \stateDist{h}) + \noise{h}{}, \\
        \stateDist{h+1} &= \Phi (\stateDist{h}, \policy{h}{}, f),
    \end{align}
\end{subequations}
where the expectation is taken over the noise in the transitions and initial distribution $\state{0}{} \sim \stateDist{0}$. By considering the representative agent's perspective, the goal is to discover a \emph{socially} optimal policy $\policyProfile{}{*}$ that maximizes the expected total reward, i.e.,
\looseness=-1
\begin{equation}
    \policyProfile{}{*}\in\arg\max_{\policyProfile{}{}} \;\mathbb{J}(\policyProfile{}{})
    \label{eq:MFCOptimizationObjective}
\end{equation}
In our theoretical analysis, we make the following assumptions regarding the system's true dynamics, reward function and the set of admissible policies. We use $\jointState{}{}, \jointState{}{}' \in \jointStateSpace{} = \stateSpace{} \times \actionSpace{} \times \stateDistSpace$ to denote $(\state{}{}, \action{}{}, \stateDist{})$ and $(\state{}{}', \action{}{}', \stateDist{}')$, respectively, and we make use of the Wasserstein-1 metric:
$
    W_1(\stateDist{}, \stateDist{}') = \inf_{\nu}
            \int_{\stateSpace{} \times \stateSpace{}} \norm{x - y}_{2} d\nu(x, y),
$
where $\nu$ is any probability measure on $\stateSpace{} \times \stateSpace{}$ with marginals $\stateDist{}$ and $\stateDist{}'$ (see \cref{section:appendix_wasserstein} for useful properties of $W_1(\stateDist{}, \stateDist{}')$ that are used in our analysis).
    
\begin{assumption}
    \label{assumption:LipschitzDynamics}
    The transition function $f$ is $L_f$--Lipschitz-continuous, i.e., $\norm{f(\jointState{}{}) - f(\jointState{}{})}_2 \leq L_f d(\jointState{}{},\jointState{}{}')$
    where $d(\jointState{}{},\jointState{}{}'):= \norm{\state{}{} - \state{}{}'}_2 + \norm{\action{}{} - \action{}{}'}_2 + W_1 (\stateDist{}, \stateDist{}')$ and $\noise{t,h}{}$ are \emph{i.i.d.} additive $\sigma$-sub-Gaussian noise vectors for all $t \geq 1$ and $h \in \lbrace 0,\dots, H-1\rbrace$.
\end{assumption}

\begin{assumption}
    \label{assumption:LipschitzPolicyAndReward}
    The set of admissible policies $\admissiblePolicies{}$ consists of $L_{\policy{}{}}$--Lipschitz-continuous policies such that for any $\policy{}{} \in \admissiblePolicies{}\mathrm{:}
    \norm{ \policy{}{}(\state{}{}, \stateDist{}) - \pi(\state{}{}', \stateDist{}') }_2 \leq L_{\policy{}{}} (\norm{\state{}{} - \state{}{}'}_2 + W_1(\stateDist{}, \stateDist{}')),$ and the reward function is $L_{r}$--Lipschitz-continuous, i.e., $|r(\jointState{}{}) - r(\jointState{}{}')|\leq L_r d(\jointState{}{},\jointState{}{}').$
\end{assumption}

Regularity assumptions like these are standard in the single-agent model-based reinforcement learning literature \citep{jaksch2010near, curi2020efficient, chowdhury2019online, Sessa2022EfficientComputation} and mild since, in practice, the policy and reward function classes are typically chosen to satisfy the previous smoothness assumptions. In our experiments (see \cref{section:experiments}), we parametrize our policies with Neural Networks with Lipschitz continuous activations (e.g., $\operatorname{tanh}$, $\operatorname{ReLU}$ and $\operatorname{linear}$). We note that the Lipschitzness of such policies then follows from that of the activations when the network’s weights are bounded (in practice, this can be done by directly bounding them or via regularization).

\begin{remark}
    The episodic Mean-Field Control problem relates closely to the finite-horizon case of the evolutive Mean-Field Game problem described in \citet{LauriereLearningSurvey}. Both settings consider mean-field distributions evolving over time depending on the agents' policies, however, the MFC problem focuses on collaborative agents while agents in the evolutive MFG are competitive. This distinction leads to different solution concepts of the problems; social welfare in MFC and Nash Equilibrium in MFG.
\end{remark}
\looseness=-1

\vspace{-1ex}
\section{The \mfhucrl Algorithm}
    \label{section:algorithm}
    In this section, we tackle the Mean-Field Control Problem with unknown dynamics from \cref{eq:MFCOptimizationObjective} by relying on a \emph{model-based} learning scheme. Our \emph{Model-based Multi-agent Mean-field Upper Confidence} RL (\mfhucrl) algorithm uses a statistical model to estimate the system's dynamics and %
    to effectively trade-off between exploration and exploitation during policy learning. Before stating our algorithm, we consider the main properties of the considered dynamics models.

    \textbf{Statistical Model.} Our representative agent learns about the unknown dynamics from the data collected during the interactions with the environment, i.e., the policy rollouts. In particular, we take a model-based perspective (see \cref{alg:M^3_outline}), where the agent models the unknown dynamics and sequentially, after every episode $t$, updates and improves its model estimates based on the previous transition observations, i.e., $\mathcal{D}_{1} \cup \dots \cup \mathcal{D}_{t}$ where $\mathcal{D}_{t} = \lbrace((\state{t,h}{}, \action{t,h}{}, \stateDist{t,h}), \state{t,h+1}{})\rbrace_{h=0}^{H-1}$ is the set of state transition observations in episode $t$.
    To reason about plausible models at every episode $t$, the representative agent can take a frequentist or Bayesian perspective. In the first case, it estimates the mean $\modelMean{t}: \jointStateSpace{} \to \stateSpace{}$ and confidence $\modelStdMatrix{t}: \jointStateSpace{}{} \to \mathbb{R}^{p \times p}$ functions.
    In the second case, the agent estimates the posterior distribution over dynamical models $p(\tilde{f}|\mathcal{D}_{t})$, that leads to $\modelMean{t}(z) = \mathbb{E}_{\tilde{f} \sim p(\tilde{f}|\mathcal{D}_{t})}[\tilde{f}(z)]$, and $\modelStdMatrix{t}^{2}(z) = \text{Var}[\tilde{f}(z)]$.
    We denote $\modelStd{t}(\cdot) = \diag (\modelStdMatrix{t} (\cdot))$ and make the following assumptions regarding the considered statistical model irrespective of the taken perspective:
    \looseness=-1
    \begin{assumption}[Calibrated model]
        \label{assumption:WellCalibratedModel}
        The statistical model is calibrated w.r.t. $f$, i.e., there is a known non-decreasing sequence of confidence parameters $\lbrace \modelScale{t} \rbrace_{t\geq 0}$, each $\modelScale{t} \in \mathbb{R}_{> 0}$ and depending on $\delta$, such that with probability at least $1-\delta$, it holds jointly for all $t$ and $\jointState{}{} \in \jointStateSpace{}$ that $|f(\jointState{}{}) - \modelMean{t} (\jointState{}{}) | \leq \modelScale{t} \modelStd{t} (\jointState{}{})$ elementwise.
        \looseness=-1
    \end{assumption}
    
    \begin{assumption}
        \label{assumption:LipschitzModelStd}
        The function $\modelStd{t}(\cdot)$ is $L_{\modelStd{}}$-Lipschitz-continuous for all $t \geq 1$.%
    \end{assumption}
    The calibrated model assumption (or its equivalents) is standard in online model-based learning \citep{srinivas2010gaussian, chowdhury2017onKernelized, chowdhury2019online, curi2020efficient, Sessa2022EfficientComputation}. It states that the agent can build high probability confidence bounds around the unknown dynamics at the beginning of every episode. As more observations become available, we expect the \emph{epistemic} uncertainty of the model (that arises due to the lack of data and is encoded in $\modelStd{}(\cdot)$) to shrink, and consequently allow the representative agent to make better decisions as it becomes more confident about the true dynamics. %
    Our theoretical results obtained in \cref{section:theoretical_results} hold for general model classes as long as \cref{assumption:WellCalibratedModel} and \cref{assumption:LipschitzModelStd} are satisfied. Below and in \cref{section:theoretical_results}, we provide concrete conditions (about $f$) and models for which this assumption provably holds.
    \looseness=-1

    \textbf{Algorithm.} At the beginning of episode $t$, the representative agent constructs the confidence set of dynamics functions, denoted by $\mathcal{F}_{t-1}$, satisfying the elementwise confidence interval in \cref{assumption:WellCalibratedModel} with $\modelMean{t-1}(\cdot)$ and $\modelStd{t-1}(\cdot)$ estimated based on the observations up until the end of the previous episode $t-1$, i.e.,\looseness=-1
    \begin{equation*}
        \mathcal{F}_{t-1} = 
        \Big\lbrace \approxDynamics{}: |\approxDynamics{}(z) - \modelMean{t-1}(z)| \leq \beta_{t-1} \modelStd{t-1}(z) \text{ elementwise and }\forall z \in \jointStateSpace{}  \Big\rbrace.
    \end{equation*}
    Then, the agent selects the optimistic policy $\policyProfile{t}{}$ which achieves the highest possible cumulative reward over the set of admissible policies, $\Pi$, and plausible system dynamics $\mathcal{F}_{t-1}$.
    In particular, the representative agent solves the following problem:
    \looseness=-1
    \begin{subequations} \label{eq:UCBOptimization}
        \begin{align} \label{eq:UCBObjective}
            \policyProfile{t}{} &= \argmax_{\policyProfile{}{} \in \Pi} \max_{\approxDynamics{} \in \mathcal{F}_{t-1}} \mathbb{E}
            \left[
                \sum_{h=0}^{H-1} r(\OptState{t,h}, \OptAction{t,h}, \OptStateDist{t,h})
            \right] \\
            &\text{s.t.}\quad \OptAction{t,h} = \policy{t,h}{}(\OptState{t,h}, \OptStateDist{t,h}), \\
            \label{eq:OptimizationStateTransition}
            &\quad \; \OptState{t,h+1} = \approxDynamics{}(\OptState{t,h}, \OptAction{t,h}, \OptStateDist{t,h}) + \OptNoise{t,h}, \\
            \label{eq:OptimizationStateDistributionTransition}
            &\quad \;\OptStateDist{t,h+1} = \Phi(\OptStateDist{t,h}, \policy{t,h}{}, \approxDynamics{}),
        \end{align}
    \end{subequations}
    where the expectation in \cref{eq:UCBObjective} is taken w.r.t.~the initial state distribution $\stateDist{0}$ and noise in transitions.
    \looseness=-1
    
    The policy $\policyProfile{t}{}$ found by solving the previous problem is then used in the true multi-agent system (e.g., used by every driver in a ride-hailing service) for one episode. It induces the observed mean-field flow, and aims to improve the total collective reward. After the episode ends, the representative agent augments its observed data, i.e., $\mathcal{D}_{1:t} =  \mathcal{D}_{1:t-1}  \cup \mathcal{D}_{t}$, and improves its model estimates $\modelMean{t}(\cdot)$ and $\modelStd{t}(\cdot)$. The algorithm is summarized in \cref{alg:M^3_outline}. Since solving \cref{eq:UCBOptimization} is a challenging task even for known dynamics and the focus of this work is on the statistical complexity of learning in MFC (similarly to \citet{kakade2020information, chowdhury2019online, curi2020efficient, Sessa2022EfficientComputation}), we assume that we can solve \cref{eq:UCBOptimization} for a given $\mathcal{F}_{t-1}$ and $\Pi$. We propose a practical implementation below with details on solving \cref{eq:UCBOptimization}.

    \begin{algorithm}[t]
    \caption{Model-based RL for Mean-field Control}
    \label{alg:M^3_outline}
    \begin{algorithmic}
        \State{\bfseries Input:} Calibrated dynamical model, reward function $r^{}(\state{t,h}{}, \action{t,h}{}, \stateDist{t,h})$, horizon $H$, initial state $\state{1,0}{} \sim \stateDist{0}{}$\looseness=-1
        \For{$t=1,2,\dots$}
            \State \parbox[t]{\dimexpr\linewidth-\algorithmicindent}{%
                Use \mfhucrl to select policy profile $\policyProfile{t}{}=(\policy{t,0}{}, \dots, \policy{t,h-1}{})$ by using the current dynamics model and reward function, i.e., solve \cref{eq:UCBOptimization}%
            }
        \For {$h=0, \dots, H-1$}
            \State 
            $
                \action{t,h}{} = \pi_{t,h}(\state{t,h}{}, \stateDist{t,h}), 
            $
            \State 
            $
                \state{t,h+1}{} = f(\state{t,h}{}, \action{t,h}{}, \stateDist{t,h}) + \noise{t,h}{}
            $
            \State
                $\stateDist{t,h+1}{} = \Phi (\stateDist{t,h}, \policy{t,h}{}, f)$
          \EndFor
          \State Update agent's statistical model with new observations $\lbrace(\state{t,h}{}, \action{t,h}{}, \stateDist{t,h}), \state{t,h+1}{}\rbrace_{h=0}^{H-1}$
          \State Reset the system to $\stateDist{t+1, 0} \leftarrow \stateDist{0}$ and $\state{t+1,0}{} \sim \stateDist{t+1,0}$
          \EndFor
        \end{algorithmic}
    \end{algorithm}
    \vspace{-5pt}
    
    The algorithm implements the \emph{upper-confidence bound} principle,  since  the  system's true dynamics $f$ belong to the confidence set $\mathcal{F}_{t-1}$ with high-probability (by \cref{assumption:WellCalibratedModel}). Consequently,  the  reward  achieved  by $\policyProfile{t}{}$ under the best possible dynamics $\approxDynamics{}$  upper bounds the performance of $\policyProfile{t}{}$ under the true dynamics $f$.\looseness=-1

    \textbf{Practical Implementation.}
    In general, optimizing over the set $\mathcal{F}_{t-1}$ is not tractable. However, a practical problem reformulation has been designed \citep{moldovan2015optimism, curi2020efficient} for single-agent problems, which defines an auxiliary policy to select the dynamics function $\approxDynamics{} \in \mathcal{F}_{t-1}$. We generalize this approach to the MFC setting to implement
    \mfhucrl for our experiments (see \cref{appendix:HallucinatedControl} for details).\looseness=-1
    
    \mfhucrl can be combined with any statistical model that satisfies \cref{assumption:WellCalibratedModel}. For example, under mild assumptions on the unknown dynamics, Gaussian Processes (GP) models \citep{rasmussen2003gaussian} can be provably calibrated under some regularity assumptions on the dynamics \citep{srinivas2010gaussian, Abbasi-Yadkori2011RegretSystems}. Neural Network (NN) models  \citep{anthony2009neural} such as Probabilistic and Deterministic Ensembles \citep{lakshminarayanan2016simple, chua2018deep} are more scalable, and even though they are not calibrated by default, their empirical recalibration is possible \citep{kuleshov2018accurate}. Finally, using such differentiable dynamics models and parameterizing $\policyProfile{t}{}(\cdot)$ and the auxiliary policy selecting $\approxDynamics{}$ via separate neural networks, allows for optimizing \cref{eq:UCBOptimization} by using gradient-based approaches (see \cref{appendix:experimentImplementation}).
    \looseness=-1

\begin{remark}
    \mfhucrl uses an optimistic upper-confidence bound strategy similarly to single-agent model-based upper-confidence algorithms \citep{chowdhury2019online, kakade2020information, curi2020efficient, Sessa2022EfficientComputation}, however, it addresses several important differences:
    (i) dynamics and rewards explicitly depend on the states of all agents through the mean-field distribution;
    (ii) each agent's action depends on the mean-field distribution and the evolution of the mean-field trajectory is induced by the collection of actions taken by the agents;
    (iii) crucially, \mfhucrl employs the principle that each agent in the system executes the same policy selected by the representative agent;
    Moreover, we note that single-agent MBRL algorithms are inherently unsuitable for solving the Mean Field Control problem.
    \looseness=-1
\end{remark}
\vspace{-1ex}
\section{Theoretical Analysis}
    \label{section:theoretical_results}
    We analyze our approach using the standard notion of {\em cumulative regret}. It measures the difference in the cumulative expected reward between the socially optimal policy $\policyProfile{}{*}$ (from 
    \cref{eq:MFCOptimizationObjective}) and the individual policies selected by \mfhucrl in every episode: $R_{T} = \sum_{t=1}^{T} (\mathbb{J}(\policyProfile{}{*}) - \mathbb{J}(\policyProfile{t}{}))$. We also use $r_t$ to denote the regret incurred during episode $t$, i.e., 
    $r_t := \mathbb{J}(\policyProfile{}{*}) - \mathbb{J}(\policyProfile{t}{})$.
    We aim to show that \mfhucrl achieves {\em sublinear} cumulative regret, i.e., $\lim_{T \to \infty} \frac{1}{T} R_{T} = 0$. 
    This implies that as the number of episodes increases, the performance of the selected policies converges to that of the optimal policy: $\mathbb{J}(\policyProfile{t}{}) \to \mathbb{J}(\policyProfile{}{*})$. We defer all the proofs from this section to \cref{appendix:RegretBoundProof} and only highlight the main contributions of our work.\looseness=-1

    In general, the speed of convergence of \mfhucrl will depend on the difficulty of estimating $f$. For more complex functions, we expect our models to require more observations to achieve close approximation of the underlying dynamics. Consequently, we use the following model-based complexity measure to quantify this aspect of the learning problem:
    \looseness=-1
    \begin{equation}
    \vspace{-1mm}
        \label{eq:complexityMeasure}
        I_{T} = \max_{\tilde{D}_{1}, \dots, \tilde{D}_{T}} \; \sum_{t=1}^{T} \sum_{\jointState{}{} \in \tilde{D}_{1} \cup \cdots \cup \tilde{D}_{t} } \norm{\modelStd{t-1}(\jointState{}{})}_{2}^{2},
    \vspace{-1mm}
    \end{equation} where $\tilde{D}_{t}$ is a set of possible observations in an episode for each $t=1,\dots,T$, and $\modelStd{t-1}(\cdot)$ is the confidence estimate of the selected statistical model computed based on $\tilde{D}_{1} \cup \cdots \cup \tilde{D}_{t-1}$. We note that the analogous notion of model complexity has been recently used in single agent model-based RL \citep{chowdhury2019online,curi2020efficient} and the multi-agent setting \citep{Sessa2022EfficientComputation}. Intuitively, $I_{T}$ measures the chosen model's total confidence in estimating the dynamics over $T$ episodes in the worst-case, i.e., when the sets of observations $\tilde{D}_{1}, \dots, \tilde{D}_{T}$ are the least informative about the unknown dynamics $f$. For statistical models that fail to learn and generalize, $I_{T}$ may grow linearly with $T$. %
On the other hand, for models that learn from a "small" number of samples, we expect $\modelStd{t-1}(\cdot)$ to shrink fast.

    \textbf{General Model-based Regret Bound.} The following theorem bounds the cumulative regret after $T$ episodes in terms of the model complexity measure $I_{T}$ defined in \cref{eq:complexityMeasure}.
    \begin{theorem}
    \label{thm:main}
    Under \Crefrange{assumption:LipschitzDynamics}{assumption:LipschitzModelStd}, let $\overline{L}_{T-1} = 1 + 2 (1 + L_{\policy{}{}})\left[
         L_f + 2 \modelScale{T-1} L_{\modelStd{}}
        \right]$,
    and let $\stateDist{t,h}{} \in \stateDistSpace{}$ for all $t,h>0$. Then for all $T \geq 1$ and fixed constant $H>0$, with probability at least $1-\delta$, the regret of \mfhucrl is at most:
    \looseness=-1
    \begin{equation}
    \begin{split}
        R_T = \mathcal{O} \Big(
        \modelScale{T} L_{r} (1+L_{\policy{}{}}) \overline{L}_{T-1}^{H-1} \sqrt{H^{3} T I_{T}}
        \Big).
    \end{split} \nonumber
    \end{equation}
    \end{theorem}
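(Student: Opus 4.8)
The plan is to follow the optimism-based template for model-based RL, the one new ingredient being a mean-field coupling recursion that propagates the model's epistemic uncertainty through both the simulated and the true flows. I would first condition on the event $\mathcal{E}$ of probability at least $1-\delta$ on which \cref{assumption:WellCalibratedModel} holds. On $\mathcal{E}$ the true $f$ can be written in the hallucinated form \cref{eq:HallucinatedTransitionFunction}: taking $\modelEta{f}$ equal componentwise to the calibrated residual $(f-\modelMean{t-1})/(\modelScale{t-1}\modelStd{t-1})$, which lies in $[-1,1]^p$ by calibration, yields $\Tilde{f}_{t-1}=f$. Hence $(\policyProfile{}{*},\modelEta{f})$ is feasible in \cref{eq:UCBOptimization}, and since the oracle returns a maximizer $(\policyProfile{t}{},\modelEta{t})$, the optimal objective $\widetilde{\mathbb{J}}_{t}(\policyProfile{t}{},\modelEta{t})$ of \cref{eq:UCBObjective} satisfies $\widetilde{\mathbb{J}}_{t}(\policyProfile{t}{},\modelEta{t}) \ge \widetilde{\mathbb{J}}_{t}(\policyProfile{}{*},\modelEta{f}) = \mathbb{J}(\policyProfile{}{*})$, the last equality holding because under $\modelEta{f}$ the simulated rollout has the same law as the true one. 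Therefore $r_t \le \widetilde{\mathbb{J}}_{t}(\policyProfile{t}{},\modelEta{t}) - \mathbb{J}(\policyProfile{t}{})$, a gap in which both terms use the \emph{same} policy $\policyProfile{t}{}$ and differ only through $\Tilde{f}_{t-1}$ versus $f$.

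Next I would reduce this gap to a sum of distributional errors via coupling. Let $z_{t,h}=(\state{t,h}{},\action{t,h}{},\stateDist{t,h})$ and $\tilde z_{t,h}=(\OptState{t,h},\OptAction{t,h},\OptStateDist{t,h})$ be the true and hallucinated rollouts of $\policyProfile{t}{}$, started from the common $\stateDist{0}$ and driven by the same noise. Since the representative agent's law equals the mean-field distribution, I set $e_h := W_1(\OptStateDist{t,h},\stateDist{t,h})$ and, at each $h$, use an optimal coupling of $\state{t,h}{},\OptState{t,h}$ so that $\mathbb{E}\|\OptState{t,h}-\state{t,h}{}\|_2 = e_h$. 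Applying \cref{assumption:LipschitzPolicyAndReward} to $r$ and, termwise, to $\policy{t,h}{}$ gives $|r(\tilde z_{t,h})-r(z_{t,h})| \le L_r(1+L_{\policy{}{}})\big(\|\OptState{t,h}-\state{t,h}{}\|_2 + W_1(\OptStateDist{t,h},\stateDist{t,h})\big)$, and taking expectations over the coupling yields $r_t \le 2L_r(1+L_{\policy{}{}})\sum_{h=0}^{H-1} e_h$.

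The crux is a recursion for $e_h$, which I also expect to be the main obstacle. Reusing noise under the optimal coupling, $e_{h+1}\le \mathbb{E}\|\Tilde{f}_{t-1}(\tilde z_{t,h}) - f(z_{t,h})\|_2$. I split through $f(\tilde z_{t,h})$: the term $\|f(\tilde z_{t,h})-f(z_{t,h})\|_2 \le L_f\, d(\tilde z_{t,h},z_{t,h})$ by \cref{assumption:LipschitzDynamics}, while calibration and $\|\modelEta{}\|_\infty\le 1$ give the model error $\|\Tilde{f}_{t-1}(\tilde z_{t,h})-f(\tilde z_{t,h})\|_2 \le 2\modelScale{t-1}\|\modelStd{t-1}(\tilde z_{t,h})\|_2$. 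I then transport this uncertainty from the simulated point to the \emph{true} point by \cref{assumption:LipschitzModelStd}, via $\|\modelStd{t-1}(\tilde z_{t,h})\|_2 \le \|\modelStd{t-1}(z_{t,h})\|_2 + \sqrt{p}\,L_{\modelStd{}}\,d(\tilde z_{t,h},z_{t,h})$, and bound $\mathbb{E}\,d(\tilde z_{t,h},z_{t,h}) \le 2(1+L_{\policy{}{}})e_h$ using $W_1(\OptStateDist{t,h},\stateDist{t,h})=e_h$ and the Lipschitz policy. Collecting terms produces exactly
\[
    e_{h+1}\le (\overline{L}_{t-1}-1)\,e_h + 2\modelScale{t-1}\,\mathbb{E}\|\modelStd{t-1}(z_{t,h})\|_2 .
\]
The difficulty here is that the mean-field distribution is simultaneously an input to $f$, $r$, and $\policy{}{}$ \emph{and} the object being tracked, so the coupling must be arranged so that $W_1(\OptStateDist{t,h},\stateDist{t,h})$ is itself controlled by the coupled state error $e_h$; moreover the uncertainty arises naturally at the simulated points $\tilde z_{t,h}$ whereas $I_T$ is defined at the true points $z_{t,h}$, and bridging these without losing the correct dependence on $\modelScale{t-1}$, $L_{\modelStd{}}$ and $p$ is precisely what fixes the form of $\overline{L}_{t-1}$.

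Finally I would unroll and sum. With $e_0=0$, a geometric-series bound gives $\sum_{h=0}^{H-1} e_h \lesssim H\,\overline{L}_{t-1}^{H-1}\,\modelScale{t-1}\sum_{h=0}^{H-1}\mathbb{E}\|\modelStd{t-1}(z_{t,h})\|_2$, so by the previous step $r_t \lesssim L_r(1+L_{\policy{}{}})\,\overline{L}_{T-1}^{H-1}\,\modelScale{T-1}\,H\sum_{h}\mathbb{E}\|\modelStd{t-1}(z_{t,h})\|_2$ after using that $\modelScale{t}$ and $\overline{L}_t$ are non-decreasing. Summing over $t$ and applying Cauchy--Schwarz pathwise over the $TH$ evaluation points, $\sum_{t,h}\mathbb{E}\|\modelStd{t-1}(z_{t,h})\|_2 \le \sqrt{TH}\,\sqrt{\mathbb{E}\sum_{t,h}\|\modelStd{t-1}(z_{t,h})\|_2^2} \le \sqrt{TH}\,\sqrt{I_T}$, where the last step is the definition of $I_T$ in \cref{eq:complexityMeasure} (the realized trajectory sets are admissible in its maximum). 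Combining the explicit $H$ with the $\sqrt{TH}$ into $\sqrt{H^3}$ then yields
\[
    R_T = \mathcal{O}\big(\modelScale{T-1}L_r(1+L_{\policy{}{}})\,\overline{L}_{T-1}^{H-1}\sqrt{H^3 T I_T}\big)
\]
on $\mathcal{E}$, i.e. with probability at least $1-\delta$.
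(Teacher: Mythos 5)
Your proposal is correct and reproduces the paper's overall architecture exactly: optimism on the calibration event, reduction of $\Tilde{\mathbb{J}}_{t}(\policyProfile{t}{},\modelEta{t})-\mathbb{J}(\policyProfile{t}{})$ to $2L_r(1+L_{\policy{}{}})\sum_h W_1(\OptStateDist{t,h},\stateDist{t,h})$, a one-step recursion with contraction factor $\overline{L}_{t-1}-1$ and an additive term given by the model uncertainty evaluated on the \emph{true} flow, unrolling from $W_1(\OptStateDist{t,0},\stateDist{t,0})=0$, and finally Cauchy--Schwarz plus the definition of $I_T$. Where you genuinely diverge is in how the two Wasserstein lemmas are proved: the paper works entirely in the Kantorovich--Rubinstein dual, fixing a $1$-Lipschitz test function, pushing it through the flow equation of \cref{lemma:MFDistributionTransition}, and repeatedly verifying that suitably rescaled integrands (e.g.\ $s\mapsto r(s,\policy{t,h}{}(s,\stateDist{t,h}),\stateDist{t,h})$ and $s\mapsto\norm{\modelStd{t-1}(s,\policy{t,h}{}(s,\OptStateDist{t,h}),\OptStateDist{t,h})}_2$) are $1$-Lipschitz, whereas you use the primal formulation, synchronously coupling the representative agent's true and hallucinated trajectories with shared noise and an optimal coupling of the current marginals. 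The coupling route is arguably more transparent (the factor $\mathbb{E}\,d(\tilde z_{t,h},z_{t,h})\le 2(1+L_{\policy{}{}})W_1(\OptStateDist{t,h},\stateDist{t,h})$ appears in one line rather than through several test-function verifications), at the mild cost of invoking existence of optimal $W_1$ couplings and of having to argue that re-coupling at each step preserves the correct marginals via the mean-field flow; the dual route avoids any coupling construction and works directly with the measures. Two cosmetic discrepancies: your model-error and additive uncertainty terms drop a $\sqrt{p}$ that the paper carries (from bounding $\norm{\modelStdMatrix{t-1}\modelEta{}}_2$ via $\norm{\modelEta{}}_2\le\sqrt{p}$), which is harmless since $p$ is absorbed into the $\mathcal{O}(\cdot)$ and you retain the $\sqrt{p}$ inside $\overline{L}_{t-1}$ where it matters; and the paper's final summation goes through $R_T^2\le T\sum_t r_t^2$ plus a per-episode Jensen step rather than your single Cauchy--Schwarz over all $TH$ points, but both yield the same $\sqrt{H^3 T I_T}$.
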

    The obtained regret bound explicitly depends on constant factors that describe the environment such as the episode horizon $H$ and the Lipschitz constants. The dependence on the size of the problem, i.e., $p$ and $q$, is implicit in $\beta_T$ and $I_T$ and specific to the statistical model used in the algorithm. We provide more explicit dependencies for Gaussian Processes below and in \cref{appendix:GaussianProcess}. We also note that $\beta_T$ is a function of $\delta$ (see \cref{assumption:WellCalibratedModel}). When polices are selected according to \cref{eq:UCBOptimization} at every episode, our result implies that the obtained performance $\mathbb{J}(\policyProfile{t}{})$
    eventually converges to the optimal one $\mathbb{J}(\policyProfile{}{*})$ if the joint dependence on the model-dependent quantities $I_{T}$ and $\modelScale{T}$ scales as $\mathcal{O}(\sqrt{T})$.\looseness=-1

    In comparison to other model-based algorithms, both \mfhucrl and H-UCRL \citep{curi2020efficient} achieve $\mathcal{O}(\sqrt{H^3TI_T})$ regret while H-MARL \citep{Sessa2022EfficientComputation} achieves $\mathcal{O}(N^{H/2} \sqrt{H^3T I_T})$. The additional factor of $\mathcal{O}(N^{H/2})$ comes from the fact that H-MARL considers separate agents with individual action spaces while \mfhucrl optimises for a representative agent instead.
    In terms of tightness of our results, we  expect that the dependency on $\mathcal{O}(\sqrt{T I_T})$ in the regret bound is unavoidable. \citet{Scarlett2017LowerOptimization} showed that this is a lower bound for the kernelized bandit problem for specific kernels and using a Gaussian Process.
    This result relates to the special case of MFC when $H=1$ and using Gaussian Process statistical model to estimate the transition dynamics. The dependency on $H^3$, however, could be improved under more restrictive assumptions.
    
    In our theoretical analysis, %
    we address non-trivial challenges that arise from considering an infinite number of agents. In particular, we first translate the Mean Field Control to a non-standard MDP with the distributional state space $\stateDistSpace{}$ and action space $\admissiblePolicies{}$ of all admissible policies (for details, see \cref{appendix:measure_formulation}). This enables us to use the upper-confidence bound principle with the assumption of well-calibrated models to show the following bound on the episodic regret:
    \begin{equation*}
        r_{t} \leq 2 L_{r}(1 + L_{\policy{}{}}) \sum_{h=1}^{H-1} W_{1} (\stateDist{t,h}{}, \OptStateDist{t,h}{}),
    \end{equation*}
    where $\stateDist{t,h}{}$ and $\OptStateDist{t,h}{}$ are the mean-field distributions when the policy $\policyProfile{t}{}$ is deployed in (i) the true multi-agent system and (ii) the system with dynamics given by the transition function $\tilde{f}_{t}$ selected by the oracle corresponding to $\policyProfile{t}{}$. %
    The main theoretical contribution of our work is provided in \cref{lemma:MFFlowW1Bound} (see \cref{appendix:EpisodicRegretBound}) which shows the following bound on the Wasserstein-1 distance between the two mean-field distributions:
    \looseness=-1
    \begin{equation*}
        W_{1} (\OptStateDist{t,h}{}, \stateDist{t,h}{}) \leq 2 \modelScale{t-1} \overline{L}_{t-1}^{h-1} \sum_{i=1}^{h-1} \int_{\stateSpace{}}
        \norm{
            \modelStd{t-1}(s, \policy{t}{}(s, \stateDist{t,i}{}), \stateDist{t,i}{})
        }_{2} \stateDist{t,i}{}(ds).
    \end{equation*}
     This shows that the deviation between the used $\OptStateDist{t,h}{}$ and $\stateDist{t,h}{}$ is bounded by the uncertainty of the statistical model integrated over $\stateDist{t,h}{}$. As the model's uncertainty decreases, the set $\mathcal{F}_{t}$ shrinks towards the true dynamics $f$ and the dynamics $\approxDynamics{}$ selected in $\cref{eq:UCBOptimization}$ is restricted to be close to $f$.
     \looseness=-1

     Next, we show an example of how the previously obtained regret bounds can be instantiated for particular models. To do so, we consider the case of Gaussian Process models.\looseness=-1
    
    \textbf{Bounds for Gaussian Process (GP) Models.}
    GP models are frequently used to model unknown dynamics in the model-based literature (see, e.g., \citet{srinivas2010gaussian, chowdhury2019online,curi2020efficient, Sessa2022EfficientComputation}). They can successfully distinguish between model's and noise uncertainty, while their expressiveness follows from different kernel functions that can be used to model different types of correlation in data.\looseness=-1
    
    Under some regularity assumptions on $f$, i.e., when the true $f$ has a bounded norm in the Reproducing Kernel Hilbert Space (RKHS) induced by the GP kernel function, these models provably satisfy
    \cref{assumption:WellCalibratedModel}. Moreover, 
    we formally show in \cref{appendix:GPRegret} that for such a GP statistical model, we can bound $I_{T}$ by the \emph{GP maximum mutual information (MMI)}. Similarly, we can express $\beta_T$ via the same quantity, and conclude that sublinearity of our regret depends on the MMI rates. \cite{srinivas2010gaussian} obtain sublinear (in the number of episodes $T$) upper bounds for this quantity in case of the most commonly-used kernels, such as the linear, squared-exponential, or Mat\'ern kernels, while \cite{Krause2011ContextualOptimization} prove sublinear bounds for certain composite kernels.
    \looseness=-1
    
    Finally, we note that the aforementioned works consider kernels defined on Euclidean spaces while our input space for $f$, namely $\jointStateSpace{} = \stateSpace{} \times \actionSpace{} \times \stateDistSpace{}$, includes the space of probability densities $\stateDistSpace{}$. Providing bounds on the GP maximum mutual information capacity in case of kernels defined on probability spaces is an interesting direction for future work.
    \looseness=-1

\begin{figure*}
    \begin{subfigure}[b]{0.5\textwidth}
        \centering
        \includegraphics[width=\textwidth]{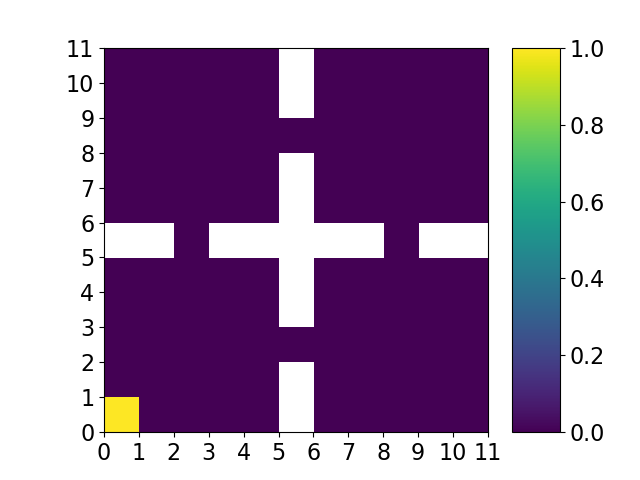}
        \caption{}
       \label{fig:initial_distribution}
    \end{subfigure}
    \begin{subfigure}[b]{0.5\textwidth}
        \centering
        \includegraphics[width=\textwidth]{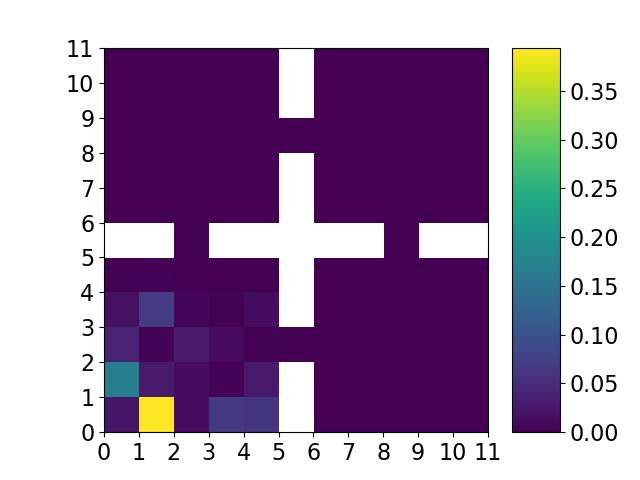}
        \caption{}
        \label{fig:random_dist}
    \end{subfigure}
    \caption{\cref{fig:initial_distribution} shows the initial mean-field distribution $\stateDist{h,0}$ for each episode $h$. \cref{fig:random_dist} shows the mean-field distribution at the end of an episode using a policy that selects actions uniformly at random.}
 \label{fig:initial_distribution_random_dist}
 \vspace{-5pt}
\end{figure*}

\vspace{-1ex}
\section{Experiments}\label{section:experiments}
In this section, we demonstrate the performance of the \mfhucrl algorithm on the \textit{exploration via entropy maximization} problem introduced by \citet{Geist2021ConcaveViewpoint} and used as a benchmark problem in a recent survey on Mean-Field Games \citep{LauriereLearningSurvey}. Different from previous works, we formulate the problem in the continuous state and action spaces as follows. The state-space of the model is described by a $2$-dimensional space in $[0,11]^2$ which is split into $4$ equal-sized rooms separated by unit-sized walls with one corridor connecting neighboring rooms (See \cref{fig:initial_distribution}). Agents are free to move around within the area and are stopped by the walls if they try to move through them. Each episode $t$ consists of $21$ steps starting from $h=0$ and in each time-step $h$ the representative agent chooses its actions from the action space $\actionSpace{} = [0,1]^2$. The dynamics of the system in \cref{eq:transitionDynamics} are of the following form
\begin{equation}
\label{eq:experiment_dynamics}
    f(\state{t,h}{}, \action{t,h}{}, \stateDist{t,h}) = \state{t,h}{} +  \action{t,h}{},
\end{equation}
and the additive noise is Gaussian with zero mean, variance $\modelStd{}^2$ and independent dimensions, i.e., $\noise{t,h}{} \sim N(0, \modelStd{}^2 I_2)$ for all $t$ and $h$ where $I_2$ is the $2\times 2$ unit matrix. The individual reward function is defined as $r(\state{}{}, \action{}{}, \stateDist{}) = - \log (\stateDist{}(\state{}{}))$ meaning that the representative agent aims to avoid crowded areas. On the whole population's level, the average one-step reward is given by $\mathbb{E}_{\state{}{} \sim \stateDist{}}[-\log(\stateDist{}(\state{}{})] = - \int_{s \in \stateSpace{}} \stateDist{}(\state{}{}) \log (\stateDist{}(\state{}{}))$, so maximizing the average reward is equivalent of maximizing the population's entropy. The population is initially condensed in the bottom-left corner (as shown in \cref{fig:initial_distribution}) in a unit square, therefore, a policy is optimal if it disperses the population quickly to reach a uniform distribution.

We parameterize our policy with a Neural Network and use a Deep Ensemble model \citep{lakshminarayanan2016simple} for estimating the system dynamics. To represent the mean-field distribution, we discretize the state-space to unit squares and assign each cell the probability of the representative agent is inside of it. We provide further details on the implementation in \cref{appendix:experimentImplementation}. Additionally, we ran experiments with Gaussian Process models for the swarm motion problem to showcase the flexibility of design choices in \mfhucrl. Details and results are described in \cref{appendix:swarm_motion}.

\citet{LauriereLearningSurvey} shows that exploration is not trivial in the discrete state and action space equivalent of this environment by considering the uniform random policy which fails to reach the uniform distribution within the episode. This result holds in the continuous environment as well as shown by \cref{fig:random_dist}.

\looseness=-1

\begin{figure*}
    \begin{subfigure}[b]{0.45\textwidth}
        \centering
        \includegraphics[width=\textwidth]{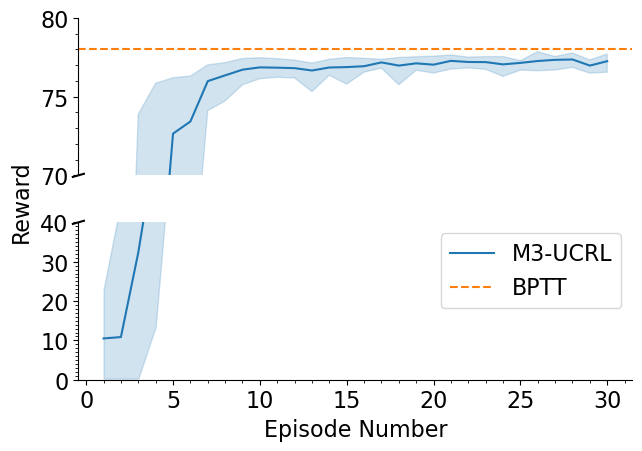}
        \caption{
        }
        \label{fig:results_linear_a}
    \end{subfigure}
    \vspace{1pt}
    \begin{subfigure}[b]{0.45\textwidth}
        \centering
        \includegraphics[width=\textwidth]{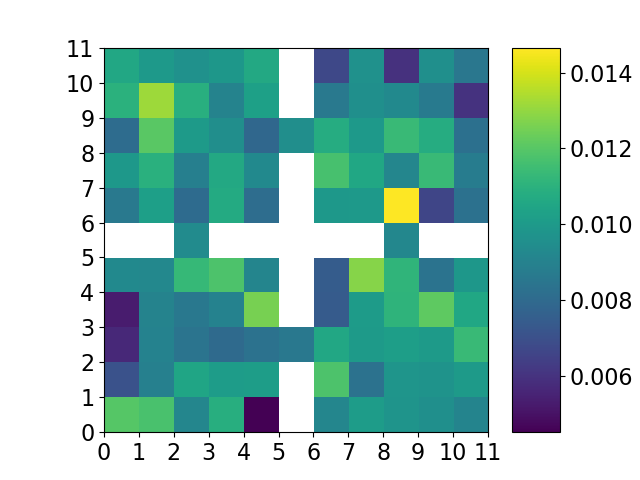}
        \vspace{-5pt}
        \caption{
        }
        \label{fig:results_linear_b}
    \end{subfigure}
    \vspace{-7pt}
    \caption{Convergence of the \mfhucrl algorithm under system dynamics given by \cref{eq:experiment_dynamics}. \cref{fig:results_linear_a} shows the rewards defined in  \cref{eq:MFCOptimizationObjective} achieved by \mfhucrl over 30 episodes. Confidence bounds show the minimum and maximum over 10 independent experiments. The BPTT line corresponds to the rewards achieved under known dynamics and considered to be optimal in the environment. \cref{fig:results_linear_b} shows the close to uniform mean-field distribution at the end of the episode, $h=20$, for a policy optimised by \mfhucrl.}
 \label{fig:results_linear}
 \vspace{-5pt}
\end{figure*}
 
 \textbf{Results.} Due to the lack of an analytical solution to the problem, we compare the rewards achieved by \mfhucrl in the policy roll-outs (i.e., the interactions with the environment) to the rewards achieved by the policy optimised via the back-propagation through time (BPTT) algorithm with \emph{known} dynamics. %
We denote this policy by $\policyProfile{}{BPTT}$ and find it to be adequate since when it is used, the population's entropy quickly reaches the maximum achievable corresponding to a uniform distribution. We provide further analysis of this policy in \cref{appendix:bptt_known}.

\cref{fig:results_linear_a} shows the rewards achieved by \mfhucrl over 30 episodes. The learning can be characterized by two phases; in the first $10$ episodes \mfhucrl learns how to navigate the whole state-space, after which it reduces exploration and focuses more on fine-grained improvements. Learning to navigate the environment is challenging due to the precise movements needed to enter the corridors, especially, from an angle. Small changes in the action can decide whether the representative agent moves through the corridor to another room or is stopped by a wall. As shown by the swift improvement in \mfhucrl's performance, it successfully accumulates information in the first phase of the learning process in order to have sufficiently precise policies that can navigate the agent population through the corridors. In the second phase, it gradually refines its policy further to distribute the agents among the rooms quicker and achieve a much closer to uniform distribution as shown on \cref{fig:results_linear_b}. The optimised policy after $30$ episodes achieves a reward of $77.86$ compared to $78.26$ that of $\policyProfile{}{BPTT}$, i.e., less than $1\%$ difference.
\footnote{Animations of episodes with the BPTT policy (\texttt{known\_dynamics\_animation.mp4}) and the \mfhucrl policy (\texttt{m3\_ucrl\_animation\_episode\_t.mp4}) are included in the supplementary material. Episode $1$ to $10$ illustrate the quick learning during the exploration phase while episode $26$ is the best-performing one also used for \cref{fig:results_linear_b}.}

While \mfhucrl converges within 30 episodes, a comparable tabular Q-learning algorithm for the Mean-Field Control problem fails to achieve similar performance in more than 10 million training episodes in the discrete problem defined in \citet{Geist2021ConcaveViewpoint} and \citet{LauriereLearningSurvey}. This is most likely due to the large state space, the limitation of deterministic actions, and challenging exploration in the environment. We provide further details of the comparison and why Q-learning requires long training for convergence in \cref{appendix:comparison}.

\looseness=-1
\vspace{-1ex}
\section{Conclusion}
\label{section:Conclusion}
In this work, we propose the first model-based algorithm for the Mean-Field Control problem. The \mfhucrl algorithm runs in episodes, builds optimistic performance estimates, and uses them to efficiently explore the space during policy learning. We proved general regret bounds and showed how they can be specialized to Gaussian Process models. Furthermore, our experiments showcase the practicality of the algorithm and how it can be easily combined with deep Neural Networks. We believe that our approach provides an important step towards making model-based MARL algorithms more principled, scalable, and sample-efficient.
We see the following topics important for the further progression of the model-based mean-field learning for both the mean-field control and mean-field game problems: 1) Planning with known dynamics to solve \cref{eq:UCBOptimization} similarly to \cite{Gu2019DynamicMFCs, Motte2019Mean-fieldControls}, 2) Eliminating the need of recalibration to have Neural Networks satisfying \cref{assumption:WellCalibratedModel}, 3) Maximum mutual information bounds for kernels defined on probability spaces.
\looseness=-1

\subsubsection*{Acknowledgments}
This project has received funding from the European Research Council (ERC) under the European Unions Horizon 2020 research and innovation program grant agreement No 815943 and ETH Zurich Postdoctoral Fellowship 19-2 FEL-47. This publication was made possible by an ETH AI Center doctoral fellowship to Barna Pasztor.

\bibliography{main}
\bibliographystyle{tmlr}

\newpage
\appendix

\section{Preliminaries on the Wasserstein Distance}
\label{section:appendix_wasserstein}

In this section, we provide a brief overview on the Wasserstein distance and its properties used in the proof of our general regret bound in \cref{thm:main}.

\begin{definition}[Wasserstein Distance]
\label{def:wasserstein_distance} %
    Let $\mu$ and $\nu$ be two probability distributions on $\mathbb{R}^d$, then for $k \in \{1,2,...\}$ the Wasserstein distance of order $k$ is defined by
    \[
        W_k(\mu, \nu) = \left[ \inf_{\gamma} 
            \int_{\Omega \times \Omega} \norm{x - y}^k d\gamma(x, y)
        \right]^{1/k}
    \]
    where the infimum is taken over all joint distributions $\gamma$ with marginals $\mu, \nu$. For $k=1$, a standard dual formulation derived by \cite{kantorovich1958onaspace} is given by:
    \begin{equation}
        W_1(\mu, \nu) = \sup_{f \in \mathcal{F}} \Big\{ 
            \int f(x) d(\mu - \nu)(x)
        \Big\}
    \label{eq:WassersteinDual}
    \end{equation}
    where $\mathcal{F} = \{f: \mathbb{R}^d \to \mathbb{R} \text{ such that } |f(x) - f(y) | \leq \norm{x - y} \quad \forall x, y \in \mathbb{R}^d\}$. Both formulations above generalises to laws defined on more general space, e.g., complete and separable metric spaces and infinite-dimensional function spaces such as $L^2[0,1]$.
\end{definition}

The Wasserstein distance is usually interpreted as the minimum transportation distance between the distributions $\mu$ and $\nu$.
For random variables $X \sim \mu$ and $Y \sim \nu$, we use the notations $W_k (X, Y)$ and $W_k (\mu, \nu)$ interchangeably.
Assuming that $W_k (X, Y)$ is finite, i.e., $\mathbb{E}[\norm{X}^k] + \mathbb{E}[\norm{Y}^k] < \infty$, the distance obeys the following properties \cite{Panaretos2019StatisticalDistances}:
\begin{itemize}
    \item For $m \leq n$, by Jensen's Inequality, we have:
    \begin{equation}
        W_m (X,Y) \leq W_n (X,Y)
        \label{eq:Wasserstein1}
    \end{equation}

    \item For $a \in \mathbb{R}$, it holds
    \begin{equation}
        W_k(a X, a Y) = |a| W_k(X, Y)
    \end{equation}
    
    \item For $x \in \mathbb{R}^p$, it holds
    \begin{equation}
        W_k(X + x, Y + x) = W_k(X, Y)
        \label{eq:Wasserstein2}
    \end{equation}
    
    \item For $x \in \mathbb{R}^p$, we have that
    \begin{equation}
        W_2^2 (X + x, Y) = \norm{x + \mathbb{E}[X] - \mathbb{E}[Y]}^2 + W_2^2 (X, Y)
        \label{eq:Wasserstein3}
    \end{equation}
\end{itemize}

\begin{corollary}
    \label{corollary:W1NormalBound}
    Let $X, Y$ be independent identically distributed random variables on $\mathbb{R}^p$ with $\mathbb{E}[\norm{X}^k] < \infty$ and $x, y \in \mathbb{R}^p$. Then,
    \begin{equation*}
        W_1 (X + x, Y + y) \leq \norm{x - y}_2
    \end{equation*}
\end{corollary}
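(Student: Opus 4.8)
The plan is to exhibit a single explicit coupling of $X+x$ and $Y+y$ and then bound its transport cost directly via the infimum in \cref{def:wasserstein_distance}. Since $X$ and $Y$ are identically distributed, I would let $Z$ be a random variable sharing their common law and consider the joint law $\gamma$ of the pair $(Z+x,\, Z+y)$, i.e.\ the ``diagonal'' coupling that reuses one source of randomness $Z$ and shifts it by $x$ in the first coordinate and by $y$ in the second. The first step is to verify that $\gamma$ is an admissible coupling: its first marginal is the law of $Z+x$, which equals the law of $X+x$ because $Z \stackrel{d}{=} X$, and its second marginal is the law of $Z+y$, which equals the law of $Y+y$ because $Z \stackrel{d}{=} Y$. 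Thus $\gamma$ has exactly the required marginals.

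With the coupling in hand, the bound is immediate. Plugging $\gamma$ into the infimum defining $W_1$ gives
\[
    W_1(X+x,\, Y+y) \;\leq\; \int \norm{u-v}_2 \, d\gamma(u,v) \;=\; \mathbb{E}\norm{(Z+x)-(Z+y)}_2 \;=\; \norm{x-y}_2,
\]
where the common variable $Z$ cancels in the difference, leaving the constant $\norm{x-y}_2$, and the inequality is simply that the infimum over all couplings is at most the cost of this particular one. An equivalent route, if one prefers to argue via the stated properties, is the triangle inequality $W_1(X+x,\,Y+y) \leq W_1(X+x,\,Y+x) + W_1(Y+x,\,Y+y)$: the first term vanishes because translation invariance (\cref{eq:Wasserstein2}) reduces it to $W_1(X,Y)$, which is $0$ since $X$ and $Y$ share the same law, and the second term is bounded by $\norm{x-y}_2$ using the same diagonal coupling applied to $Y$.

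The only subtlety, and the step I would double-check carefully, is the marginal verification, since this is precisely where the identical-distribution hypothesis is essential. It is worth noting that the independence assumption in the statement is in fact not needed: because the coupling reuses a single random variable $Z$ rather than drawing $X$ and $Y$ independently, only $X \stackrel{d}{=} Y$ is used. There is no genuine analytic obstacle here; the result is a direct consequence of the coupling characterization of $W_1$, and the argument is entirely elementary once the right coupling is chosen.
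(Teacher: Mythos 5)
Your proof is correct, but it takes a different route from the paper's. You construct an explicit coupling directly for $W_1$: the diagonal coupling $(Z+x, Z+y)$ with $Z$ sharing the common law, whose transport cost is exactly $\norm{x-y}_2$, so the infimum in \cref{def:wasserstein_distance} is bounded by that single evaluation. The paper instead works through $W_2$: it applies translation invariance (\cref{eq:Wasserstein2}) to reduce $W_2^2(X+x, Y+y)$ to $W_2^2(X+x-y, Y)$, invokes the decomposition in \cref{eq:Wasserstein3} to get $\norm{x-y+\mathbb{E}[X]-\mathbb{E}[Y]}_2^2 + W_2^2(X,Y)$, uses the identical-distribution hypothesis to kill both the mean difference and the $W_2^2(X,Y)$ term, and finally passes to $W_1$ via the monotonicity $W_1 \leq W_2$ from \cref{eq:Wasserstein1}. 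Your argument is more elementary and self-contained — it needs only the coupling characterization of $W_1$ and none of the listed $W_2$ identities — while the paper's proof is a short exercise in chaining the properties it has already catalogued. Your observation that independence is not actually used is correct and applies equally to the paper's proof, which only needs $W_2(X,Y)=0$ and $\mathbb{E}[X]=\mathbb{E}[Y]$, both of which follow from $X \stackrel{d}{=} Y$ alone. Your secondary route via the triangle inequality is also valid and is essentially a hybrid of the two approaches.
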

\begin{proof}
    \begin{align*}
        W_2^2 (X+x, Y + y) &= W_2^2 (X + x - y, Y)  && \text{By } \cref{eq:Wasserstein2}\\
        &= \norm{x - y + \mathbb{E}[X] - \mathbb{E}[Y]}^2_2 + W_2^2 (X, Y) && \text{By } \cref{eq:Wasserstein3}\\
        &= \norm{x - y}^2_2 && X \text{ and } Y \text{ are i.i.d.}
    \end{align*}
    Therefore,
    \begin{align*}
        W_1 (X + x, Y + y) &\leq W_2 (X +x, Y+ y) && \text{By } \cref{eq:Wasserstein1}\\
        &\leq \norm{x - y}_2
    \end{align*}
\end{proof}

\section{Regret Bound Proof}
\label{appendix:RegretBoundProof}

In the following subsections, we present the theoretical analysis that leads to the results of \cref{thm:main}.

\subsection{Measuring performance under arbitrary transition function}
    Before we start proving the main theorem, we introduce the following notation of $\Tilde{\mathbb{J}}(\policyProfile{}{}, \Tilde{f})$ to measure the performance of a policy $\policyProfile{}{}$ under the transition function $\Tilde{f}$:
    \begin{subequations}
        \begin{align}
            \label{eq:UCBObjectiveDef}
            \Tilde{\mathbb{J}}(\policyProfile{}{}, \Tilde{f}) &:=
            \mathbb{E}
            \left[
                \sum_{h=0}^{H-1} r(\OptState{h}, \OptAction{h}, \OptStateDist{h})
            \right]\\
            \text{s.t. }\quad \OptAction{h} &= \policy{h}{}(\OptState{h}, \OptStateDist{h}) \\
            \OptState{h+1} &= \Tilde{f}(\OptState{h}, \OptAction{h}, \OptStateDist{h}) + \OptNoise{h} \\
            \OptStateDist{h+1} &= \Phi(\OptStateDist{h}, \policy{h}{}, \Tilde{f}),
        \end{align}
    \end{subequations}
    where the expectation in \cref{eq:UCBObjectiveDef} is taken with respect to initial state distribution $\stateDist{0}$ and noise in transitions. We note that we can rewrite the optimization problems in \cref{eq:MFCOptimizationObjective} and \cref{eq:UCBOptimization} such that $\policyProfile{}{*} \in \argmax_{\policyProfile{}{} \in \Pi} \mathbb{J}(\policyProfile{}{}) = \argmax_{\policyProfile{}{} \in \Pi} \Tilde{\mathbb{J}}(\policyProfile{}{}, f)$ and $\policyProfile{t}{} = \argmax_{\policyProfile{}{} \in \Pi} \max_{\Tilde{f} \in \mathcal{F}_{t-1}} \Tilde{\mathbb{J}} (\policyProfile{}{}, \Tilde{f})$, respectively.

\subsection{Measure Formulation}
\label{appendix:measure_formulation} %

First, we provide the proof of \cref{lemma:MFDistributionTransition}. Then, we reformulate the optimization problems in \cref{eq:MFCOptimizationObjective} and \cref{eq:UCBObjective} as a Markov Decision Process on a probability measure space with transition dynamics defined in \cref{lemma:MFDistributionTransition}.
The following proof follows Lemma 2.2 in \cite{Gu2019DynamicMFCs}. We are stating it here for the sake of completeness.

\begin{proof}[Proof of \cref{lemma:MFDistributionTransition}]
    Fix $\policyProfile{}{}$ and let $\varphi$ be a bounded measurable function on $\stateSpace{}$. Recall that $\state{h}{}$ denotes random variable of the representative agent's state at time $t$ and, by the definition of $\stateDist{h}$, $\state{h}{} \sim \stateDist{h}$.
    
    First note that
    \begin{align*}
        \mathbb{E}_{\state{h+1}{}}[\varphi (\state{h+1}{})] &= \int_{s' \in \stateSpace{}} \stateDist{h+1}(ds') \varphi(s').
    \end{align*}
    
    On the other hand, by the law of iterated conditional expectation,
    \begin{align*}
        \mathbb{E}_{\state{h+1}{}}[\varphi (\state{h+1}{})]
        &= \mathbb{E}_{s_0, \dots, \state{h}{}} \Big[
        \mathbb{E}_{\state{h+1}{}}[\varphi (\state{h+1}{}) | s_0, \dots, \state{h}{}]
        \Big] \\
        &= \mathbb{E}_{s_0, \dots, \state{h}{}}\Bigg[
            \int_{s' \in \stateSpace{}} \varphi(s') \mathbb{P}[ \state{h+1}{} \in ds' | s_0, \dots, \state{h}{}]
        \Bigg]\\
        &= \mathbb{E}_{s_0, \dots, \state{h}{}}\Bigg[
            \int_{s' \in \stateSpace{}} \varphi(s') \mathbb{P}[f(\state{h}{}, \policy{h}{}(\state{h}{}, \stateDist{h}), \stateDist{h}) + \noise{h}{} \in ds']
        \Bigg]\\
        &=  \int_{s' \in \stateSpace{}} \varphi(s')
        \mathbb{E}_{s_0, \dots, \state{h}{}} \Big[
            \mathbb{P}[f(\state{h}{}, \policy{h}{}(\state{h}{}, \stateDist{h}), \stateDist{h}) + \noise{h}{} \in ds']
        \Big] \\
        &= \int_{s' \in \stateSpace{}} \varphi(s')
        \int_{s \in \stateSpace{}} \stateDist{h} (ds) \mathbb{P}[f(s, \policy{h}{}(s, \stateDist{h}), \stateDist{h}) + \noise{h}{} \in ds'].
    \end{align*}
    
    In the third equality, we used the dynamics of the system \cref{eq:transitionDynamics} which states that $\state{h+1}{}$ is determined by $\state{h}{}, \action{h}{}, \stateDist{h}$ as $\state{h+1}{} \sim f(\state{h}{}, \action{h}{}, \stateDist{h}) + \noise{h}{}$. The last equality follows from $\stateDist{h}(ds) = \mathbb{P}[\state{h}{} \in ds]$. Therefore,
    \begin{equation*}
        \int_{s' \in \stateSpace{}} \stateDist{h+1}(ds') \varphi(s') = \int_{s' \in \stateSpace{}} \varphi(s')
        \int_{s \in \stateSpace{}} \stateDist{h} (ds) \mathbb{P}[f(s, \policy{h}{}(s, \stateDist{h}), \stateDist{h}) + \noise{h}{} \in ds'],
    \end{equation*}
    which implies the desired results
    \begin{equation*}
        \stateDist{h+1}(ds') = \int_{s \in \stateSpace{}} \stateDist{h} (ds) \mathbb{P}[f(s, \policy{h}{}(s, \stateDist{h}), \stateDist{h}) + \noise{h}{} \in ds'].
    \end{equation*}
\end{proof}

\setcounter{remark}{1}
\begin{remark}
The relationship described in \cref{lemma:MFDistributionTransition} is the discrete-time equivalent of the Fokker-Planck (Kolmogorov forward) equation for the McKean-Vlasov Process. While we provided a proof for the dynamics in \cref{eq:transitionDynamics}, it holds for any other functions $\Tilde{f}_{t} \in \mathcal{F}_{t}$ for $t=1, 2, \dots$.
\end{remark}

Now, we consider the objective under the true dynamics, \cref{eq:MFCOptimizationObjective}. The following lemma is based on Lemma 2.1. in \cite{Gu2019DynamicMFCs}.

\setcounter{lemma}{1}
\begin{lemma}
    \label{lemma:DistMDP}
    For any policy $\policyProfile{}{} = (\policy{0}{}, \dots, \policy{H-1}{})$,
    \[
        \mathbb{J}(\policyProfile{}{}) = \sum_{h=0}^{H-1}\hat{r}(\stateDist{h}, \policy{h}{}),
    \]
    where $\hat{r}$ is the integrated reward function defined as
    \begin{equation}
        \hat{r}(\stateDist{}, \policy{}{}) = \int_{\stateSpace{}} \stateDist{}(d\state{}{}) r(\state{}{}, \pi(\state{}{}, \stateDist{}), \stateDist{}),
        \label{eq:integratedReward}
    \end{equation}
    and $\stateDist{h}$ is the mean-field distribution induced by the policy $\policyProfile{}{}$ following the dynamics defined in \cref{eq:flowdynamics}.
\end{lemma}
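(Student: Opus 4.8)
The plan is to reduce $\mathbb{J}(\policyProfile{}{})$ to the claimed sum by peeling off the expectation one time step at a time and, at each step, identifying the marginal law of the representative agent's state with the mean-field distribution $\stateDist{h}$. Starting from $\mathbb{J}(\policyProfile{}{}) = \mathbb{E}_{\state{0}{} \sim \stateDist{0}}[J(\policyProfile{}{}, \state{0}{})]$ and the definition of $J$ in \cref{eq:expectedCumulativeReward}, I would first use linearity of expectation to push the outer expectation over the initial state and the inner expectation over the transition noise through the finite sum, obtaining
\[
    \mathbb{J}(\policyProfile{}{}) = \sum_{h=0}^{H-1} \mathbb{E}\big[ r(\state{h}{}, \action{h}{}, \stateDist{h}) \big],
\]
where each expectation is taken over the trajectory law induced by rolling out $\policyProfile{}{}$ from $\state{0}{} \sim \stateDist{0}$ under the dynamics \cref{eq:transitionDynamics}.

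The key structural point is that the mean-field flow $\{\stateDist{h}\}_{h=0}^{H-1}$ is \emph{deterministic}: given $\stateDist{0}$, $f$, and $\policyProfile{}{}$, each $\stateDist{h}$ is fixed by the recursion $\stateDist{h+1} = \Phi(\stateDist{h}, \policy{h}{}, f)$, so it is not a random quantity inside the expectation. Hence, for each $h$, the only randomness in the $h$-th summand is carried by the agent's own state $\state{h}{}$, and the action $\action{h}{} = \policy{h}{}(\state{h}{}, \stateDist{h})$ is a deterministic function of $\state{h}{}$ and the fixed $\stateDist{h}$. I would then invoke the central identity of \cref{lemma:MFDistributionTransition}, namely $\stateDist{h}(ds) = \mathbb{P}[\state{h}{} \in ds]$ under $\policyProfile{}{}$: the mean-field distribution at step $h$ \emph{is} the marginal law of the representative agent's state. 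Using this to collapse the trajectory expectation into a single integral against $\stateDist{h}$ yields
\[
    \mathbb{E}\big[ r(\state{h}{}, \policy{h}{}(\state{h}{}, \stateDist{h}), \stateDist{h}) \big]
    = \int_{\stateSpace{}} \stateDist{h}(d\state{}{})\, r(\state{}{}, \policy{h}{}(\state{}{}, \stateDist{h}), \stateDist{h})
    = \hat{r}(\stateDist{h}, \policy{h}{}),
\]
which is exactly the integrated reward from \cref{eq:integratedReward}.

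Summing this identity over $h = 0, \dots, H-1$ gives $\mathbb{J}(\policyProfile{}{}) = \sum_{h=0}^{H-1} \hat{r}(\stateDist{h}, \policy{h}{})$, as claimed. The only point requiring care --- and the main conceptual obstacle --- is cleanly separating the deterministic argument $\stateDist{h}$ of $r$ from the random state $\state{h}{}$, and then justifying the replacement of the full trajectory expectation by an integral against the marginal $\stateDist{h}$; this replacement is precisely what \cref{lemma:MFDistributionTransition} licenses, so beyond this bookkeeping the argument is routine.
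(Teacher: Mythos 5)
Your proposal is correct and follows essentially the same route as the paper's proof: push the expectation through the sum by linearity, observe that the mean-field flow $\{\stateDist{h}\}$ is deterministic given $\stateDist{0}$, $f$, and $\policyProfile{}{}$, and then rewrite each term as an integral against the marginal law $\stateDist{h}(ds) = \mathbb{P}[\state{h}{} \in ds]$ supplied by \cref{lemma:MFDistributionTransition}. No gaps.
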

\begin{proof}
    \begin{align*}
        \mathbb{J}(\policyProfile{}{}) &= \mathbb{E}_{\state{0}{} \sim \stateDist{0}, \noise{0}{}, \dots, \noise{H-1}{}}
        \left[
            \sum_{h=0}^{H-1} r(\state{h}{}, \policy{h}{} (\state{h}{}, \stateDist{h}), \stateDist{h})
        \right] \\
        &= \sum_{h=0}^{H-1}
        \left[
            \mathbb{E}_{\state{0}{} \sim \stateDist{0}, \noise{0}{}, \dots, \noise{h-1}{}} r(\state{h}{}, \policy{h}{} (\state{h}{}, \stateDist{h}), \stateDist{h})
        \right] \\
        &= \sum_{h=0}^{H-1} \int_{\stateSpace{}} \mathbb{P}[\state{h}{} \in ds] r(\state{}{}, \policy{h}{} (\state{}{}, \stateDist{h}), \stateDist{h}) \\
        &= \sum_{h=0}^{H-1} \int_{\stateSpace{}} \stateDist{h} (ds) r(\state{}{}, \policy{h}{} (\state{}{}, \stateDist{h}), \stateDist{h}) \\
        &= \sum_{h=0}^{H-1} \hat{r}(\stateDist{h}, \policy{h}{} ).
    \end{align*}
    The expectation in the first line is taken over the random variables $\state{0}{}, \noise{0}{}, \dots, \noise{H-1}{}$ that induce the state trajectory $\state{0}{}, \state{1}{}, \dots, \state{H}{}$. In particular, for a fixed policy the continuous random variable $\state{h}{}$ is a function of $\state{0}{}, \noise{0}{}, \dots, \noise{h-1}{}$. In the second line, we use the linearity of expectation and $\state{h}{}$'s independence of $\noise{h}{}, \dots, \noise{H-1}{}$. The third line follows from the definition of expectation.
    
    Note that the fixed and known initial distribution $\stateDist{0}$, the true transition function $f$, and the fixed policy $\policyProfile{}{}$ define the mean-field distributions $\stateDist{h}$, for all $0 \leq h \leq H$ via \cref{eq:flowdynamics}. The fourth equality follows from \cref{lemma:MFDistributionTransition}, i.e., $\stateDist{h}(d\state{}{}) = \mathbb{P}[\state{h}{} \in d\state{}{}]$.
\end{proof}

\begin{corollary}
    \label{corollary:DistMDPOptimistic}
    The arguments in \cref{lemma:DistMDP} apply in the case when the transition function of the system is unknown but approximated via $\Tilde{f}$. We can rewrite $\Tilde{\mathbb{J}}(\policyProfile{}{}, \Tilde{f})$ defined in \cref{eq:UCBObjectiveDef} as
    \begin{equation*}
        \Tilde{\mathbb{J}}(\policyProfile{}{}, \Tilde{f}) = \sum_{h=0}^{H-1} \hat{r}(\OptStateDist{t,h}, \policy{h}{} ).
    \end{equation*}
    where $\OptStateDist{t,h}$ is the mean-field distribution induced by the policy $\policyProfile{}{}$ under the estimated transition function $\Tilde{f}$.
\end{corollary}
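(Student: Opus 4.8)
The plan is to reproduce the proof of \cref{lemma:DistMDP} essentially verbatim, since that argument never exploits the particular form of the true dynamics $f$---it relies only on two structural facts that the hallucinated dynamics $\Tilde{f}_{t-1}$ share with $f$. First I would observe that, for a fixed policy profile $\policyProfile{}{}$ and auxiliary function $\modelEta{}$, the transition $\OptState{t,h+1} = \Tilde{f}_{t-1}(\OptState{t,h}, \OptAction{t,h}, \OptStateDist{t,h}) + \OptNoise{t,h}$ is of exactly the same McKean--Vlasov form as \cref{eq:transitionDynamics}: a deterministic map of the current state, action and mean-field distribution, perturbed by i.i.d.\ additive noise, with $\Tilde{f}_{t-1}$ from \cref{eq:HallucinatedTransitionFunction} simply taking the place of $f$. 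Consequently, as already noted in the remark after \cref{lemma:MFDistributionTransition}, the mean-field flow property holds unchanged for the trajectory generated by \cref{eq:OptimizationStateDistributionTransition}, so that $\OptStateDist{t,h}(ds) = \mathbb{P}[\OptState{t,h} \in ds]$ for every $h$.

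With this identification, I would repeat the chain of equalities in \cref{lemma:DistMDP}. Starting from the definition of $\Tilde{\mathbb{J}}_{t}(\policyProfile{}{}, \modelEta{})$ in \cref{eq:UCBObjectiveDef}, I would push the expectation inside the finite sum by linearity and use that each hallucinated state $\OptState{t,h}$ is a measurable function of the initial state $\OptState{t,0} \sim \stateDist{0}$ and only of the past noise $\OptNoise{t,0}, \dots, \OptNoise{t,h-1}$, hence independent of the future noise. This collapses the $h$-th summand to the expectation of $r(\OptState{t,h}, \policy{h}{}(\OptState{t,h}, \OptStateDist{t,h}), \OptStateDist{t,h})$ over the law of $\OptState{t,h}$. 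Rewriting that expectation as an integral against $\OptStateDist{t,h}(ds) = \mathbb{P}[\OptState{t,h} \in ds]$ reproduces exactly $\hat{r}(\OptStateDist{t,h}, \policy{h}{})$ as defined in \cref{eq:integratedReward}, and summing over $h = 0, \dots, H-1$ gives the claim.

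There is no genuinely hard step; the only thing requiring a check is the structural claim that the hallucinated dynamics inherit the flow identity, which the remark following \cref{lemma:MFDistributionTransition} already supplies. The one subtlety worth flagging is well-definedness: each $\OptStateDist{t,h}$ must be a bona fide element of $\stateDistSpace{}$. This holds because, for fixed measurable $\modelEta{}$, $\Tilde{f}_{t-1}$ is a deterministic measurable map (assembled from the model mean $\modelMean{t-1}$, the confidence width $\modelScale{t-1}\modelStdMatrix{t-1}$, and $\modelEta{}$), so the pushforward in \cref{eq:flowdynamics} yields a valid probability measure at every step and the recursion is well-posed.
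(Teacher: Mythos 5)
Your proposal is correct and matches the paper's treatment: the paper likewise justifies the corollary by observing that the proof of \cref{lemma:DistMDP} never uses the specific form of $f$, and that the flow identity of \cref{lemma:MFDistributionTransition} extends to the hallucinated dynamics (as stated in the remark following its proof). Your additional check that each $\OptStateDist{t,h}$ is a well-defined element of $\stateDistSpace{}$ is a reasonable, if minor, refinement the paper leaves implicit.
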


\begin{corollary}
    \label{corollary:measureFormulation}
    By using the objective formulation from \cref{lemma:DistMDP}, we can restate the optimization problem \cref{eq:MFCOptimizationObjective} as
    \begin{align}
        \label{eq:UCBOptimizationMeasure}
        \max_{\policyProfile{}{}} \mathbb{J}(\policyProfile{}{}) &= \sum_{h=0}^{H-1}\hat{r}(\stateDist{h}, \policy{h}{} ) \\
        \label{eq:UCBTransitionMeasure}
        \text{s.t. } \mu_{h+1} &= \Phi(\stateDist{h}, \policy{h}{}, f),
    \end{align}
    where $\Phi$ is defined in \cref{eq:flowdynamics}. The same holds for the optimization under any other transition function $\Tilde{f}$ in \cref{eq:UCBOptimization}.
\end{corollary}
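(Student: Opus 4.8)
The plan is to show that the two problems are literally the same optimization by substituting the alternative representation of the objective already established in \cref{lemma:DistMDP}. I would begin from the original objective \cref{eq:MFCOptimizationObjective}, namely $\max_{\policyProfile{}{}} \mathbb{J}(\policyProfile{}{})$ with $\mathbb{J}(\policyProfile{}{}) = \mathbb{E}_{\state{}{} \sim \stateDist{0}}[J(\policyProfile{}{}, \state{}{})]$, which is an expectation over the initial state and the transition noise of the cumulative reward along a single representative-agent trajectory.

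The key step is to invoke \cref{lemma:DistMDP} directly. It already proves the identity $\mathbb{J}(\policyProfile{}{}) = \sum_{h=0}^{H-1} \hat{r}(\stateDist{h}, \policy{h}{})$, where $\hat{r}$ is the integrated reward from \cref{eq:integratedReward} and the $\stateDist{h}$ are the mean-field distributions induced by $\policyProfile{}{}$. Since this identity holds for every admissible policy profile, the objective values of the two problems coincide pointwise over the feasible set, and taking the maximum over $\policyProfile{}{}$ on both sides immediately yields \cref{eq:UCBOptimizationMeasure}. The only bookkeeping is to make the implicit constraint explicit: by \cref{lemma:MFDistributionTransition}, the distributions $\stateDist{h}$ appearing in the objective are precisely those generated recursively from $\stateDist{0}$ via $\stateDist{h+1} = \Phi(\stateDist{h}, \policy{h}{}, f)$, which is \cref{eq:UCBTransitionMeasure}.

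For the hallucinated counterpart I would repeat the argument verbatim, replacing the true dynamics $f$ by $\Tilde{f}_{t-1}$ from \cref{eq:HallucinatedTransitionFunction}. Here \cref{corollary:DistMDPOptimistic} already supplies the analogous decomposition $\Tilde{\mathbb{J}}_{t}(\policyProfile{}{}, \modelEta{}) = \sum_{h=0}^{H-1} \hat{r}(\OptStateDist{t,h}, \policy{h}{})$, where the flow is now driven by $\Tilde{f}_{t-1}$; substituting this into \cref{eq:UCBOptimization} and exposing the transition constraint \cref{eq:OptimizationStateDistributionTransition} completes the reformulation at every episode $t$.

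I do not anticipate a genuine technical obstacle: the statement is essentially a packaging corollary that combines \cref{lemma:DistMDP} (objective decomposition) with \cref{lemma:MFDistributionTransition} (flow dynamics) into a single deterministic control problem over the measure space $\stateDistSpace{}$. The only point warranting care is conceptual rather than computational — one must observe that because the mean-field flow $\Phi$ is a \emph{deterministic} map, all the randomness from $\stateDist{0}$ and the noise $\noise{h}{}$ has already been integrated out, both in $\hat{r}$ and in the evolution of the $\stateDist{h}$. Thus the original stochastic trajectory-level objective collapses to a deterministic optimization over a sequence of distributions, which is exactly what legitimizes treating the problem as an MDP on $\stateDistSpace{}$ in the subsequent regret analysis.
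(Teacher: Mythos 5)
Your proposal is correct and follows exactly the route the paper intends: the corollary is a packaging statement that the paper leaves without a separate proof, obtained by substituting the identity from \cref{lemma:DistMDP} (and \cref{corollary:DistMDPOptimistic} for the hallucinated case) into the objective and making the flow constraint from \cref{lemma:MFDistributionTransition} explicit. Your closing observation that the resulting problem is a deterministic MDP on $\stateDistSpace{}$ matches the paper's own \cref{remark:optMeasureFormulation}.
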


\begin{remark}
    \label{remark:optMeasureFormulation}
    \cref{corollary:measureFormulation} turns the optimization problem \cref{eq:MFCOptimizationObjective} into a Markov Decision Process where the state space is the space of probability measures $\stateDistSpace{}$ and the action space is the space of functions $\admissiblePolicies{} = \{\policy{}{}: \stateSpace{} \times \stateDistSpace{} \to \actionSpace{}\}$. This reformulation comes with a certain trade-off: $\stateDistSpace{}$ and $\admissiblePolicies{}$ are more complex than $\stateSpace{} \subset \mathbb{R}^q$ and $\actionSpace{} \subset \mathbb{R}^p$ but, in contrast to \cref{eq:MFCOptimizationObjective}, the transition function in \cref{eq:UCBTransitionMeasure} is deterministic.
\end{remark}

\subsection{Proof of \cref{thm:main}}
\label{appendix:EpisodicRegretBound}

As described in \cref{section:algorithm}, at the beginning of episode $t$, the representative agent optimizes over both the set of admissible policies, $\Pi$, and plausible transition functions, $\mathcal{F}_{t-1}$, satisfying \crefrange{assumption:LipschitzDynamics}{assumption:WellCalibratedModel}. In the following, we denote the transition function corresponding to the optimal policy by $\approxDynamics{t-1} \in \mathcal{F}_{t-1}$, i.e., 
\[
    \policyProfile{t}{}, \approxDynamics{t-1} = \argmax_{\policyProfile{}{}, \Tilde{f}} \Tilde{\mathbb{J}} (\policyProfile{}{}, \Tilde{f}).
\]

\begin{lemma}
\label{lemma:optRegretBound}
    Conditioning on the event in  \cref{assumption:WellCalibratedModel} holding true in episode $t$, the following holds for episodic regret:
    \begin{equation}
        \label{eq:optRegretBound}
        r_t = \mathbb{J}(\policyProfile{}{*}) - \mathbb{J}(\policyProfile{t}{}) \leq \Tilde{\mathbb{J}} (\policyProfile{t}{}, \approxDynamics{t-1}) - \mathbb{J} (\policyProfile{t}{}).
    \end{equation}
\end{lemma}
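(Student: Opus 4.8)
The plan is a standard optimism argument: I will show that, on the calibration event, the optimistic value achieved by the pair $(\policyProfile{t}{}, \modelEta{t})$ returned by \cref{eq:UCBOptimization} upper bounds the true optimal value, i.e. $\Tilde{\mathbb{J}}_{t}(\policyProfile{t}{}, \modelEta{t}) \geq \mathbb{J}(\policyProfile{}{*})$. Granting this, subtracting $\mathbb{J}(\policyProfile{t}{})$ from both sides gives exactly $r_t = \mathbb{J}(\policyProfile{}{*}) - \mathbb{J}(\policyProfile{t}{}) \leq \Tilde{\mathbb{J}}_{t}(\policyProfile{t}{}, \modelEta{t}) - \mathbb{J}(\policyProfile{t}{})$, which is the claim. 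The two ingredients I will combine are: (i) $(\policyProfile{t}{}, \modelEta{t}) = \argmax_{\policyProfile{}{}, \modelEta{}} \Tilde{\mathbb{J}}_{t}(\policyProfile{}{}, \modelEta{})$, established just after \cref{eq:UCBObjectiveDef}; and (ii) the true dynamics $f$ belong to the family $\{\Tilde{f}_{t-1}\}$ traced out by admissible $\modelEta{}$, which is what \cref{assumption:WellCalibratedModel} buys us.

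First I would exhibit a witness auxiliary function. On the calibration event, $|f(\jointState{}{}) - \modelMean{t-1}(\jointState{}{})| \leq \modelScale{t-1}\modelStd{t-1}(\jointState{}{})$ elementwise for every $\jointState{}{} \in \jointStateSpace{}$. Reading this against the definition $\Tilde{f}_{t-1}(\jointState{}{}) = \modelMean{t-1}(\jointState{}{}) + \modelScale{t-1}\modelStdMatrix{t-1}(\jointState{}{})\modelEta{}(\jointState{}{})$, I would define, pointwise and coordinatewise, an auxiliary function $\modelEta{}^{*}$ equal to the normalized residual of $f(\jointState{}{}) - \modelMean{t-1}(\jointState{}{})$ against $\modelScale{t-1}\modelStd{t-1}(\jointState{}{})$, so that $\Tilde{f}_{t-1}(\jointState{}{}) = f(\jointState{}{})$ identically. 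The elementwise calibration bound is precisely what guarantees $\modelEta{}^{*}(\jointState{}{}) \in [-1,1]^p$, so $\modelEta{}^{*}$ is an admissible auxiliary function.

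Next I would transfer this pointwise identity to the objective. Since $\Tilde{f}_{t-1} \equiv f$ under $\modelEta{}^{*}$, the deterministic mean-field map coincides, $\Phi(\cdot, \cdot, \Tilde{f}_{t-1}) = \Phi(\cdot, \cdot, f)$, so for the fixed policy $\policyProfile{}{*}$ the hallucinated flow $\{\OptStateDist{t,h}\}_h$ equals the true flow $\{\stateDist{h}\}_h$ started from the common $\stateDist{0}$. By the measure formulations of the two objectives (\cref{lemma:DistMDP} and \cref{corollary:DistMDPOptimistic}), both reduce to $\sum_{h} \hat{r}(\cdot, \policy{h}{*})$ along identical distributional trajectories, whence $\Tilde{\mathbb{J}}_{t}(\policyProfile{}{*}, \modelEta{}^{*}) = \mathbb{J}(\policyProfile{}{*})$. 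Combining with maximality from (i), $\Tilde{\mathbb{J}}_{t}(\policyProfile{t}{}, \modelEta{t}) \geq \Tilde{\mathbb{J}}_{t}(\policyProfile{}{*}, \modelEta{}^{*}) = \mathbb{J}(\policyProfile{}{*})$, which finishes the proof.

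The only step needing genuine care is the construction of $\modelEta{}^{*}$: the calibration bound is phrased elementwise through $\modelStd{t-1} = \diag(\modelStdMatrix{t-1})$, whereas $\Tilde{f}_{t-1}$ is driven by the full matrix $\modelStdMatrix{t-1}$, so one must invoke the (diagonal) structure of $\modelStdMatrix{t-1}$ to solve $\modelScale{t-1}\modelStdMatrix{t-1}(\jointState{}{})\modelEta{}^{*}(\jointState{}{}) = f(\jointState{}{}) - \modelMean{t-1}(\jointState{}{})$ coordinatewise and confirm the solution lands in $[-1,1]^p$. I expect this to be the main (though minor) obstacle; everything else is bookkeeping through the measure reformulation and the $\argmax$ property.
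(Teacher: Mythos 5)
Your proposal is correct and follows essentially the same optimism argument as the paper: on the calibration event there is an admissible $\modelEta{}$ recovering $f$ exactly, hence $\Tilde{\mathbb{J}}_{t}(\policyProfile{}{*},\modelEta{}) \geq \mathbb{J}(\policyProfile{}{*})$ for the best $\modelEta{}$, and the joint $\argmax$ property of $(\policyProfile{t}{},\modelEta{t})$ finishes the bound. Your explicit construction of the witness $\modelEta{}^{*}$ (and the remark about solving coordinatewise through $\modelStdMatrix{t-1}$) is slightly more careful than the paper, which simply asserts its existence, but the route is the same.
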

\begin{proof}
    Recall that $\policyProfile{}{*}$ denotes the socially optimal policy from \cref{eq:MFCOptimizationObjective} and let $\approxDynamics{t-1}^{*} = \arg \max_{\Tilde{f} \in \mathcal{F}_{t-1}} \Tilde{\mathbb{J}}(\policyProfile{}{*}, \Tilde{f})$. The well-calibrated property from \cref{assumption:WellCalibratedModel} implies that the true transition function $f$ is in the set of functions $\mathcal{F}_{t-1}$ meaning that the true dynamics of the system $f$ has been considered in \cref{eq:UCBOptimization} at the beginning of episode $t$. Then, since we provide additional flexibility to the optimization, we have
    \begin{equation*}
        \Tilde{\mathbb{J}} (\policyProfile{}{*}, \approxDynamics{t-1}^{*}) \geq \mathbb{J} (\policyProfile{}{*})
    \end{equation*}
    Using this inequality we can bound the episodic regret as follows:
    \begin{align*}
        r_t = \mathbb{J}(\policyProfile{}{*}) - \mathbb{J}(\policyProfile{t}{})
        &\leq \Tilde{\mathbb{J}} (\policyProfile{}{*}, \approxDynamics{t-1}^{*}) - \mathbb{J}(\policyProfile{t}{}) \\
        &\leq \Tilde{\mathbb{J}} (\policyProfile{t}{}, \approxDynamics{t-1}) - \mathbb{J}(\policyProfile{t}{}),
    \end{align*}
where the last inequality comes from the fact that \mfhucrl selects the policy $\policyProfile{t}{}$ according to $(\policyProfile{t}{}, \approxDynamics{t-1}) = \arg \max_{\policyProfile{}{}, \Tilde{f}} \Tilde{\mathbb{J}} (\policyProfile{}{}, \Tilde{f})$.
\end{proof}

\begin{lemma}
    \label{lemma:regretOptBound}
     Under \crefrange{assumption:LipschitzPolicyAndReward}{assumption:WellCalibratedModel} and conditioning on that the event defined in \cref{assumption:WellCalibratedModel} holds true, it follows for all $t \geq 1$:
    \begin{equation}
        \label{eq:regretOptBound}
        r_t \leq \Tilde{\mathbb{J}} (\policyProfile{t}{}, \approxDynamics{t-1}) - \mathbb{J}(\policyProfile{t}{}) \leq 2 L_r (1+L_{\policy{}{}}) \sum_{h=1}^{H-1} W_1(\OptStateDist{t,h}, \stateDist{t,h}),
    \end{equation}
    where $\stateDist{t,h}$ is the mean-field distribution observed in the true unknown environment with dynamics $f$ as in \cref{eq:expectedCumulativeReward} when agents follow $\policyProfile{t}{}$ and $\OptStateDist{t,h}$ is the mean-field distribution under the approximated dynamics $\approxDynamics{t-1}$ as in \cref{eq:UCBObjectiveDef}.
\end{lemma}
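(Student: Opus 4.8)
The plan is to establish only the second inequality, since the first one, $r_t \le \Tilde{\mathbb{J}}_{t}(\policyProfile{t}{}, \modelEta{t}) - \mathbb{J}(\policyProfile{t}{})$, is exactly the content of \cref{lemma:optRegretBound}. To attack the second inequality I would first pass to the measure formulation: by \cref{lemma:DistMDP} and \cref{corollary:DistMDPOptimistic}, both objectives are sums of integrated rewards along their respective mean-field flows, so
\[
\Tilde{\mathbb{J}}_{t}(\policyProfile{t}{}, \modelEta{t}) - \mathbb{J}(\policyProfile{t}{}) = \sum_{h=0}^{H-1} \left[ \hat{r}(\OptStateDist{t,h}, \policy{t,h}{}) - \hat{r}(\stateDist{t,h}, \policy{t,h}{}) \right],
\]
which reduces the claim to bounding each summand by $2 L_r (1+L_{\policy{}{}}) W_1(\OptStateDist{t,h}, \stateDist{t,h})$.

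For a fixed $h$, the difficulty is that the two integrated rewards differ in \emph{two} ways simultaneously: they integrate against different measures ($\OptStateDist{t,h}$ versus $\stateDist{t,h}$), and their integrands differ because both the policy and the reward take the mean-field distribution as an argument. I would therefore split each summand by adding and subtracting the integral of the $\OptStateDist{t,h}$-integrand against $\stateDist{t,h}$:
\[
\hat{r}(\OptStateDist{t,h}, \policy{t,h}{}) - \hat{r}(\stateDist{t,h}, \policy{t,h}{}) = \int_{\stateSpace{}} (\OptStateDist{t,h} - \stateDist{t,h})(ds)\, g_h(s) + \int_{\stateSpace{}} \stateDist{t,h}(ds)\, \big( g_h(s) - \bar{g}_h(s) \big),
\]
where $g_h(s) = r(s, \policy{t,h}{}(s,\OptStateDist{t,h}), \OptStateDist{t,h})$ and $\bar{g}_h(s) = r(s, \policy{t,h}{}(s,\stateDist{t,h}), \stateDist{t,h})$. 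The first term is a pure change of measure against a fixed integrand; the second is a pointwise change of integrand integrated against a single probability measure.

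To bound the first term I would invoke the Kantorovich dual formulation \cref{eq:WassersteinDual}, for which I need the Lipschitz constant of $s \mapsto g_h(s)$. Varying only $s$, \cref{assumption:LipschitzPolicyAndReward} gives $|g_h(s) - g_h(s')| \le L_r(\norm{s-s'}_2 + \norm{\policy{t,h}{}(s,\OptStateDist{t,h}) - \policy{t,h}{}(s',\OptStateDist{t,h})}_2) \le L_r(1+L_{\policy{}{}})\norm{s-s'}_2$, where the $W_1$ term drops out since the distribution argument is held fixed; hence the change-of-measure term is at most $L_r(1+L_{\policy{}{}}) W_1(\OptStateDist{t,h}, \stateDist{t,h})$. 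For the second term I would bound the integrand pointwise: applying Lipschitzness of $r$ in its action and distribution arguments and then Lipschitzness of $\policy{t,h}{}$ in its distribution argument yields $|g_h(s) - \bar{g}_h(s)| \le L_r(1+L_{\policy{}{}}) W_1(\OptStateDist{t,h}, \stateDist{t,h})$ uniformly in $s$, so integrating against the probability measure $\stateDist{t,h}$ preserves the bound. Summing the two pieces gives the per-step bound $2 L_r(1+L_{\policy{}{}}) W_1(\OptStateDist{t,h}, \stateDist{t,h})$.

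Finally, I would note that the $h=0$ term drops out of the sum: both flows start from the common initial distribution $\stateDist{0}$, so $\OptStateDist{t,0} = \stateDist{t,0} = \stateDist{0}$ and $W_1(\OptStateDist{t,0}, \stateDist{t,0}) = 0$, which converts $\sum_{h=0}^{H-1}$ into $\sum_{h=1}^{H-1}$. The main obstacle I anticipate is the bookkeeping of the Lipschitz constants in the two pieces: one must carefully separate the dependence of $r$ and $\policy{t,h}{}$ on the state coordinate from their dependence on the distribution argument, so that exactly one factor of $(1+L_{\policy{}{}})$ (and not more) appears in each piece. This is precisely what makes the two bounds coincide and produces the clean constant $2L_r(1+L_{\policy{}{}})$.
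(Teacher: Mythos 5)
Your proposal is correct and follows essentially the same route as the paper: reduce to the integrated-reward formulation via \cref{lemma:DistMDP} and \cref{corollary:DistMDPOptimistic}, split each summand by adding and subtracting a mixed term, bound the change-of-measure piece by Kantorovich duality with a $L_r(1+L_{\policy{}{}})$-Lipschitz integrand, bound the change-of-integrand piece pointwise via \cref{assumption:LipschitzPolicyAndReward}, and drop the $h=0$ term since both flows share $\stateDist{0}$. The only (immaterial) difference is that you insert the intermediate term $\int g_h\, d\stateDist{t,h}$ with the hallucinated-flow integrand against the true measure, whereas the paper inserts $\int \bar{g}_h\, d\OptStateDist{t,h}$; the two decompositions are mirror images and yield identical constants.
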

\begin{proof}
    By \cref{lemma:optRegretBound}, we have that
    \[
        r_t \leq \Tilde{\mathbb{J}} (\policyProfile{t}{}, \approxDynamics{t-1}) - \mathbb{J}(\policyProfile{t}{}).
    \]
    By using \cref{lemma:DistMDP}, \cref{corollary:DistMDPOptimistic} and the triangle inequality we get
    \begin{align*}
        |\Tilde{\mathbb{J}} (\policyProfile{t}{}, \approxDynamics{t-1}) - \mathbb{J}(\policyProfile{t}{})| &= \Bigg|
        \sum_{h=0}^{H-1} \hat{r}(\OptStateDist{t,h}, \policy{t,h}{})
        -
        \sum_{h=0}^{H-1} \hat{r}(\stateDist{t,h}, \policy{t,h}{})
        \Bigg| \\
        &\leq 
        \sum_{h=0}^{H-1} |\hat{r}(\OptStateDist{t,h}, \policy{t,h}{})
        -
        \hat{r}(\stateDist{t,h}, \policy{t,h}{}) |.
    \end{align*}
    Consider a fixed $h \in \{0,...,H-1\}$, then
    \begin{align}
        &|\hat{r}(\OptStateDist{t,h}, \policy{t,h}{})
        -
        \hat{r}(\stateDist{t,h}, \policy{t,h}{}) | \notag \\
        &= \Bigg|
        \int_{\stateSpace{}} r(s, \policy{t,h}{}(s, \OptStateDist{t,h}), \OptStateDist{t,h}) \OptStateDist{t,h} (d s)
        -
        \int_{\stateSpace{}} r(s, \policy{t,h}{} (s, \stateDist{t,h}), \stateDist{t,h}) \stateDist{t,h} (ds)
        \Bigg| && \text{From } \cref{eq:integratedReward} \notag \\
        &= \Bigg|
        \int_{\stateSpace{}} r(s, \policy{t,h}{}(s, \OptStateDist{t,h}), \OptStateDist{t,h}) \OptStateDist{t,h} (ds)
        -
        \int_{\stateSpace{}} r(s, \policy{t,h}{} (s, \stateDist{t,h}), \stateDist{t,h}) \OptStateDist{t,h} (ds) \notag \\
        & \qquad +
        \int_{\stateSpace{}} r(s, \policy{t,h}{} (s, \stateDist{t,h}), \stateDist{t,h}) \OptStateDist{t,h} (ds)
        -
        \int_{\stateSpace{}} r(s, \policy{t,h}{} (s, \stateDist{t,h}), \stateDist{t,h}) \stateDist{t,h} (ds)
        \Bigg| \notag \\
        &\leq
        \int_{\stateSpace{}} |r(s, \policy{t,h}{}(s, \OptStateDist{t,h}), \OptStateDist{t,h}) - r(s, \policy{t,h}{} (s, \stateDist{t,h}), \stateDist{t,h}))| \OptStateDist{t,h} (ds) \label{eq:a}  \\
        & \qquad +
        \Bigg| \int_{\stateSpace{}} r(s, \policy{t,h}{} (s, \stateDist{t,h}), \stateDist{t,h}) (  \OptStateDist{t,h} (ds) - \stateDist{t,h} (ds)) \Bigg|, \label{eq:b}
    \end{align}
    where the last inequality follows from the triangle inequality and the inequality for the absolute value of definite integrals.
    
    We start by considering the integrand of the term in \cref{eq:a}
    \begin{align}
        &|r(s, \policy{t,h}{}(s, \OptStateDist{t,h}), \OptStateDist{t,h}) - r(s, \policy{t,h}{} (s, \stateDist{t,h}), \stateDist{t,h})| \notag \\
        &\leq L_r (\norm{s -s }_2 + \norm{\policy{t,h}{}(s, \OptStateDist{t,h}) - \policy{t,h}{} (s, \stateDist{t,h})}_2 + W_1 (\OptStateDist{t,h}, \stateDist{t,h})) \notag \\
        &\leq L_r ( L_{\policy{}{}} (\norm{s - s }_2 + W_1(\OptStateDist{t,h}, \stateDist{t,h}) ) + W_1 (\OptStateDist{t,h}, \stateDist{t,h})) \notag \\
        &\leq L_r (1 + L_{\policy{}{}}) W_1 (\OptStateDist{t,h}, \stateDist{t,h}), \label{eq:a2}
    \end{align}
    where the inequalities follow from \cref{assumption:LipschitzPolicyAndReward}.
    
    Next, we consider the function $g(s) = \frac{1}{L_r (1+L_{\policy{}{}})} r(s, \policy{t,h}{} (s, \stateDist{t,h}), \stateDist{t,h})$
    \begin{align*}
        | g(x) - g(y) | & = \frac{1}{L_r (1+L_{\policy{}{}})} |r(x, \policy{t,h}{} (x, \stateDist{t,h}), \stateDist{t,h}) - r(y, \policy{t,h}{} (y, \stateDist{t,h}), \stateDist{t,h})| \\
        &\leq \frac{L_r}{L_r (1+L_{\policy{}{}})}(\norm{x - y}_2 + \norm{\policy{t,h}{} (x, \stateDist{t,h}) - \policy{t,h}{} (y, \stateDist{t,h})}_2 + \underbrace{W_1 (\stateDist{t,h}, \stateDist{t,h})}_{=0}) \\
        &\leq \frac{L_r}{L_r (1+L_{\policy{}{}})}(\norm{x - y}_2 + L_{\policy{}{}} (\norm{x - y}_2 + \underbrace{W_1 (\stateDist{t,h}, \stateDist{t,h})}_{=0} )) \\
        &= \norm{x - y}_2,
    \end{align*}
    where \cref{assumption:LipschitzPolicyAndReward} has been applied in the second and third lines. Therefore, $g$ is a Lipschitz-1 function.
    
    We bound the second term in \cref{eq:b} by using the previously defined function $g$
    \begin{align}
        &\int_{\stateSpace{}} r(s, \policy{t,h}{} (s, \stateDist{t,h}), \stateDist{t,h}) (  \OptStateDist{t,h} (ds) - \stateDist{t,h} (ds)) \notag \\
        &=
        L_r (1+L_{\policy{}{}}) \int_{\stateSpace{}} \underbrace{\frac{1}{L_r (1+L_{\policy{}{}})} r(s, \policy{t,h}{} (s, \stateDist{t,h}), \stateDist{t,h})}_{= g(s)} (  \OptStateDist{t,h} (ds) - \stateDist{t,h} (ds)) \notag \\
        &\leq L_r (1+L_{\policy{}{}}) W_1 (\OptStateDist{t,h}, \stateDist{t,h}). \label{eq:b2}
    \end{align}
    The last inequality comes from the Kantorovich-Rubinstein dual formulation of the Wasserstein-1 distance in \cref{eq:WassersteinDual}. Using the inequalities in \cref{eq:a2} and \cref{eq:b2} for \cref{eq:a} and \cref{eq:b}, respectively, we get that
    \begin{equation*}
        |\hat{r}(\OptStateDist{t,h}, \policy{t,h}{})
        -
        \hat{r}(\stateDist{t,h}, \policy{t,h}{}) | 
        \leq 2 L_r (1+L_{\policy{}{}}) W_1(\OptStateDist{t,h}, \stateDist{t,h}).
    \end{equation*}
    
    Summing it up from $h=1$ to $h=H-1$, we obtain the desired result:
    \begin{equation*}
    |\Tilde{\mathbb{J}} (\policyProfile{t}{}, \approxDynamics{t-1}) - \mathbb{J}(\policyProfile{t}{})| \leq 2 L_r (1+L_{\policy{}{}}) \sum_{h=1}^{H-1} W_1(\OptStateDist{t,h}, \stateDist{t,h}).
    \end{equation*}
    We note that $h=0$ is removed from the sum since the initial distribution is fixed, $\OptStateDist{t,0} = \stateDist{t,0} = \stateDist{0}$, therefore $W_1(\OptStateDist{t,0}, \stateDist{t,0}) = 0$.
\end{proof}

\begin{lemma}
    \label{lemma:MFFlowW1Bound}
    Under \crefrange{assumption:LipschitzDynamics}{assumption:LipschitzModelStd}, and assuming that event in \cref{assumption:WellCalibratedModel} holds true, and for $h \in \{1,...,H\}$ and $t \geq 1$ and fixed policy $\policyProfile{t}{}$, it holds:
    \begin{equation}
        \label{eq:MFFlowW1Bound}
        W_1(\OptStateDist{t,h}, \stateDist{t,h}) \leq
        2 \modelScale{t-1} \overline{L}_{t-1}^{h-1} \sum_{i = 0}^{h-1} 
        \int_{\stateSpace{}} \norm{\modelStd{t-1} (s, \policy{t,hh}{} (s, \stateDist{t,i}), \stateDist{t,i})}_2 \stateDist{t,i}(ds),
    \end{equation}
    where $\overline{L}_{t-1} = 1 + 2 (1 + L_{\policy{}{}})\left[
         L_f + 2 \modelScale{t-1} L_{\modelStd{}}
        \right]$.
\end{lemma}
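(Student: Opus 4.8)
The plan is to establish a one-step recursion for $W_1(\OptStateDist{t,h+1}, \stateDist{t,h+1})$ in terms of $W_1(\OptStateDist{t,h}, \stateDist{t,h})$ and then unroll it. By \cref{lemma:MFDistributionTransition}, $\stateDist{t,h+1}$ is the law of $f(s, \policy{t,h}{}(s, \stateDist{t,h}), \stateDist{t,h})$ plus i.i.d.\ additive noise with $s \sim \stateDist{t,h}$, and $\OptStateDist{t,h+1}$ is the law of $\Tilde{f}_{t-1}(s', \policy{t,h}{}(s', \OptStateDist{t,h}), \OptStateDist{t,h})$ plus independent noise with $s' \sim \OptStateDist{t,h}$. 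To bound the $W_1$ distance between these two pushforwards, I would fix the optimal coupling $(s,s')$ attaining $W_1(\stateDist{t,h}, \OptStateDist{t,h})$, so that $\mathbb{E}\|s-s'\|_2 = W_1(\stateDist{t,h}, \OptStateDist{t,h})$, and, since the noise is i.i.d., apply \cref{corollary:W1NormalBound} conditionally on $(s,s')$ to cancel the noise. Writing $z = (s, \policy{t,h}{}(s, \stateDist{t,h}), \stateDist{t,h})$ and $z' = (s', \policy{t,h}{}(s', \OptStateDist{t,h}), \OptStateDist{t,h})$, this yields
\begin{equation*}
    W_1(\OptStateDist{t,h+1}, \stateDist{t,h+1}) \le \mathbb{E}_{(s,s')}\big\| f(z) - \Tilde{f}_{t-1}(z') \big\|_2.
\end{equation*}

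The key step is the decomposition $f(z) - \Tilde{f}_{t-1}(z') = [f(z) - f(z')] + [f(z') - \Tilde{f}_{t-1}(z')]$, in which $f$ and $\Tilde{f}_{t-1}$ are compared only at the \emph{same} argument $z'$. The first bracket is controlled by the $L_f$-Lipschitzness of $f$ (\cref{assumption:LipschitzDynamics}), giving $\|f(z)-f(z')\|_2 \le L_f\, d(z,z')$. For the second bracket, both $f(z')$ and $\Tilde{f}_{t-1}(z') = \modelMean{t-1}(z') + \modelScale{t-1}\modelStdMatrix{t-1}(z')\modelEta{}(z')$ lie within the calibrated confidence band $\modelScale{t-1}\modelStd{t-1}(z')$ of the model mean (\cref{assumption:WellCalibratedModel}, using $\|\modelEta{}\|_\infty \le 1$), so their difference is bounded in $\ell_2$, up to a dimensional factor, by $2\sqrt{p}\,\modelScale{t-1}\|\modelStd{t-1}(z')\|_2$. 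I would then convert this uncertainty from the hallucinated point $z'$ to the true point $z$ via the $L_{\modelStd{}}$-Lipschitzness of $\modelStd{t-1}$ (\cref{assumption:LipschitzModelStd}), namely $\|\modelStd{t-1}(z')\|_2 \le \|\modelStd{t-1}(z)\|_2 + L_{\modelStd{}}\, d(z,z')$, so that
\begin{equation*}
    \big\| f(z) - \Tilde{f}_{t-1}(z') \big\|_2 \le \big(L_f + 2\sqrt{p}\,\modelScale{t-1}L_{\modelStd{}}\big)\, d(z,z') + 2\sqrt{p}\,\modelScale{t-1}\|\modelStd{t-1}(z)\|_2.
\end{equation*}

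To finish the recursion I would take expectations and bound $\mathbb{E}[d(z,z')]$. Expanding $d(z,z') = \|s-s'\|_2 + \|\policy{t,h}{}(s,\stateDist{t,h}) - \policy{t,h}{}(s',\OptStateDist{t,h})\|_2 + W_1(\stateDist{t,h}, \OptStateDist{t,h})$ and invoking the $L_{\policy{}{}}$-Lipschitzness of the policy (\cref{assumption:LipschitzPolicyAndReward}) gives $d(z,z') \le (1+L_{\policy{}{}})(\|s-s'\|_2 + W_1(\stateDist{t,h}, \OptStateDist{t,h}))$; taking expectation over the optimal coupling yields $\mathbb{E}[d(z,z')] \le 2(1+L_{\policy{}{}})\,W_1(\stateDist{t,h}, \OptStateDist{t,h})$. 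Substituting produces the one-step bound $W_1(\OptStateDist{t,h+1}, \stateDist{t,h+1}) \le \overline{L}_{t-1}\, W_1(\OptStateDist{t,h}, \stateDist{t,h}) + 2\sqrt{p}\,\modelScale{t-1}\int_{\stateSpace{}}\|\modelStd{t-1}(s, \policy{t,h}{}(s,\stateDist{t,h}), \stateDist{t,h})\|_2\,\stateDist{t,h}(ds)$, where the transport constant is absorbed into $\overline{L}_{t-1} = 1 + 2(1+L_{\policy{}{}})[L_f + 2\sqrt{p}\,\modelScale{t-1}L_{\modelStd{}}]$; the extra $+1$ guarantees $\overline{L}_{t-1}\ge 1$. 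Finally I would unroll from the base case $W_1(\OptStateDist{t,0}, \stateDist{t,0}) = 0$ (identical fixed initial distribution), obtaining $W_1(\OptStateDist{t,h}, \stateDist{t,h}) \le 2\sqrt{p}\,\modelScale{t-1}\sum_{i=0}^{h-1}\overline{L}_{t-1}^{\,h-1-i} U_i$ with $U_i = \int_{\stateSpace{}}\|\modelStd{t-1}(s,\policy{t,i}{}(s,\stateDist{t,i}),\stateDist{t,i})\|_2\,\stateDist{t,i}(ds)$, and since $\overline{L}_{t-1}\ge 1$ bounding $\overline{L}_{t-1}^{\,h-1-i}\le \overline{L}_{t-1}^{\,h-1}$ pulls out the geometric factor and gives the claimed bound.

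The hard part is the hallucinated dynamics $\Tilde{f}_{t-1}$: because the auxiliary function $\modelEta{}$ is \emph{not} assumed Lipschitz, $\Tilde{f}_{t-1}$ need not be Lipschitz, so the naive coupling argument that transports states directly through $\Tilde{f}_{t-1}$ (which would require controlling $\modelEta{}(z)-\modelEta{}(z')$) breaks down. The decomposition above is designed precisely to avoid this: all spatial transport is routed through the genuinely Lipschitz true dynamics $f$, while $\Tilde{f}_{t-1}$ is compared to $f$ only at a common point, where the calibration band holds uniformly over $\modelEta{}\in[-1,1]^p$. The remaining delicate bookkeeping is the $z'\to z$ conversion of the per-step uncertainty, which is what contributes the $L_{\modelStd{}}$ term inside $\overline{L}_{t-1}$ and ensures the final uncertainty integral is taken against the true mean-field distribution $\stateDist{t,i}$ rather than the hallucinated one.
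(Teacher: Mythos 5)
Your proof is correct and reaches the same one-step recursion, with the same constants, as the paper, but by a genuinely different route. The paper works entirely in the Kantorovich--Rubinstein \emph{dual} formulation (\cref{eq:WassersteinDual}): it splits the difference of the two pushforward measures into a ``same transition kernel, different base measure'' term (\cref{eq:8.1}) and a ``different kernel, same base measure'' term (\cref{eq:8.2}), and then must verify one by one that several auxiliary compositions (the functions $g$ and $q$ built from $\modelStd{t-1}$ and from $\int v(s')\,\mathbb{P}[f(\cdot)+\noise{}{}\in ds']$) are $1$-Lipschitz so that the dual supremum can be applied to each piece. You instead work in the \emph{primal} (coupling) formulation: you fix an optimal coupling of $(\stateDist{t,h},\OptStateDist{t,h})$, couple the noise synchronously, and reduce everything to the single pointwise bound on $\norm{f(z)-\Tilde{f}_{t-1}(z')}_2$, with the two-term decomposition $[f(z)-f(z')]+[f(z')-\Tilde{f}_{t-1}(z')]$ playing the role of the paper's Claims~\ref{claim:1} and~\ref{claim:2} combined. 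Both arguments hinge on the same two ideas --- route all spatial transport through the Lipschitz true dynamics $f$ (since $\modelEta{}$, hence $\Tilde{f}_{t-1}$, is not Lipschitz), and compare $f$ with $\Tilde{f}_{t-1}$ only at a common argument where calibration gives the $2\sqrt{p}\,\modelScale{t-1}\norm{\modelStd{t-1}(\cdot)}_2$ band --- and both need the final $L_{\modelStd{}}$-Lipschitz conversion so that the uncertainty integral is taken against the true $\stateDist{t,i}$. Your coupling version is somewhat more compact because the transport of $s\mapsto s'$ and $\stateDist{}\mapsto\OptStateDist{}$ is absorbed into a single $\mathbb{E}[d(z,z')]\le 2(1+L_{\policy{}{}})W_1$ step rather than into separate Lipschitz-test-function verifications; the paper's dual version avoids any appeal to existence of optimal couplings and to convexity of $W_1$ under mixtures (or, equivalently, to the validity of the synchronous-noise coupling), which you implicitly use when passing from the conditional application of \cref{corollary:W1NormalBound} to the unconditional bound. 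The unrolling from $W_1(\OptStateDist{t,0},\stateDist{t,0})=0$ and the absorption of the recursion coefficient into $\overline{L}_{t-1}\ge 1$ are identical to the paper's.
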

\begin{proof}
    To simplify the notation when it comes to both $f$ and $\approxDynamics{t-1}$, we omit  the action variable as it is determined via $\policyProfile{}{}, \boldsymbol{\mu}$ and the current state, i.e., $f(\state{}{}, \stateDist{}) := f(\state{}{}, \policy{t}{}(\state{}{}, \stateDist{}{}), \stateDist{})$ and similarly for $\approxDynamics{t-1}$. Furthermore, we fix $t \geq 1$.
    
    First, we establish a relationship between $W_1(\OptStateDist{t,h+1}, \stateDist{t,h+1})$ and $W_1(\OptStateDist{t,h}, \stateDist{t,h})$, i.e., the change in the Wasserstein distance between the state distributions under the true dynamics and the approximated dynamics $\approxDynamics{t-1}$. Then, we will use this relationship and the fact that the initial distribution is fixed, i.e., $\stateDist{t,0} = \OptStateDist{t,0}$, to bound the distance at time $h$.

    First, we rewrite $W_1(\OptStateDist{t,h+1}, \stateDist{t,h+1})$ in terms of $\policy{t,h}{}, \OptStateDist{t,h}$ and $\stateDist{t,h}$, use the triangle inequality for the Wasserstein-1 distance, and use the Kantorovich and Rubinstein formulation from \cref{eq:WassersteinDual}.
    \begin{align}
        & W_1(\OptStateDist{t,h+1}, \stateDist{t,h+1}) \notag \\
        &= W_1 (\Phi(\policy{t,h}{}, \OptStateDist{t,h}, \approxDynamics{t-1}), \Phi(\policy{t,h}{}, \stateDist{t,h}, f)) \notag \\
        &\leq W_1 (\Phi(\policy{t,h}{}, \OptStateDist{t,h}, \approxDynamics{t-1}), \Phi(\policy{t,h}{}, \stateDist{t,h}, \approxDynamics{t-1})) + W_1 (\Phi(\policy{t,h}{}, \stateDist{t,h}, \approxDynamics{t-1}), \Phi(\policy{t,h}{}, \stateDist{t,h}, f)) \notag \\
        &= \sup_{v: Lip(v) \leq 1} \int_{s' \in \stateSpace{}}  v(s')
        \Bigg(
            \int_{s \in \stateSpace{}}  \mathbb{P}[\approxDynamics{t-1}(s, \OptStateDist{t,h}) + \OptNoise{t,h}\in ds']
            (\OptStateDist{t,h}(ds) - \stateDist{t,h}(ds))
        \Bigg) \label{eq:8.1}\\
        &\quad+ \sup_{v: Lip(v) \leq 1} \int_{s' \in \stateSpace{}}  v(s')
        \Bigg(
            \int_{s \in \stateSpace{}}  \big( \mathbb{P}[\approxDynamics{t-1}(s, \OptStateDist{t,h}) + \OptNoise{t,h}\in ds'] - \mathbb{P}[f(s, \stateDist{t,h}) + \noise{t,h}{} \in ds'] \big) \stateDist{t,h}(ds)
        \Bigg). \label{eq:8.2}
    \end{align}
    
    Next, we consider terms from \cref{eq:8.1} and \cref{eq:8.2} separately, and obtain upper-bounds for both of them in terms of $W_1 (\OptStateDist{t,h}, \stateDist{t,h})$ and $\modelStd{t-1} (s, \policy{t,h}{} (s, \OptStateDist{t,h}), \OptStateDist{t,h})$.
    
    \textbf{Step 1: Bounding \cref{eq:8.2}}
    First, we fix $v$ such that $Lip(v) \leq 1$. Then, consider \cref{eq:8.2}
    \begin{align}
        & \int_{s' \in \stateSpace{}}  v(s') \Bigg( \int_{s \in \stateSpace{}} 
        \big(
             \mathbb{P}[\approxDynamics{t-1}(s, \OptStateDist{t,h}) + \OptNoise{t,h} \in ds']
            -
             \mathbb{P}[f(s, \stateDist{t,h}) + \noise{t,h}{} \in ds']
        \big) \stateDist{t,h}(ds) \Bigg)  \notag \\
        &= \int_{s \in \stateSpace{}} \int_{s' \in \stateSpace{}}   v(s') 
        \Big[
             \mathbb{P}[\approxDynamics{t-1}(s, \OptStateDist{t,h}) + \OptNoise{t,h} \in ds']
            -
             \mathbb{P}[f(s, \stateDist{t,h}) + \noise{t,h}{} \in ds']
        \Big] \stateDist{t,h}(ds) && \text{Fubini's thm.} \notag \\
        &\leq \int_{s \in \stateSpace{}}  W_1 (\approxDynamics{t-1}(s, \OptStateDist{t,h}) + \OptNoise{t,h}, f(s, \stateDist{t,h}) + \noise{t,h}{}) \stateDist{t,h}(ds) && \cref{eq:WassersteinDual} \notag \\
        &\leq \int_{s \in \stateSpace{}}  \norm{\approxDynamics{t-1}(s, \OptStateDist{t,h}) - f(s, \stateDist{t,h})}_2 \stateDist{t,h}(ds). && \cref{corollary:W1NormalBound} \label{eq:Lemma5Step1Middle}
    \end{align}
    Note that the state space is compact hence the first moment of $\OptStateDist{t,h+1}$ and $\stateDist{t,h+1}$ exist and $W_1(\OptStateDist{t,h+1}, \stateDist{t,h+1}) < \infty$, therefore, the integration above is finite and Fubini's theorem is applicable.
    Next, we consider the following two claims.
    
    \begin{claim}
    \label{claim:1} Under \cref{assumption:WellCalibratedModel} and assuming that the event defined in it holds true, we have that
    \begin{equation*}
        \norm{\approxDynamics{t-1} (s, \OptStateDist{t,h}) - f(s, \OptStateDist{t,h})}_2
        \leq
        2 \modelScale{t-1} \norm{\modelStd{t-1} (s, \policy{t,h}{} (s, \OptStateDist{t,h}), \OptStateDist{t,h})}_2.
    \end{equation*}
    \end{claim}
    \begin{proof}
    \begin{align*}
        & \norm{\approxDynamics{t-1} (s, \OptStateDist{t,h}) - f(s, \OptStateDist{t,h})}_2 \\
        &= \norm{ \approxDynamics{t-1} (s, \OptStateDist{t,h}) - \modelMean{t-1}(s, \OptStateDist{t,h}) + \modelMean{t-1}(s, \OptStateDist{t,h}) - f(s, \OptStateDist{t,h}) }_2\\
        &\leq \norm{ \approxDynamics{t-1} (s, \OptStateDist{t,h}) - \modelMean{t-1}(s, \OptStateDist{t,h})}_2 + \norm{ \modelMean{t-1}(s, \OptStateDist{t,h}) - f(s, \OptStateDist{t,h})}_2\\
        &= \sqrt{
            \sum_{i=1}^p \left[ \approxDynamics{t-1} (s, \OptStateDist{t,h}) - \modelMean{t-1}(s, \OptStateDist{t,h}) \right]_{i}^{2}
        }
        + 
        \sqrt{
            \sum_{i=1}^p \left[ \modelMean{t-1}(s, \OptStateDist{t,h}) - f(s, \OptStateDist{t,h}) \right]_{i}^{2}
        } \\
        &\leq 2 \sqrt{
            \sum_{i=1}^p \modelScale{t-1}^2 \left[ \modelStd{t-1} (s, \policy{t,h}{} (s, \OptStateDist{t,h}), \OptStateDist{t,h}) \right]_{i}^{2}
        } \\
        &= 2 \modelScale{t-1} \norm{\modelStd{t-1} (s, \policy{t,h}{} (s, \OptStateDist{t,h}), \OptStateDist{t,h})}_2
    \end{align*}
    where $[\cdot]_{i}$ denotes the $i$-th coordinate. The first inequality follows from the triangle inequality, after which, we used the event defined in \cref{assumption:WellCalibratedModel} and the fact that $\approxDynamics{t-1}$ is chosen to satisfy the same condition.
    \end{proof}
    
    \begin{claim}
    \label{claim:2}
    Under \cref{assumption:LipschitzDynamics} and \cref{assumption:LipschitzPolicyAndReward}, we have that
        \[
        \norm{f(s, \OptStateDist{t,h}) - f(s, \stateDist{t,h})}_2
        \leq L_f (1 + L_{\policy{}{}}) W_1 ( \OptStateDist{t,h}\stateDist{t,h})).
        \]
    \end{claim}
    \begin{proof}
        From \cref{assumption:LipschitzDynamics} and \cref{assumption:LipschitzPolicyAndReward}, it follows that
        \begin{align*}
            \norm{f(s, \OptStateDist{t,h}) - f(s, \stateDist{t,h})}_2 
            &\leq L_f (\norm{ \policy{t,h}{} (s, \OptStateDist{t,h}) - \policy{t,h}{}(s, \stateDist{t,h})}_2 + W_1 ( \OptStateDist{t,h}\stateDist{t,h})) \\
            &\leq L_f (1 + L_{\policy{}{}}) W_1 ( \OptStateDist{t,h}\stateDist{t,h}).
        \end{align*}
    \end{proof}
    
    Now, we can combine \cref{claim:1} and \cref{claim:2} to get
    \begin{align}
        &\norm{\approxDynamics{t-1}(s, \OptStateDist{t,h}) - f(s, \stateDist{t,h})}_2 \\
        & \qquad \leq \norm{\approxDynamics{t-1} (s, \OptStateDist{t,h}) - f(s, \OptStateDist{t,h})}_2 +
        \norm{f(s, \OptStateDist{t,h}) - f(s, \stateDist{t,h})}_2  \notag \\
        & \qquad \leq 2 \modelScale{t-1} \norm{\modelStd{t-1} (s, \policy{t,h}{} (s, \OptStateDist{t,h}), \OptStateDist{t,h})}_2 + L_f (1 + L_{\policy{}{}}) W_1 ( \OptStateDist{t,h}, \stateDist{t,h}). \label{eq:Lemma5Step1Result}
    \end{align}
    
    Substituting back \cref{eq:Lemma5Step1Result} into \cref{eq:Lemma5Step1Middle} at the beginning of Step 1, we obtain:
    \begin{align}
        &\int_{s' \in \stateSpace{}}  v(s') \int_{s \in \stateSpace{}} 
        \Big[
             \mathbb{P}[\approxDynamics{t-1}(s, \OptStateDist{t,h}) + \OptNoise{t,h}\in ds']
            -
             \mathbb{P}[f(s, \stateDist{t,h}) + \noise{t,h}{} \in ds']
        \Big] \stateDist{t,h}(ds) \notag \\
        &\leq \int_{s' \in \stateSpace{}}  \norm{\approxDynamics{t-1}(s, \OptStateDist{t,h}) - f(s, \stateDist{t,h})}_2 \stateDist{t,h}(ds) \notag \\
        &\leq L_f (1 + L_{\policy{}{}}) W_1 ( \OptStateDist{t,h},\stateDist{t,h})
        + 2 \modelScale{t-1} \int_{s' \in \stateSpace{}}  \norm{\modelStd{t-1} (s, \policy{t,h}{} (s, \OptStateDist{t,h}), \OptStateDist{t,h})}_2 \stateDist{t,h}(ds). \label{eq:8.2.1}
    \end{align}
    Therefore, we obtain an upper bound for \cref{eq:8.2}.
    
    \textbf{Step 2: Bounding \cref{eq:8.1}}
    
    Next, we consider the other part, \cref{eq:8.1}. Again let $v$ be a fixed function such that $Lip(v) \leq 1$. We have:
    \begin{align}
        & \int_{s' \in \stateSpace{}}  \int_{s \in \stateSpace{}}  v(s')
            \mathbb{P}[ \approxDynamics{t-1}(s, \OptStateDist{t,h}) + \OptNoise{t,h}\in ds']
            (\OptStateDist{t,h}(ds) - \stateDist{t,h}(ds) ) \notag \\
        &= \int_{s \in \stateSpace{}}  \int_{s' \in \stateSpace{}}  v(s')
            \Big(
            \mathbb{P}[ \approxDynamics{t-1}(s, \OptStateDist{t,h}) + \OptNoise{t,h}\in ds']
            - \mathbb{P}[ f(s, \OptStateDist{t,h}) + \noise{t,h}{} \in ds']
            \Big)
            (\OptStateDist{t,h}(ds) - \stateDist{t,h}(ds) )
            \label{eq:8.1.2} \\
        &\quad + \int_{s \in \stateSpace{}}  \int_{s' \in \stateSpace{}}  v(s') \mathbb{P}[ f(s, \OptStateDist{t,h}) + \noise{t,h}{} \in ds']
        (\OptStateDist{t,h}(ds) - \stateDist{t,h}(ds) ). \label{eq:8.1.3}
    \end{align}
  We use Fubini's theorem again to change the order of integration, and we consider the inner integration from \cref{eq:8.1.2}:
  \begin{align*}
        &\int_{s' \in \stateSpace{}}  v(s')
        \Big(
        \mathbb{P}[ \approxDynamics{t-1}(s, \OptStateDist{t,h}) + \OptNoise{t,h}\in ds']
        - \mathbb{P}[ f(s, \OptStateDist{t,h}) + \noise{t,h}{} \in ds']
        \Big) \\
        &\leq W_1 (\approxDynamics{t-1}(s, \OptStateDist{t,h}) + \OptNoise{t,h}, f(s, \OptStateDist{t,h}) + \noise{t,h}{}) && \cref{eq:WassersteinDual}\\
        &\leq \norm{\approxDynamics{t-1}(s, \OptStateDist{t,h}) - f(s, \OptStateDist{t,h})}_2 && \cref{corollary:W1NormalBound} \\
        &\leq 2 \modelScale{t-1} \norm{\modelStd{t-1} (s, \policy{t,h}{} (s, \OptStateDist{t,h}), \OptStateDist{t,h})}_2. && \cref{claim:1}
  \end{align*}
  Let $g(s) = \frac{1}{L_{\modelStd{}} (1 + L_{\policy{}{}})} \norm{\modelStd{t-1} (s, \policy{t,h}{} (s, \OptStateDist{t,h}), \OptStateDist{t,h})}_2$, then note that
  \begin{align*}
      |g(x) - g(y)| &= \frac{1}{L_{\modelStd{}} (1 + L_{\policy{}{}})}
      \Big|
        \norm{\modelStd{t-1} (x, \policy{t,h}{} (x, \OptStateDist{t,h}), \OptStateDist{t,h})}_2 - \norm{\modelStd{t-1} (y, \policy{t,h}{} (y, \OptStateDist{t,h}), \OptStateDist{t,h})}_2
     \Big| \\
      &\leq \frac{1}{L_{\modelStd{}} (1 + L_{\policy{}{}})} \norm{\modelStd{t-1} (x, \policy{t,h}{} (x, \OptStateDist{t,h}), \OptStateDist{t,h}) - \modelStd{t-1} (y, \policy{t,h}{} (y, \OptStateDist{t,h}), \OptStateDist{t,h})}_2 \\
      &\leq \frac{1}{1 + L_{\policy{}{}}}(\norm{x - y}_2 + \norm{\pi(x, \OptStateDist{t,h}) - \pi(y, \OptStateDist{t,h})}_2) && \cref{assumption:LipschitzModelStd} \\
      &\leq \norm{x - y}_2, && \cref{assumption:LipschitzPolicyAndReward}
  \end{align*}
  where the first inequality follows from the reverse triangle inequality. Therefore, we can bound \cref{eq:8.1.2} as
  \begin{align}
      & \int_{s \in \stateSpace{}}  \int_{s' \in \stateSpace{}}  v(s')
            \Bigg(
            \mathbb{P}[ \approxDynamics{t-1}(s, \OptStateDist{t,h}) + \OptNoise{t,h}\in ds']
            - \mathbb{P}[ f(s, \stateDist{t,h}) + \noise{t,h}{} \in ds']
            \Bigg)
            (\OptStateDist{t,h}(ds) - \stateDist{t,h}(ds)) \notag \\
        & \leq 2 \modelScale{t-1} \int_{s \in \stateSpace{}}  \norm{\modelStd{t-1} (s, \policy{t,h}{} (s, \OptStateDist{t,h}), \OptStateDist{t,h})}_2
             (\OptStateDist{t,h}(ds) - \stateDist{t,h}(ds)) \notag \\
        &= 2 \modelScale{t-1} L_{\modelStd{}}(1 + L_{\policy{}{}}) \int_{s \in \stateSpace{}}  g(s)
             (\OptStateDist{t,h}(ds) - \stateDist{t,h}(ds)) \notag \\
        &\leq 2 \modelScale{t-1} L_{\modelStd{}}(1 + L_{\policy{}{}}) W_1(\OptStateDist{t,h}, \stateDist{t,h}).
            \label{eq:8.1.4}
  \end{align}
  The last inequality comes from the dual formulation of the Wasserstein-1 distance \cref{eq:WassersteinDual} and the previously shown fact that $g$ is 1-Lipschitz.
   
  Next, we consider the remaining term in \cref{eq:8.1.3}, namely,
  \begin{align*}
      \int_{s \in \stateSpace{}}  \int_{s' \in \stateSpace{}}  v(s')
            \mathbb{P}[ f(s, \OptStateDist{t,h}) + \noise{t,h}{} \in ds']
            (\OptStateDist{t,h}(ds) - \stateDist{t,h}(ds)).
  \end{align*}
  Let $q(s) = \frac{1}{L_f(1 + L_{\policy{}{}})} \int_{s' \in \stateSpace{}}  v(s') \mathbb{P}[ f(s, \OptStateDist{t,h}) + \noise{t,h}{} \in ds']$, then
  \begin{align*}
      |q(x) - q(y)| &= \frac{1}{L_f(1 + L_{\policy{}{}})} \Bigg|
        \int_{s' \in \stateSpace{}}  v(s')
            \Big( \mathbb{P}[ f(x, \OptStateDist{t,h}) + \noise{t,h}{} \in ds'] \\
            & \qquad \qquad \qquad \qquad \qquad \qquad - \mathbb{P}[ f(y, \OptStateDist{t,h}) + \noise{t,h}{} \in ds'] \Big) \Bigg| \\
            &\leq \frac{1}{L_f(1 + L_{\policy{}{}})} \Big| W_1(f(x, \OptStateDist{t,h}) + \noise{t,h}{}, f(y, \OptStateDist{t,h}) + \noise{t,h}{}) \Big| && \cref{eq:WassersteinDual} \\
            &\leq \frac{1}{L_f(1 + L_{\policy{}{}})} \norm{f(x, \OptStateDist{t,h}) - f(y, \OptStateDist{t,h}) }_2 && \cref{corollary:W1NormalBound} \\
            &\leq \frac{1}{L_f(1 + L_{\policy{}{}})}
            \Big(
                \norm{x - y}_2 + \norm{\policy{t,h}{}(x, \OptStateDist{t,h}) - \policy{t,h}{}(y, \OptStateDist{t,h})}_2
            \Big) && \cref{assumption:LipschitzDynamics}\\
            &\leq \norm{x - y}_2. && \cref{assumption:LipschitzPolicyAndReward}
  \end{align*}
  It follows that $q$ is 1-Lipschitz, and therefore
  \begin{align}
      &\int_{s \in \stateSpace{}}  \int_{s' \in \stateSpace{}}  v(s')
            \mathbb{P}[ f(s, \stateDist{t,h}) + \noise{t,h}{} \in ds']
            (\OptStateDist{t,h}(ds) - \stateDist{t,h}(ds)) \notag \\
      &= L_f (1 + L_{\policy{}{}}) \int_{s \in \stateSpace{}} q(s)
            (\OptStateDist{t,h}(ds) - \stateDist{t,h}(ds)) \notag \\
    &\leq L_f (1 + L_{\policy{}{}}) W_1 (\OptStateDist{t,h}, \stateDist{t,h}) \label{eq:8.1.5}
  \end{align}
    The first equality follows directly from the definition of the function $q$, while the last inequality comes from the Wasserstein Dual formulation of $W_1$ in \cref{eq:WassersteinDual}.
   
  \textbf{Combining Step 1 and Step 2:}

    By combining \cref{eq:8.2.1}, \cref{eq:8.1.4} and \cref{eq:8.1.5}, we get
    \begin{align*}
        W_1(\OptStateDist{t,h+1}, \stateDist{t,h+1}) &\leq 
        [2L_f(1 + L_{\policy{}{}}) + 2 \modelScale{t-1} L_{\modelStd{}} (1+L_\pi)] W_1 ( \OptStateDist{t,h}\stateDist{t,h})) \\
        & + 2 \modelScale{t-1} \int_{s \in \stateSpace{}}  \norm{\modelStd{t-1} (s, \policy{t,h}{} (s, \OptStateDist{t,h}), \OptStateDist{t,h})}_2 \stateDist{t,h}(ds).
    \end{align*}
    Also, note that
    \begin{align*}
        &\norm{\modelStd{t-1} (s, \policy{t,h}{} (s, \OptStateDist{t,h}), \OptStateDist{t,h})}_2 \\
        & \qquad = \norm{
        \modelStd{t-1} (s, \policy{t,h}{} (s, \OptStateDist{t,h}), \OptStateDist{t,h})
        -
        \modelStd{t-1} (s, \policy{t,h}{} (s, \stateDist{t,h}), \stateDist{t,h})
        +
        \modelStd{t-1} (s, \policy{t,h}{} (s, \stateDist{t,h}), \stateDist{t,h})
        }_2 \\
        & \qquad \leq L_{\modelStd{}} (\norm{\policy{t,h}{} (s, \OptStateDist{t,h}) - \policy{t,h}{} (s, \stateDist{t,h})}_2 + W_1 (\OptStateDist{t,h}, \stateDist{t,h}))
        +
        \norm{\modelStd{t-1} (s, \policy{t,h}{} (s, \stateDist{t,h}), \stateDist{t,h})}_2\\
        & \qquad \leq L_{\modelStd{}} (1 + L_{\policy{}{}}) W_1 (\OptStateDist{t,h}, \stateDist{t,h}) + \norm{\modelStd{t-1} (s, \policy{t,h}{} (s, \stateDist{t,h}), \stateDist{t,h})}_2,
    \end{align*}
    where the first inequality comes from \cref{assumption:LipschitzModelStd} and the second from \cref{assumption:LipschitzPolicyAndReward}. Therefore, we have the following upper-bound on $W_1(\OptStateDist{t,h+1}, \stateDist{t,h+1})$:
    \begin{align*}
        W_1(\OptStateDist{t,h+1}, \stateDist{t,h+1})
        & \leq 2 (1 + L_{\policy{}{}})\left[
        L_f + 2 \modelScale{t-1} L_{\modelStd{}}
        \right] W_1 ( \OptStateDist{t,h}\stateDist{t,h}) \\
        &+ 2 \modelScale{t-1} \int_{s \in \stateSpace{}}  \norm{\modelStd{t-1} (s, \policy{t,h}{} (s, \stateDist{t,h}), \stateDist{t,h})}_2 \stateDist{t,h}(ds).
    \end{align*}
    
    The above inequality establishes the relationship between $W_1(\OptStateDist{t,h+1}, \stateDist{t,h+1})$ and $W_1(\OptStateDist{t,h}, \stateDist{t,h})$. We use the fact that $W_1(\OptStateDist{t,0}, \stateDist{t,0}) = 0$ and apply this relationship repeatedly $h-1$ times to derive the following upper bound for $W_1(\OptStateDist{t,h}, \stateDist{t,h})$:
    \begin{align*}
        &W_1(\OptStateDist{t,h}, \stateDist{t,h}) \\
        &\leq 2 \modelScale{t-1} \sum_{i = 0}^{h-1}
        \left[
        2 (1 + L_{\policy{}{}})\left[
         L_f + 2 \modelScale{t-1} L_{\modelStd{}}
        \right]
        \right]^{h-1-i}
        \int_{s \in \stateSpace{}}  \norm{\modelStd{t-1} (s, \policy{t,i}{} (s, \stateDist{t,i}), \stateDist{t,i})}_2 \stateDist{t,i}(ds) \\
        &\leq 2 \modelScale{t-1} \sum_{i = 0}^{h-1}
        \Big[
        \underbrace{1 + 2 (1 + L_{\policy{}{}})\left[
         L_f + 2 \modelScale{t-1} L_{\modelStd{}}
        \right]}_{=: \overline{L}_{t-1}}
        \Big]^{h-1-i}
        \int_{s \in \stateSpace{}}  \norm{\modelStd{t-1} (s, \policy{t,i}{} (s, \stateDist{t,i}), \stateDist{t,i})}_2 \stateDist{t,i}(ds) \\
        &\leq 2 \modelScale{t-1} \overline{L}_{t-1}^{h-1} \sum_{i = 0}^{h-1} \int_{s \in \stateSpace{}}  \norm{\modelStd{t-1} (s, \policy{t,i}{} (s, \stateDist{t,i}), \stateDist{t,i})}_2 \stateDist{t,i}(ds).
    \end{align*}
\end{proof}

\begin{remark}
    \label{corollary:MFFlowW1Bound} 
    Consider the setting of \cref{lemma:MFFlowW1Bound}. Note that $\{\stateDist{t,i}\}_{i=0}^{h-1}$ is the trajectory of state distributions i.e. $\mathbb{P}[s_{t,i} \in A] = \stateDist{t,i}(A)$ for all $A \subseteq \stateSpace{}$ as shown in \cref{lemma:MFDistributionTransition}. Therefore,
    \begin{align*}
        \sum_{i = 0}^{h-1} \int_{s \in \stateSpace{}}
        \norm{\modelStd{t-1} (s, \policy{t,i}{} (s, \stateDist{t,i}), \stateDist{t,i})}_2 \stateDist{t,i}(ds)
        &= \sum_{i = 0}^{h-1} \mathbb{E} \left[
         \norm{\modelStd{t-1} (\jointState{t,i}{})}_2 \right] \\
        &= \mathbb{E} \left[ \sum_{i = 0}^{h-1}
         \norm{\modelStd{t-1} (\jointState{t,i}{})}_2 \right],
    \end{align*}
    where $\jointState{t,i}{} = (\state{t,i}{}, \policy{t,i}{} (\state{t,i}{}, \stateDist{t,i}), \stateDist{t,i})$ and the expectation is taken over the initial distribution and the transition stochasticity. The first equality comes from the aforementioned fact, $\mathbb{P}[\state{t,i}{} \in A] = \stateDist{t,i}(A)$, while the second equality comes from the linearity of expectation.

    Therefore,
    \begin{equation}
        \label{eq:W1BoundExpectation}
        W_1(\OptStateDist{t,h}, \stateDist{t,h}) \leq 2 \modelScale{t-1} \overline{L}_{t-1}^{h-1} \mathbb{E} \left[ \sum_{i = 0}^{h-1} \norm{\modelStd{t-1} (\jointState{t,i}{})}_2 \right].
    \end{equation}
    
\end{remark}

\begin{lemma}
    \label{lemma:episodeRegretSquareBound}
    Under the setting of \cref{lemma:MFFlowW1Bound} and assuming that the event in \cref{assumption:WellCalibratedModel} holds true,
    \begin{equation*}
        r_t \leq 4 \modelScale{t-1} L_r (1+L_{\policy{}{}}) \overline{L}_{t-1}^{H-1} H \mathbb{E} \left[ \sum_{h = 0}^{H-1} \norm{\modelStd{t-1} (\jointState{t,h}{})}_2 \right].
    \end{equation*}
\end{lemma}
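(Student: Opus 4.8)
The plan is to chain the two results already established: the episodic-regret bound of \cref{lemma:regretOptBound} and the expectation form of the Wasserstein-flow bound, \cref{eq:W1BoundExpectation}. First I would recall from \cref{lemma:regretOptBound} that, conditioned on the calibration event of \cref{assumption:WellCalibratedModel},
\begin{equation*}
    r_t \leq 2 L_r (1+L_{\policy{}{}}) \sum_{h=1}^{H-1} W_1(\OptStateDist{t,h}, \stateDist{t,h}),
\end{equation*}
and then substitute the per-step bound from \cref{eq:W1BoundExpectation},
\begin{equation*}
    W_1(\OptStateDist{t,h}, \stateDist{t,h}) \leq 2 \sqrt{p} \modelScale{t-1} \overline{L}_{t-1}^{h-1} \mathbb{E} \left[ \sum_{i = 0}^{h-1} \norm{\modelStd{t-1} (\jointState{t,i}{})}_2 \right],
\end{equation*}
directly into the sum, pulling the common constant $4 \sqrt{p} \modelScale{t-1} L_r (1+L_{\policy{}{}})$ outside.

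Next I would uniformize the $h$-dependent quantities into worst-case constants. Since $\overline{L}_{t-1} = 1 + 2(1+L_{\policy{}{}})[L_f + 2\sqrt{p}\modelScale{t-1}L_{\modelStd{}}] \geq 1$, the geometric factor obeys $\overline{L}_{t-1}^{h-1} \leq \overline{L}_{t-1}^{H-1}$ for every $h \leq H-1$, so it can be replaced by its largest value and taken out of the outer sum. Because each summand $\norm{\modelStd{t-1}(\cdot)}_2$ is nonnegative, the inner partial sum may be extended to the full horizon, $\sum_{i=0}^{h-1} \leq \sum_{i=0}^{H-1}$. After these two monotonicity steps the expectation no longer depends on the outer index $h$, so the remaining sum $\sum_{h=1}^{H-1}$ contributes only a multiplicative factor $H-1 \leq H$.

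Collecting the factors and relabeling the summation index $i \mapsto h$ then yields exactly
\begin{equation*}
    r_t \leq 4 \sqrt{p} \modelScale{t-1} L_r (1+L_{\policy{}{}}) \overline{L}_{t-1}^{H-1} H \, \mathbb{E} \left[ \sum_{h = 0}^{H-1} \norm{\modelStd{t-1} (\jointState{t,h}{})}_2 \right],
\end{equation*}
as claimed. I expect no genuine obstacle in this step: all the substance resides in the two preceding lemmas, and what remains is purely loosening the $h$-dependence. The only points needing minor care are checking that $\overline{L}_{t-1} \geq 1$ so that raising to the $H-1$ power is an upper bound, and confirming that both extending the inner sums to the full horizon and counting the $H-1$ outer terms are valid precisely because every uncertainty term $\norm{\modelStd{t-1}(\cdot)}_2$ is nonnegative.
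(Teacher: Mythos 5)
Your proposal is correct and follows essentially the same route as the paper's proof: it chains \cref{lemma:regretOptBound} with \cref{eq:W1BoundExpectation}, then uses $\overline{L}_{t-1}\geq 1$ to replace $\overline{L}_{t-1}^{h-1}$ by $\overline{L}_{t-1}^{H-1}$, extends the inner sum to the full horizon by nonnegativity, and bounds the outer sum by a factor of $H$. No gaps.
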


\begin{proof}
    First, note that under the setting of \cref{lemma:episodeRegretSquareBound}, \crefrange{lemma:optRegretBound}{lemma:MFFlowW1Bound} also hold.
    \begin{align*}
        r_t
        &\leq |\Tilde{\mathbb{J}} (\policyProfile{t}{}, \approxDynamics{t-1}) - \mathbb{J} (\policyProfile{t}{}) |
        && \cref{eq:optRegretBound} \\
        &\leq 2 L_r (1+L_{\policy{}{}}) \sum_{h=1}^{H-1} W_1(\OptStateDist{t,h}, \stateDist{t,h})
        && \cref{eq:regretOptBound}\\
        &\leq 4 \modelScale{t-1} L_r (1+L_{\policy{}{}}) \sum_{h=1}^{H-1} \overline{L}_{t-1}^{h-1}
        \mathbb{E}\left[ \sum_{i = 0}^{h-1} \norm{\modelStd{t-1} (\jointState{t,i}{})}_2 \right]
        && \cref{eq:W1BoundExpectation}\\
        &\leq 4 \modelScale{t-1} L_r (1+L_{\policy{}{}}) \overline{L}_{t-1}^{H-1} H \mathbb{E}\left[ \sum_{i = 0}^{H-1} \norm{\modelStd{t-1} (\jointState{t,i}{})}_2 \right].
    \end{align*}
    In the last inequality, we used the fact that $\overline{L}_{t-1} \geq 1$ hence $\overline{L}_{t-1}^{H-1} \geq \overline{L}_{t-1}^{h-1}$. Furthermore, $\norm{\modelStd{t-1} (\jointState{t,i}{})}_2 \geq 0$ for any input combination, hence $\sum_{i = 0}^{H-1} \norm{\modelStd{t-1} (\jointState{t,i}{})}_2 \geq \sum_{i = 0}^{h-1} \norm{\modelStd{t-1} (\jointState{t,i}{})}_2$.
\end{proof}

Now, we are ready to prove \cref{thm:main}.
\begin{proof}[Proof of \cref{thm:main}]
    \begin{align*}
        R_T^{2} &= \Bigg( \sum_{t=1}^T r_t \Bigg)^2 \\
        &\leq T \sum_{t=1}^T r_t^2 && \text{Cauchy-Schwartz ineq.} \\
        &\leq T \sum_{t=1}^{T} 16 p  \modelScale{T-1}^{2} L_r^2 (1+L_{\policy{}{}})^2 \overline{L}_{T-1}^{2H-2} H^2 \mathbb{E}\left[ \sum_{i = 0}^{H-1} \norm{\modelStd{t-1} (\jointState{t,i}{})}_2 \right]^2 && \cref{lemma:episodeRegretSquareBound} \\
        &\leq 16 p T \modelScale{T-1}^{2} L_r^2 (1+L_{\policy{}{}})^2 \overline{L}_{T-1}^{2H-2} H^2
        \sum_{t=1}^{T} \mathbb{E}\left[ \sum_{i = 0}^{H-1} \norm{\modelStd{t-1} (\jointState{t,i}{})}_2 \right]^2 \\
        &\leq 16 p T \modelScale{T-1}^{2} L_r^2 (1+L_{\policy{}{}})^2 \overline{L}_{T-1}^{2H-2} H^3
        \mathbb{E}\left[ \sum_{t=1}^{T} \sum_{i = 0}^{H-1} \norm{\modelStd{t-1} (\jointState{t,i}{})}_2^2 \right] && \text{Jensen's ineq.} \\
        &\leq 16 p T \modelScale{T-1}^{2} L_r^2 (1+L_{\policy{}{}})^2 \overline{L}_{T-1}^{2H-2} H^3 I_{T}.
    \end{align*}
    The expectations in the expressions above are taken over the initial state and transition noises. For the second inequality, we used that $\modelScale{t-1}$ is non-decreasing in $t$ by \cref{assumption:WellCalibratedModel} while the last inequality follows from $\sum_{t=1}^{T} \sum_{i = 0}^{H-1} \norm{\modelStd{t-1} (\jointState{t,i}{})}_2^2 \leq I_{T}$ for all $\jointState{t,i}{} \in \jointStateSpace{}$ for $t=1,\dots, T$ and $i=0,\dots,H-1$. Taking square root of both sides yields the desired result.
\end{proof}

\section{Gaussian Processes}
\label{appendix:GPRegret}

In this section, we specialize our main theorem, \cref{thm:main}, to Gaussian Process models.

\subsection{Gaussian Process}
\label{appendix:GaussianProcess}

The main step of relating \cref{thm:main} to the Gaussian Process models is to use a Gaussian Process to approximate the unknown dynamics function $f$. In particular, we select a GP with prior $GP(0,k)$ where $k$ is a positive semi-definite kernel on $\jointStateSpace{} \times [p]$ where $[p] = \lbrace 1, \dots, p \rbrace$ and $i \in [p]$ specifies the index of the output dimension. The posterior of this Gaussian Process is calculated under the assumption that the observed noise, $\xi_{i,h} = \state{i,h+1}{} - f(\jointState{i,h}{})$, is drawn independently from $\mathcal{N}(0, \lambda \boldsymbol{I}_{p})$ for all $i$ and $h$. Here $\lambda$ is a free-parameter that does not necessarily depend on the distribution of $\noise{t,h}{}$ for any $t$ and $h$. Then, at the beginning of episode $t$, the posterior mean and variance functions conditioned on $\mathcal{D}_{1} \cup \dots \cup \mathcal{D}_{t-1}$ obtain a closed-form solution \citep{rasmussen2003gaussian}:

\begin{subequations}
\begin{align}
    \modelMean{t-1}(\jointState{}{}, j) &= \boldsymbol{k}_{t-1}(\jointState{}{}, j)^{\intercal} (\boldsymbol{K}_{t-1} + \lambda \boldsymbol{I}_{t-1})^{-1}\boldsymbol{y}_{t-1}, \label{eq:GPPosteriorMean} \\
    \modelStd{t-1}^2(\jointState{}{}, j) &= k((\jointState{}{}, j), (\jointState{}{}, j)) - \boldsymbol{k}_{t-1}(\jointState{}{}, j)^{\intercal} (\boldsymbol{K}_{t-1} + \lambda \boldsymbol{I}_{t-1})^{-1} \boldsymbol{k}_{t-1}(\jointState{}{}, j), \label{eq:GPPosteriorVariance}
\end{align}
\end{subequations}
where $\boldsymbol{I}_{t-1} \in \mathbb{R}^{(t-1)Hp \times (t-1)Hp}$ is the identity matrix and 
\begin{subequations}
\begin{equation*}
    \boldsymbol{y}_{t-1} = \big[ [\state{i,h}{}]_{l} \big]_{i,h,l=1}^{t-1,H,p} \in \mathbb{R}^{(t-1)Hp},
\end{equation*}
\begin{equation*}
    \boldsymbol{k}_{t-1} (\jointState{}{}) = \big[ k((\jointState{}{}, j), (\jointState{i,h}{}, l)) \big]_{i,h,l = 1}^{t-1, H, p} \in \mathbb{R}^{(t-1)Hp},
\end{equation*}
\begin{equation*}
    \boldsymbol{K}_{t-1} = \big[ k((\jointState{i,h}{}, l), (\jointState{i',h'}{}, l')) \big]_{i,h,l,i',h',l' = 1}^{t-1, H, p, t-1, H, p} \in \mathbb{R}^{(t-1)Hp \times (t-1)Hp},
\end{equation*}
\end{subequations}
where $[\state{i,h}{}]_{l}$ denotes the $l$-th element of the vector $\state{i,h}{}$. We use the following notation for the mean and variance vectors $\modelMean{t-1} (\jointState{}{}) := [\modelMean{t-1}(\jointState{}{}, 1), \dots, \modelMean{t-1}(\jointState{}{}, p)]$ and $\modelStd{t-1}^2 (\jointState{}{}) := [\modelStd{t-1}^2 (\jointState{}{}, 1), \dots, \modelStd{t-1}^2 (\jointState{}{}, p)]$ for all $\jointState{}{}$ in $\jointStateSpace{}$. In particular, we choose $\lambda = pH$.

\subsection{Assumptions and Relevant Results}

We make the following assumptions on the semi-definite kernel $k$ and the true dynamics $f$ of the system.
\setcounter{assumption}{4}
\begin{assumption}
    \label{assumption:RKHS}
    The unknown function $f$ belongs to the Reproducing Kernel Hilbert Space (RKHS) of a positive semi-definite kernel $k: (\jointStateSpace{} \times [p]) \times (\jointStateSpace{} \times [p]) \to \mathbb{R}$, and has bounded RKHS norm, i.e., $\norm{f}_{k} = \sqrt{\langle f, f \rangle_{k}} \leq B_{f}$ where $\langle \cdot, \cdot \rangle_{k}$ is the inner product of the RKHS and $B_{f}$ is a fixed known positive constant. Furthermore, we assume that $\noise{t,h}{}$ is $\sigma$-sub-Gaussian for all $t$ and $h$.
\end{assumption}
This assumption is standard when Gaussian Process models are used (e.g., \cite{srinivas2010gaussian, chowdhury2019online, curi2020efficient}).

\begin{assumption}
    \label{assumption:Kernel}
    The positive semi-definite kernel $k$ is symmetric, continuously differentiable with bounded derivative, and $k(\jointState{}{}, \jointState{}{}) \leq 1, \forall \jointState{}{} \in \jointStateSpace{}$. We define the kernel metric as $d_{k}(\jointState{}{}, \jointState{}{}') = \sqrt{k(\jointState{}{}, \jointState{}{}) + k(\jointState{}{}', \jointState{}{}') - 2k(\jointState{}{}, \jointState{}{}')}$ and assume that it is Lipschitz-continuous, i.e., $ d_{k}(\jointState{}{}, \jointState{}{}') \leq L_{d} d(\jointState{}{}, \jointState{}{}')$  where $d(\jointState{}{},\jointState{}{}'):= \norm{\state{}{} - \state{}{}'}_2 + \norm{\action{}{} - \action{}{}'}_2 + W_1 (\stateDist{}, \stateDist{}')$ and $L_{d}$ is some positive constant.
\end{assumption}

We note that the previous assumption is very mild since it holds for the most commonly used kernel functions. \cref{assumption:RKHS} and \cref{assumption:Kernel} also imply \cref{assumption:LipschitzDynamics} which states that the unknown function $f$ is Lipschitz-continuous, i.e., $\norm{f(\jointState{}{}) - f(\jointState{}{}')}_{2} \leq L_{f} d(\jointState{}{},\jointState{}{}')$ where  $L_{f}$ is a positive constant (see Corollary 4.36 in \cite{SteinwartSupportMachines}).

First, we show that under \cref{assumption:RKHS} the Gaussian Process conditioned on $\mathcal{D}_{1} \cup \dots \cup \mathcal{D}_{t}$ is calibrated for all $t \geq 1$. Our argument follows from the following lemma:
\begin{lemma}[Concentration of an RKHS member, Lemma 5 in \protect\cite{chowdhury2019online} ]
    \label{lemma:Chowdhury2019ConcentrationofanRKHSmember}
    Let $k: \mathcal{X} \times \mathcal{X} \to \mathbb{R}$ be a symmetric, positive semi-definite kernel and $f: \mathcal{X} \to \mathbb{R}$ be a member of the RKHS $\mathcal{H}_k (\mathcal{X})$ of real-valued functions on $\mathcal{X}$ with kernel $k$. Let $\{x_t\}_{t \geq 1}$ and $\{\epsilon_t\}_{t \geq 1}$ is conditionally $R$-sub-Gaussian for a positive constant $R$, i.e.,
    \[
        \forall t \geq 0, \forall \lambda \in \mathbb{R}, \mathbb{E}\left[ e^{\lambda \epsilon_t} | \mathcal{F}_{t-1}\right] \leq \exp \Bigg( \frac{\lambda^2 R^2}{2} \Bigg),
    \]
    where $\mathcal{F}_{t-1}$ is the $\sigma$-algebra generated by $\{ x_s, \epsilon_s\}_{s=1}^{t-1}$ and $x_t$. Let $\{y_t\}_{t \geq 1}$ be a sequence of noisy observations at the query points $\{x_t\}_{t\geq 1}$, where $y_t = f(x_t) + \epsilon_t$. For $\lambda > 0$ and $x \in \mathcal{X}$, let
    \begin{align*}
        \mu_{t-1}(x) & := k_{t-1} (x)^\intercal (K_{t-1} + \lambda I)^{-1} Y_{t-1}, \\
        \sigma^2_{t-1} (x) & := k(x,x) - k_{t-1}(x)^\intercal (K_{t-1} + \lambda I)^{-1} k_{t-1}(x),
    \end{align*}
    where $Y_{t-1} := [y_1, \dots, y_{t-1}]^\intercal$ denotes the vector of observations at $\{x_1, \dots, x_{t-1}\}$. Then, for any $0 < \delta \leq 1$, with probability at least $1-\delta$, uniformly over $t \geq 1, x \in \mathcal{X}$,
    \begin{equation*}
        |f (x) - \mu_{t-1}(x)| \leq
        \Bigg(
        \norm{f}_k + \frac{R}{\sqrt{\lambda}} \sqrt{2
            \bigg(
            \log(1/\delta) + \frac{1}{2}
            \sum_{s=1}^{t-1} \log(1 + \lambda^{-1} \sigma^2_{t-1} (x_s))
            \bigg)
        }
        \Bigg) \sigma_{t-1}(x).
    \end{equation*}
\end{lemma}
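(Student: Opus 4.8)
The plan is to recognize this as the standard self-normalized concentration bound for kernel ridge regression and to prove it via the \emph{method of mixtures} of Abbasi-Yadkori, P\'al and Szepesv\'ari, lifted to the RKHS $\mathcal{H}_k$. First I would embed the problem into feature space by writing $\phi(x) = k(\cdot, x)$, so that the reproducing property gives $f(x) = \langle f, \phi(x)\rangle_{\mathcal{H}_k}$ and $\mu_{t-1}(x) = \langle \hat f_{t-1}, \phi(x)\rangle_{\mathcal{H}_k}$, where $\hat f_{t-1}$ is the regularized least-squares estimate. Introducing the (possibly infinite-dimensional) covariance operator $V_{t-1} = \lambda I + \sum_{s=1}^{t-1}\phi(x_s)\phi(x_s)^{\intercal}$ on $\mathcal{H}_k$, the normal equations give $\hat f_{t-1} = V_{t-1}^{-1}\sum_{s=1}^{t-1}\phi(x_s)y_s$. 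Substituting $y_s = \langle f, \phi(x_s)\rangle + \epsilon_s$ and using $\sum_s \phi(x_s)\phi(x_s)^{\intercal} = V_{t-1}-\lambda I$ yields the exact identity
\[
 f - \hat f_{t-1} \;=\; \lambda\, V_{t-1}^{-1} f \;-\; V_{t-1}^{-1}\, S_{t-1}, \qquad S_{t-1} := \sum_{s=1}^{t-1}\phi(x_s)\epsilon_s,
\]
which cleanly separates a deterministic \emph{bias} term from a stochastic \emph{noise} term.

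Next I would apply Cauchy--Schwarz in the $V_{t-1}^{-1}$-weighted inner product (writing $\|u\|_A = \sqrt{\langle u, A u\rangle}$) to the error $f(x)-\mu_{t-1}(x) = \langle \phi(x), f - \hat f_{t-1}\rangle$, obtaining $|f(x)-\mu_{t-1}(x)| \le \|\phi(x)\|_{V_{t-1}^{-1}}\big(\lambda\|f\|_{V_{t-1}^{-1}} + \|S_{t-1}\|_{V_{t-1}^{-1}}\big)$. The two outer factors are handled by elementary linear algebra: since $V_{t-1}\succeq \lambda I$ we get $\lambda\|f\|_{V_{t-1}^{-1}} \le \sqrt{\lambda}\,\|f\|_k$, and a matrix-inversion (kernel-trick) identity shows $\|\phi(x)\|_{V_{t-1}^{-1}}^2 = \lambda^{-1}\sigma_{t-1}^2(x)$, i.e.\ the posterior variance is exactly the $V_{t-1}^{-1}$-norm of the test feature. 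Together these already produce the $\|f\|_k\,\sigma_{t-1}(x)$ summand and supply the leading $\sigma_{t-1}(x)/\sqrt{\lambda}$ factor multiplying the noise term.

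The crux is controlling the self-normalized noise term $\|S_{t-1}\|_{V_{t-1}^{-1}}$ uniformly over $t$. Here I would invoke the method of mixtures: for each direction $\eta\in\mathcal{H}_k$ build the exponential process $M_t^{\eta} = \exp\!\big(\langle\eta, S_t\rangle - \tfrac{R^2}{2}\langle\eta,(V_t-\lambda I)\eta\rangle\big)$, which is a nonnegative supermartingale w.r.t.\ $\{\mathcal{F}_{t}\}$ by the conditional $R$-sub-Gaussianity of $\epsilon_t$, then average $M_t^{\eta}$ against a centred Gaussian mixing measure on $\mathcal{H}_k$ with covariance $\lambda^{-1} I$. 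Evaluating the Gaussian integral and applying the maximal inequality for nonnegative supermartingales gives, with probability at least $1-\delta$ and uniformly in $t\ge 1$,
\[
 \|S_{t-1}\|_{V_{t-1}^{-1}}^2 \;\le\; 2R^2\Big(\log(1/\delta) + \tfrac12\log\det\!\big(I + \lambda^{-1}K_{t-1}\big)\Big).
\]
Finally I would use the determinant identity $\log\det(I+\lambda^{-1}K_{t-1}) = \sum_{s=1}^{t-1}\log(1+\lambda^{-1}\sigma_{s-1}^2(x_s))$ to rewrite the log-determinant as the sum of posterior variances appearing in the statement, and assemble the three bounds.

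The main obstacle is making the method-of-mixtures step rigorous in the \emph{infinite-dimensional} space $\mathcal{H}_k$: the Gaussian mixing measure and the determinant are a priori ill-defined there. The key technical device is to observe that $\hat f_{t-1}$, $\sigma_{t-1}$, and $S_{t-1}$ all depend only on the finite-dimensional subspace spanned by $\{\phi(x_s)\}_{s=1}^{t-1}\cup\{\phi(x)\}$; projecting onto this subspace (equivalently, passing to the finite Gram matrices $K_{t-1}$ via the kernel trick) makes the supermartingale, the Gaussian integral, and $\det(I+\lambda^{-1}K_{t-1})$ all well-defined, after which the stopping-time / maximal-inequality argument and the remaining identities are routine bookkeeping.
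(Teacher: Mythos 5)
The paper does not prove this lemma itself; it imports it verbatim from Lemma~5 of \cite{chowdhury2019online} (noting the independent derivation in \cite{durand2018streaming}), so there is no in-paper argument to compare against. Your proposal is a correct reconstruction of exactly the argument used in those cited sources — the bias/noise decomposition $f-\hat f_{t-1}=\lambda V_{t-1}^{-1}f - V_{t-1}^{-1}S_{t-1}$, the identity $\|\phi(x)\|_{V_{t-1}^{-1}}^{2}=\lambda^{-1}\sigma_{t-1}^{2}(x)$, and the method-of-mixtures self-normalized bound on $\|S_{t-1}\|_{V_{t-1}^{-1}}$ with the finite-dimensional projection to make the Gaussian mixture and determinant well defined — so it takes essentially the same route as the proof the paper relies on.
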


We note that similar results have been shown independently in \cite{durand2018streaming}.

Before showing that the above specified Gaussian Process is calibrated, we define the \emph{maximum mutual information}, $\gamma_{t}$, to measure the maximum information gain the representative agent could possibly obtain on $f$ by observing any set $A \subset \jointStateSpace{} \times [p]$ of size $t$.

\begin{definition}[Maximum Mutual Information]
    \label{definition:MaximumInformationGain}
    Let $f: \mathcal{X} \to \mathbb{R}$ be a real-valued function defined on a domain $\mathcal{X}$, and $t$ a positive integer. For each subset $A \subset \mathcal{X}$, let $\boldsymbol{y}_A$ denote a noisy version of $\boldsymbol{f}_A$ with $\mathbb{P}[\boldsymbol{y}_A | \boldsymbol{f}_A]$. The Maximum Mutual Information after $t$ noisy observations is defined as
    \begin{equation*}
        \gamma_t (f, \mathcal{X}) := \max_{A \subset \mathcal{X} : |A| = t} I(\boldsymbol{f}_A ; \boldsymbol{y}_A),
    \end{equation*}
    where $I(\boldsymbol{f}_A ; \boldsymbol{y}_A)$ denotes the mutual information between $\boldsymbol{f}_A$ and $\boldsymbol{y}_A$.
\end{definition}

Using the Maximum Mutual Information, the following corollary shows that the Gaussian Process satisfies \cref{assumption:WellCalibratedModel} under \cref{assumption:RKHS} and \cref{assumption:Kernel}.
\begin{corollary}
    \label{corollary:ConfidenceInterval}
    Under \cref{assumption:RKHS} and \cref{assumption:Kernel}, with probability $1-\delta$ for all $t \geq 0$ and $\jointState{}{} \in \jointStateSpace{}$
    \begin{equation}
        \norm{f(\jointState{}{}) - \modelMean{t} (\jointState{}{})}_2 \leq \modelScale{t} \norm{\modelStd{t} (\jointState{}{})}_2,
        \label{eq:ConfidenceInterval}
    \end{equation}
    where
    \begin{equation*}
        \modelScale{t} =
        B_f + \frac{\sigma}{\sqrt{\lambda}} \sqrt{2 (\log (1/\delta) +  \gamma_{ptH}(k, \jointStateSpace{} \times [p]))}.
    \end{equation*}
\end{corollary}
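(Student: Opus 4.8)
The plan is to reduce \cref{eq:ConfidenceInterval} to the scalar RKHS concentration result of \cref{lemma:Chowdhury2019ConcentrationofanRKHSmember} and then aggregate the resulting per-coordinate guarantees into the stated Euclidean-norm bound. Since $f: \jointStateSpace{} \to \stateSpace{} \subseteq \mathbb{R}^{p}$ is vector-valued while the lemma concerns scalar functions, I would work on the augmented domain $\mathcal{X} = \jointStateSpace{} \times [p]$ used to define the GP in \cref{appendix:GaussianProcess}, viewing $f(\jointState{}{}, j) := [f(\jointState{}{})]_{j}$ as a single scalar function in $\mathcal{H}_k(\mathcal{X})$. The key bookkeeping step is to count observations correctly: each of the $t$ episodes contributes $H$ transitions, and each transition yields $p$ scalar observations $[\state{i,h+1}{}]_{l} = f(\jointState{i,h}{}, l) + [\noise{i,h}{}]_{l}$ on $\mathcal{X}$, so after $t$ episodes the agent has collected exactly $N = ptH$ scalar samples, which is precisely the index appearing in $\gamma_{ptH}$.

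Next I would verify the hypotheses of \cref{lemma:Chowdhury2019ConcentrationofanRKHSmember}. By \cref{assumption:RKHS}, $f \in \mathcal{H}_k(\mathcal{X})$ with $\norm{f}_k \leq B_f$, and each noise coordinate is $\sigma$-sub-Gaussian, so I set the lemma's sub-Gaussian parameter to $R = \sigma$ and its regularizer to the GP parameter $\lambda$. Applying the lemma gives, with probability at least $1-\delta$, a bound holding jointly over all $t \geq 0$ and all points of $\mathcal{X}$; crucially, uniformity over $x \in \mathcal{X}$ already ranges over every output index $j \in [p]$, so no additional union bound over coordinates is needed and the confidence level stays $1-\delta$. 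Bounding $\norm{f}_k \leq B_f$ then yields the elementwise inequality $|f(\jointState{}{}, j) - \modelMean{t}(\jointState{}{}, j)| \leq \widetilde{\beta}_t \, \modelStd{t}(\jointState{}{}, j)$ for every $j$, where $\widetilde{\beta}_t = B_f + \tfrac{\sigma}{\sqrt{\lambda}}\sqrt{2(\log(1/\delta) + \tfrac{1}{2}\sum_{s=1}^{N}\log(1 + \lambda^{-1}\modelStd{s-1}^2(x_s)))}$.

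Finally, I would bound the information-theoretic sum by the maximum mutual information of \cref{definition:MaximumInformationGain}: for a GP one has $\tfrac{1}{2}\sum_{s=1}^{N}\log(1+\lambda^{-1}\modelStd{s-1}^2(x_s)) = I(\boldsymbol{f}_A;\boldsymbol{y}_A)$ for the observed set $A$ of size $N = ptH$, and this is at most $\gamma_{ptH}(k, \jointStateSpace{} \times [p])$ by definition, so $\widetilde{\beta}_t \leq \modelScale{t}$ as stated. Aggregating over coordinates, I square the elementwise bounds and sum over $j$ to obtain $\norm{f(\jointState{}{}) - \modelMean{t}(\jointState{}{})}_2^2 = \sum_{j=1}^{p} (f(\jointState{}{}, j) - \modelMean{t}(\jointState{}{}, j))^2 \leq \modelScale{t}^2 \sum_{j=1}^{p} \modelStd{t}^2(\jointState{}{}, j) = \modelScale{t}^2 \norm{\modelStd{t}(\jointState{}{})}_2^2$, and taking square roots proves \cref{eq:ConfidenceInterval}. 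The main obstacle is the middle step: correctly handling the multi-output structure through index augmentation so that (i) the observation count becomes $ptH$, which propagates into the $\gamma_{ptH}$ argument, and (ii) the uniform-in-$x$ guarantee delivers all $p$ coordinate bounds simultaneously without degrading the confidence level. Everything else is the standard translation of the sum of sequential posterior log-variances into the mutual information $\gamma_{ptH}$ together with the routine $\ell_2$ aggregation.
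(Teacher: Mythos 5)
Your proposal is correct and follows essentially the same route as the paper: the paper's proof is a one-line citation of Lemma 10 in \cite{chowdhury2019online}, which is exactly the multi-output extension you reconstruct by applying the scalar concentration result (\cref{lemma:Chowdhury2019ConcentrationofanRKHSmember}) on the augmented domain $\jointStateSpace{} \times [p]$, bounding the sum of posterior log-variances by $\gamma_{ptH}$, and aggregating coordinates in $\ell_2$. Your bookkeeping of the observation count $ptH$ and the observation that uniformity over $x \in \jointStateSpace{} \times [p]$ already covers all output coordinates without a further union bound are both consistent with the cited result.
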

\begin{proof}
    This corollary follows from Lemma 10 in \cite{chowdhury2019online}.
\end{proof}

Additionally, the GP also satisfies \cref{assumption:LipschitzModelStd} under \cref{assumption:RKHS} and \cref{assumption:Kernel}. \cref{lemma:sigmaLipschitz} follows from Lemma 12 from \cite{curi2020efficient}

\begin{lemma}
    \label{lemma:sigmaLipschitz}
    For all $\jointState{}{}$ and $\jointState{}{}'$ in $\jointStateSpace{}$ and all $t \geq 0$, we have
    \begin{equation*}
        \norm{ \modelStd{t} (\jointState{}{}) - \modelStd{t} (\jointState{}{}')}_2 \leq d_k(\jointState{}{}, \jointState{}{}')
    \end{equation*}
    where $d_k$ is the kernel metric defined in \cref{assumption:Kernel}.
\end{lemma}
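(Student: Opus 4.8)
The plan is to exploit the fact that, for the Gaussian Process specified in \cref{eq:GPPosteriorMean,eq:GPPosteriorVariance}, the posterior standard deviation is the norm of a genuine feature vector, so its variation is controlled by the metric of a \emph{posterior} kernel, which in turn is dominated by the prior kernel metric $d_k$. I would work one output coordinate $j \in [p]$ at a time, writing $x = (\jointState{}{}, j)$ for the augmented input, $A := \boldsymbol{K}_{t} + \lambda \boldsymbol{I}_{t}$ (the strictly positive definite Gram-plus-noise matrix after $t$ rounds of data), and introducing the posterior covariance kernel
\[
    \tilde k_t(x, x') := k(x, x') - \boldsymbol{k}_{t}(x)^\intercal A^{-1} \boldsymbol{k}_{t}(x'),
\]
so that by \cref{eq:GPPosteriorVariance} we have $\modelStd{t}^2(\jointState{}{}, j) = \tilde k_t(x,x)$, i.e. $\modelStd{t}(\jointState{}{}, j) = \sqrt{\tilde k_t(x,x)}$.

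First I would establish that $\tilde k_t$ is itself a positive semi-definite kernel. This follows from a Schur-complement argument: for any finite set of augmented test inputs, the joint Gram matrix of $k$ over the training and test inputs, with $\lambda \boldsymbol{I}$ added on the training block, is positive semi-definite, and the matrix $[\tilde k_t(x_i, x_l)]_{i,l}$ is exactly its Schur complement with respect to the block $A = \boldsymbol{K}_t + \lambda \boldsymbol{I}_t \succ 0$; hence it is PSD. (Equivalently, $\tilde k_t$ is the conditional covariance function of the GP and so is PSD by construction.) Consequently there is a feature map $\tilde\phi_t$ into a Hilbert space with $\tilde k_t(x,x') = \langle \tilde\phi_t(x), \tilde\phi_t(x') \rangle$ and $\modelStd{t}(\jointState{}{}, j) = \norm{\tilde\phi_t(x)}_2$.

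The second step is the reverse triangle inequality in that feature space,
\[
    |\modelStd{t}(\jointState{}{}, j) - \modelStd{t}(\jointState{}{}', j)| = \big| \norm{\tilde\phi_t(x)}_2 - \norm{\tilde\phi_t(x')}_2 \big| \leq \norm{\tilde\phi_t(x) - \tilde\phi_t(x')}_2 = \sqrt{\tilde k_t(x,x) + \tilde k_t(x',x') - 2\tilde k_t(x,x')},
\]
and the third step bounds this posterior metric by the prior one. Expanding the definition of $\tilde k_t$ and collecting terms gives the clean identity
\[
    \tilde k_t(x,x) + \tilde k_t(x',x') - 2\tilde k_t(x,x') = d_k^2(x,x') - \big(\boldsymbol{k}_t(x) - \boldsymbol{k}_t(x')\big)^\intercal A^{-1} \big(\boldsymbol{k}_t(x) - \boldsymbol{k}_t(x')\big),
\]
where $d_k^2(x,x') = k(x,x) + k(x',x') - 2k(x,x')$. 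Since $A^{-1} \succeq 0$, the subtracted quadratic form is nonnegative, so $|\modelStd{t}(\jointState{}{}, j) - \modelStd{t}(\jointState{}{}', j)| \leq d_k((\jointState{}{}, j), (\jointState{}{}', j))$. Squaring, summing over the $p$ output coordinates, and taking the square root then yields $\norm{\modelStd{t}(\jointState{}{}) - \modelStd{t}(\jointState{}{}')}_2 \leq d_k(\jointState{}{}, \jointState{}{}')$, with $d_k$ on the right read as the aggregate kernel metric over output coordinates consistent with \cref{assumption:Kernel}. The boundary case $t=0$ is the same argument with no data, where $\tilde k_0 = k$ and the claim reduces to the reverse triangle inequality in $\mathcal{H}_k$.

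I expect the main obstacle to be the two ingredients in the middle: the positive semi-definiteness of the posterior kernel $\tilde k_t$ (so that $\tilde\phi_t$ genuinely exists and the reverse triangle inequality applies) together with the collecting-of-terms identity that exhibits the posterior metric as the prior metric minus a nonnegative form. Once those are in place, the reverse triangle inequality and the coordinate aggregation are routine. The only additional subtlety worth handling carefully is the bookkeeping over the $p$ output dimensions, namely reconciling the per-coordinate kernel metric $d_k((\jointState{}{},j),(\jointState{}{}',j))$ with the metric $d_k(\jointState{}{}, \jointState{}{}')$ on $\jointStateSpace{}$ appearing in the statement.
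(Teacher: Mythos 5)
Your proof is correct and is essentially the argument the paper relies on: the paper does not prove this lemma itself but defers to Lemma 12 of \cite{curi2020efficient}, whose proof is exactly your chain of (i) positive semi-definiteness of the posterior covariance $\tilde k_t$ via the Schur complement of $\boldsymbol{K}_t+\lambda\boldsymbol{I}_t$, (ii) the reverse triangle inequality in the associated feature space, and (iii) the identity exhibiting the posterior kernel metric as the prior metric minus the nonnegative quadratic form $(\boldsymbol{k}_t(x)-\boldsymbol{k}_t(x'))^\intercal A^{-1}(\boldsymbol{k}_t(x)-\boldsymbol{k}_t(x'))$. The one loose end you flag --- reconciling the per-output-coordinate metric $d_k((\jointState{}{},j),(\jointState{}{}',j))$ with the aggregate $d_k(\jointState{}{},\jointState{}{}')$ appearing in the statement --- is a notational ambiguity already present in \cref{assumption:Kernel} of the paper (which defines $d_k$ on $\jointStateSpace{}$ while the kernel lives on $\jointStateSpace{}\times[p]$), and your handling of it (summing the squared per-coordinate bounds, at worst costing a constant absorbed into $L_d$) is appropriate.
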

We note that \cref{lemma:sigmaLipschitz} and \cref{assumption:Kernel} immediately imply that $\modelStd{t}$ satisfies \cref{assumption:LipschitzModelStd} with Lipschitz constant $L_d$ for all $t \geq 0$.

\subsection{Regret Bound}
Now, we are ready to show the regret bound under the choice of a Gaussian Process model.

First, we recall a partial result in Eq. (39) from Lemma 11 in \cite{chowdhury2019online} stating that under \cref{assumption:RKHS} and \cref{assumption:Kernel}
\begin{equation}
    \label{eq:ChodhuryOnlineLemma11Partial}
    \sum_{t=1}^{T} \sum_{\jointState{}{} \in \tilde{D}_{1} \cup \dots \cup \tilde{D}_{t-1} } \norm{\modelStd{t-1}(\jointState{}{})}_{2}^{2} \leq
    2 e p H \gamma_{pHT}(k , \jointStateSpace{} \times [p])
\end{equation}
where $\tilde{D}_{t}$ is an arbitrary observation sets during episode $t$ for $t=1,2,3, \dots$.

\setcounter{theorem}{1}
\begin{theorem}
    \label{thm:GPRegretBound}
    Let $\overline{L}_{T-1} = 1 + 2 (1 + L_{\policy{}{}})\left[
         L_f + 2 \modelScale{T-1} L_{\modelStd{}}
        \right]$,
    and let $\stateDist{t,h}{} \in \stateDistSpace{}$ for all $t,h>0$. Then, under \cref{assumption:LipschitzPolicyAndReward}, \cref{assumption:RKHS}, and \cref{assumption:Kernel} and using a Gaussian Process as described in \cref{appendix:GaussianProcess}, for all $T \geq 1$ and fixed constant $H>0$, with probability at least $1-\delta$, the regret of \mfhucrl is at most:\looseness=-1
    \begin{equation}
        \label{eq:GPRegretBound}
        R_T \leq \mathcal{O} \Bigg( 
        \modelScale{T-1} L_r (1+L_{\policy{}{}}) \overline{L}_{T-1}^{H-1} H^2 \sqrt{T \gamma_{pHT} (k, \jointStateSpace{} \times [p])}
        \Bigg).
    \end{equation}
\end{theorem}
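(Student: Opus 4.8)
The plan is to derive \cref{thm:GPRegretBound} as a direct specialization of the general bound in \cref{thm:main}: I would first verify that the Gaussian Process of \cref{appendix:GaussianProcess} meets the abstract model requirements, and then substitute a kernel-specific bound on the complexity measure $I_{T}$.

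First I would confirm that the two structural model assumptions needed by \cref{thm:main}, namely the calibration property (\cref{assumption:WellCalibratedModel}) and the Lipschitz continuity of the confidence function (\cref{assumption:LipschitzModelStd}), hold for the chosen GP on a high-probability event. Calibration is exactly \cref{corollary:ConfidenceInterval}, which establishes that with probability at least $1-\delta$, jointly over all $t$ and $\jointState{}{} \in \jointStateSpace{}$, one has $\norm{f(\jointState{}{}) - \modelMean{t}(\jointState{}{})}_{2} \leq \modelScale{t} \norm{\modelStd{t}(\jointState{}{})}_{2}$ with the explicit confidence sequence $\modelScale{t} = B_{f} + \frac{\sigma}{\sqrt{\lambda}}\sqrt{2(\log(1/\delta) + \gamma_{ptH}(k, \jointStateSpace{} \times [p]))}$. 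The Lipschitz property follows from \cref{lemma:sigmaLipschitz} combined with \cref{assumption:Kernel}, which together yield $L_{\modelStd{}} = L_{d}$. Since \cref{assumption:RKHS} and \cref{assumption:Kernel} additionally imply the Lipschitz continuity of $f$ (\cref{assumption:LipschitzDynamics}), all of \crefrange{assumption:LipschitzDynamics}{assumption:LipschitzModelStd} are in force, and \cref{thm:main} applies verbatim on this event.

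Next I would invoke \cref{thm:main} to obtain $R_{T} = \mathcal{O}\big(\modelScale{T-1} L_{r}(1+L_{\policy{}{}}) \overline{L}_{T-1}^{H-1} \sqrt{H^{3} T I_{T}}\big)$ and then bound $I_{T}$ through the maximum mutual information. The crucial ingredient is \cref{eq:ChodhuryOnlineLemma11Partial}, which states that for \emph{any} admissible sets $\tilde{D}_{1}, \dots, \tilde{D}_{T} \subset \jointStateSpace{H}$ one has $\sum_{t=1}^{T} \sum_{\jointState{}{} \in \tilde{D}_{t}} \norm{\modelStd{t-1}(\jointState{}{})}_{2}^{2} \leq 2 e p H \gamma_{pHT}(k, \jointStateSpace{} \times [p])$. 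Because this estimate is uniform over the choice of the sets, taking the maximum in the definition \cref{eq:complexityMeasure} gives $I_{T} \leq 2 e p H \gamma_{pHT}(k, \jointStateSpace{} \times [p])$.

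Finally, substituting this estimate yields $\sqrt{H^{3} T I_{T}} \leq \sqrt{2 e p}\, H^{2} \sqrt{T \gamma_{pHT}(k, \jointStateSpace{} \times [p])}$, and absorbing the constant $\sqrt{2 e p}$ (with $p$ fixed) into the $\mathcal{O}(\cdot)$ notation produces precisely \cref{eq:GPRegretBound}. I do not expect a genuine obstacle here, since the analytical heart of the result already resides in \cref{thm:main}; the only point demanding care is the bookkeeping consistency of the regularization parameter $\lambda = pH$ between the calibration constant $\modelScale{t}$ of \cref{corollary:ConfidenceInterval} and the information-gain bound \cref{eq:ChodhuryOnlineLemma11Partial}, together with verifying that the index $pHT$ of the maximum mutual information correctly accounts for the $p$ output coordinates observed over $H$ steps across $T$ episodes.
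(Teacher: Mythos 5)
Your proposal is correct and follows essentially the same route as the paper: verify that \cref{assumption:RKHS} and \cref{assumption:Kernel} imply \crefrange{assumption:LipschitzDynamics}{assumption:LipschitzModelStd} via \cref{corollary:ConfidenceInterval} and \cref{lemma:sigmaLipschitz}, apply \cref{thm:main}, and substitute the bound \cref{eq:ChodhuryOnlineLemma11Partial} on $I_{T}$. The final bookkeeping $\sqrt{H^{3}T\cdot 2epH\gamma_{pHT}} = \sqrt{2ep}\,H^{2}\sqrt{T\gamma_{pHT}}$ matches the stated bound, so nothing is missing.
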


\begin{proof}
    As noted before, \cref{assumption:RKHS} and \cref{assumption:Kernel} imply \cref{assumption:LipschitzDynamics}, \cref{assumption:WellCalibratedModel}, and \cref{assumption:LipschitzModelStd}, hence, by \cref{thm:main}, with probability at least $1-\delta$
    \begin{align*}
        R_T &= \mathcal{O} \Big(
        \modelScale{T} L_{r} (1+L_{\policy{}{}}) \overline{L}_{T-1}^{H-1} \sqrt{H^{3} T I_{T}}
        \Big) \\
        &=\mathcal{O} \Bigg(
        \modelScale{T} L_{r} (1+L_{\policy{}{}}) \overline{L}_{T-1}^{H-1} \sqrt{H^{3} T \max_{\tilde{D}_{1}, \dots, \tilde{D}_{T}} \sum_{t=1}^{T} \sum_{\jointState{}{} \in \tilde{D}_{1} \cup \cdots \cup \tilde{D}_{t} } \norm{\modelStd{t-1}(\jointState{}{})}_{2}^{2}
        }
        \Bigg),
    \end{align*}
    where $\overline{L}_{t-1} = 1 + 2 (1 + L_{\policy{}{}})\left[
         L_f + 2 \modelScale{t-1} L_{\modelStd{}}
        \right]$. Then we can substitute in \cref{eq:ChodhuryOnlineLemma11Partial} to reach the desired result.
    
\end{proof}

\cref{thm:GPRegretBound} shows that the sublinearity of our regret depends on the Maximum Mutual Information rates, in particular, for sublinear MMI rate the regret is sublinear. \cite{srinivas2010gaussian} consider the most commonly used kernels, and obtain the following sublinear rates for $D \subset \mathbb{R}^d$: Linear kernels $\gamma_t (f, D) = \mathcal{O}(d \log t)$, Squared Exponential kernels $\gamma_t (f, D) = \mathcal{O}((\log t)^{d+1})$, and Matérn kernels $\gamma_t (f, D) = \mathcal{O}(t^{d(d+1) / (2 \alpha + d(d+1))}(\log t))$ with $\alpha > 1$. Furthermore, \cite{Krause2011ContextualOptimization} show the sublinear property of specific composite kernels. We note that this line of work provides large freedom for kernel design on Euclidean spaces, however, the state space in our problem includes the mean-field distribution space $\stateDistSpace{}$, therefore, the results are not readily applicable. Proving similar bounds on probability measure spaces is a promising direction for future works.
\section{Experiment Implementation}
\label{appendix:experimentImplementation}

\subsection{Hallucinated Control Implementation}
\label{appendix:HallucinatedControl}

In \cref{section:algorithm}, we introduced \mfhucrl, an algorithm that optimizes over the set of plausible dynamics $\mathcal{F}_{t-1}$ and admissible policies $\Pi$ (see \cref{eq:UCBOptimization}). However, as noted in the paper, optimizing over $\mathcal{F}_{t-1}$ is intractable in most of the cases. In this sections, we describe a practical and equivalent reformulation of \cref{eq:UCBOptimization} that helps parametrizing the problem and enables using gradient based optimization to find $\policyProfile{t}{}$ at every episode $t$.
First, we introduce an auxiliary function $\modelEta{}: \jointStateSpace{} \to [-1, 1]^p$  (as in~\cite{moldovan2015optimism,curi2020efficient}) where $p$ is the dimensionality of the state space $\stateSpace{}$ and define the following hypothetical dynamics model:
\begin{equation}
     \approxDynamics{t-1}(\jointState{}{}) = \modelMean{t-1}(\jointState{}{}) + \modelScale{t-1}\modelStdMatrix{t-1}(\jointState{}{}) \modelEta{} (\jointState{}{}).\label{eq:HallucinatedTransitionFunction}
\end{equation}
Conditioned on the event from \cref{assumption:WellCalibratedModel} holding true, $\approxDynamics{t-1}$ for any $\modelEta{}$ is calibrated, i.e., $\approxDynamics{t-1} \in \mathcal{F}_{t-1}$. Furthermore, by the confidence interval in \cref{assumption:WellCalibratedModel}, every function in $\mathcal{F}_{t-1}$ can be written in the form of \cref{eq:HallucinatedTransitionFunction}, i.e.,
\begin{equation*}
    \forall \approxDynamics{t-1} \in \mathcal{F}_{t-1}, \quad \exists \modelEta{}: \jointStateSpace{} \to [-1, 1]^{p} \quad \text{ such that } \quad \approxDynamics{t-1}(\jointState{}{}) = \modelMean{t-1}(\jointState{}{}) + \modelScale{t-1}\modelStdMatrix{t-1}(\jointState{}{}) \modelEta{} (\jointState{}{}) \quad \forall \jointState{}{} \in \jointStateSpace{}.
\end{equation*}
Therefore, we can reformulate \cref{eq:UCBOptimization} as an optimization over the set of admissible policies $\Pi$ and auxiliary function $\modelEta{}: \jointStateSpace{} \to [-1, 1]^{p}$ as
\begin{subequations}
    \label{eq:HallucinatedUCBOptimization}
    \begin{align}
        \policyProfile{t}{} &= \argmax_{\policyProfile{}{} \in \Pi} \max_{\modelEta{}{}: \jointStateSpace{} \to [-1, 1]^{p}} \mathbb{E}
        \left[
            \sum_{h=0}^{H-1} r(\OptState{t,h}, \OptAction{t,h}, \OptStateDist{t,h})
        \right] \label{eq:HallucinatedUCBObjective}\\
        &\text{s.t.}\quad \OptAction{t,h} = \policy{t,h}{}(\OptState{t,h}, \OptStateDist{t,h}), \quad  \OptJointState{t,h}{} = (\OptJointState{t,h}{}), \\
        & \quad \approxDynamics{t-1}(\OptJointState{t,h}{}) = \modelMean{t-1}(\OptJointState{t,h}{}) + \modelScale{t-1}\modelStdMatrix{t-1}(\OptJointState{t,h}{}) \modelEta{} (\OptJointState{t,h}{}), \\
        &\quad \; \OptState{t,h+1} = \approxDynamics{t-1}(\OptJointState{t,h}{}) + \OptNoise{t,h}, \\
        &\quad \;\OptStateDist{t,h+1} = \Phi(\OptStateDist{t,h}, \policy{t,h}{}, \approxDynamics{t-1}).
    \end{align}
\end{subequations}
We note that for a fixed policy $\policy{}{}$, the auxiliary function $\modelEta{}(\cdot)$ can be rewritten as $\modelEta{}(\state{}{}, \action{}{}, \stateDist{}) = \modelEta{}(\state{}{}, \policy{}{}(\state{}{}, \stateDist{}), \stateDist{}) = \modelEta{}(\state{}{}, \stateDist{})$. In essence, this turns the function $\modelEta{}(\cdot)$ into an additional policy that exerts ``hallucinated'' control over the set of plausible models $\mathcal{F}$ \citep{curi2020efficient}.\footnote{We remark that the introduction of $\modelEta{}(\cdot)$ policy is a simple trick used before in \cite{moldovan2015optimism} and \cite{curi2020efficient} that is suitable for practical implementation of model-based reinforcement learning algortihms.}
\looseness=-1

Taking expectation in \cref{eq:HallucinatedUCBObjective} still hinders the practicality of the algorithm, however, we note that \cref{corollary:measureFormulation} applies to \cref{eq:HallucinatedUCBOptimization} and the following optimization problem is equivalent,
\begin{subequations}
\label{eq:HallucinatedUCBOptimizationMeasure}
    \begin{align}
        \label{eq:HallucinatedUCBObjectiveMeasure}
        \policyProfile{t}{} &= \argmax_{\policyProfile{}{} \in \Pi} \max_{\modelEta{}{}: \jointStateSpace{} \to [-1, 1]^{p}} \sum_{h=0}^{H-1}\hat{r}(\OptStateDist{h}, \policy{t, h}{}) \\
        \text{s.t. } \approxDynamics{t-1}(\OptJointState{t,h}{}) &= \modelMean{t-1}(\OptJointState{t,h}{}) + \modelScale{t-1}\modelStdMatrix{t-1}(\OptJointState{t,h}{}) \modelEta{} (\OptJointState{t,h}{}), \\
        \OptStateDist{t,h+1} &= \Phi(\OptStateDist{t,h}, \policy{t, h}{}, \approxDynamics{t-1})
    \end{align}
\end{subequations}
where $\Phi$ is defined in \cref{eq:flowdynamics} and $\hat{r}$ is defined in \cref{eq:integratedReward} as
\begin{equation*}
    \hat{r}(\stateDist{}, \policy{}{}) = \int_{\stateSpace{}} \stateDist{}(d\state{}{}) r(\state{}{}, \pi(\state{}{}, \stateDist{}), \stateDist{}).
\end{equation*}

Reformulating the optimization problem as in \cref{eq:HallucinatedUCBOptimizationMeasure} and choosing parametrizable functions, e.g., Neural Networks, for $\policyProfile{}{}$, $\modelEta{}$, and the statistical estimators $\modelMean{t-1}$ and $\modelStdMatrix{t-1}$ allow using gradient ascent to find an optima for \cref{eq:HallucinatedUCBObjectiveMeasure}. Even though running gradient ascent does not guarantee the discovery of global optimum, we found in our experiments (see in \cref{section:experiments}) that it still provides a close approximation. We provide further details on the implementation of the transition function $\Phi$ in \cref{appendix:ImplementationTransitionFunctionApproximation}

\subsection{Transition Function Implementation}
\label{appendix:ImplementationTransitionFunctionApproximation}
The main challenges of implementing the mean-field distribution function $\Phi$ from \cref{eq:flowdynamics} are representing the mean-field distributions and taking the integral over the state space $\stateSpace{}$. For our experiments, we discretize the state-space $\stateSpace{}$ with $104$ unit-sized cells. We denote these cells by $C_{ij}$ for $i,j \in \{0,\dots,10\}$ where $C_{ij}$ denotes the cell $[i,i+1) \times [j,j+1)$. Note that cells corresponding to walls (see \cref{fig:initial_distribution}), e.g., $C_{5,0}$, $C_{5,1}$, or $C_{55}$, are not part of the state-space. For each episode $t$ and time-step $h$, the $i,j$ entry of $\stateDist{t,h}$ represents the probability that the representative agent lies in that cell, i.e., $[\stateDist{t,h}]_{i,j} = \mathbb{P}[\state{t,h}{} \in [i,i+1) \times [j,j+1)]$. Furthermore, we denote the middle of each cell $C_{i,j}$ by $c_{i,j}$, i.e., $c_{i,j} = (i+0.5, j+0.5)$.

The transition function $\Phi$ is then implemented as follows, for every $i,j \in \{0,\dots,10\}$ (for which $C_{ij}$ does not correspond to a wall), every episode $t=1,2,\dots$, and every time-step $h=0,\dots,19$

\begin{align*}
    [\stateDist{t,h+1}]_{i,j} = \sum_{k,l} \mathbb{P}[f(c_{k,l}, \policy{t,h}{}(c_{k,l}, \stateDist{t,h}), \stateDist{t,h}) + \noise{t,h}{} \in C_{i,j}] [\stateDist{t,h}]_{k,l}
\end{align*}
We assume that the noise term $\noise{t,h}{}$ is Gaussian with $0$ mean and $\modelStd{}^2$ variance and the two dimensions are independent, therefore,
\begin{align*}
    & \mathbb{P}[f(c_{k,l}, \policy{t,h}{}(c_{k,l}, \stateDist{t,h}), \stateDist{t,h}) + \noise{t,h}{} \in C_{i,j}] \\
    &= \mathbb{P}[f(c_{k,l}, \policy{t,h}{}(c_{k,l}{}, \stateDist{t,h}), \stateDist{t,h})_{1} + \noise{t,h,1}{} \in [i, i+1)] \times \mathbb{P}[f(c_{k,l}, \policy{t,h}{}(c_{k,l}, \stateDist{t,h}), \stateDist{t,h})_{2} + \noise{t,h,2}{} \in [j, j+1)] \\
    &= (\phi(i - f(c_{k,l}, \policy{t,h}{}(c_{k,l}{}, \stateDist{t,h}), \stateDist{t,h})_{1}) - \phi(i + 1 - f(c_{k,l}, \policy{t,h}{}(c_{k,l}{}, \stateDist{t,h}), \stateDist{t,h})_{1})) \\
    & \quad \quad \times (\phi(j - f(c_{k,l}, \policy{t,h}{}(c_{k,l}{}, \stateDist{t,h}), \stateDist{t,h})_{2}) - \phi(j + 1 - f(c_{k,l}, \policy{t,h}{}(c_{k,l}{}, \stateDist{t,h}), \stateDist{t,h})_{2}))
\end{align*}
where $\phi$ is the cumulative distribution function of $N(0, \modelStd{}^2)$.

In the implementation, we also redistribute some of the probability mass during transition based on the walls location. If there is a wall between $C_{i,j}$ and $f(c_{k,l}, \policy{t,h}{}(c_{k,l}, \stateDist{t,h}), \stateDist{t,h}) + \noise{t,h}{}$, we set $\mathbb{P}[f(c_{k,l}, \policy{t,h}{}(c_{k,l}, \stateDist{t,h}), \stateDist{t,h}) + \noise{t,h}{} \in C_{i,j}] = 0$. Similarly, for $C_{i,j}$ adjacent to a wall or border we increase $\mathbb{P}[f(c_{k,l}, \policy{t,h}{}(c_{k,l}, \stateDist{t,h}), \stateDist{t,h}) + \noise{t,h}{} \in C_{i,j}]$ by the amount that is set to zero due to the adjacent wall.

\subsection{Transition Model estimation}
To estimate the unknown transition function $f$ in \cref{section:experiments}, we use deep ensemble models \citep{lakshminarayanan2016simple}. In particular, we use an ensemble of 10 feed-forward Neural Networks (NNs) with one hidden layer of size 32 and leaky ReLu action functions. We use two output layers joined to the hidden middle layer. The first one uses linear activation and returns the mean of the function while the second returns the estimated variance using softplus activation. We follow the optimisation procedure described in \citep{lakshminarayanan2016simple} that minimizes the negative log-likelihood for each NN under the assumption of heteroscedastic Gaussian noise. We included the adversarial training procedure as well for robustness and smoothing. In prediction time, the ensemble estimates the mean and variance of the unknown function via the empirical mean and variance of the Neural Networks' mean outputs.

\subsection{Policy optimisation with known environment}
\label{appendix:bptt_known}

We consider the policy optimised via back-propagation through time, $\policyProfile{}{BPTT}$, as a reasonable point-of-reference for our convergence analysis as it successfully disperses the population in the state space quickly without any notable improvements in its strategy.

\begin{figure*}
    \begin{subfigure}[b]{0.5\textwidth}
        \centering
        \includegraphics[width=\textwidth]{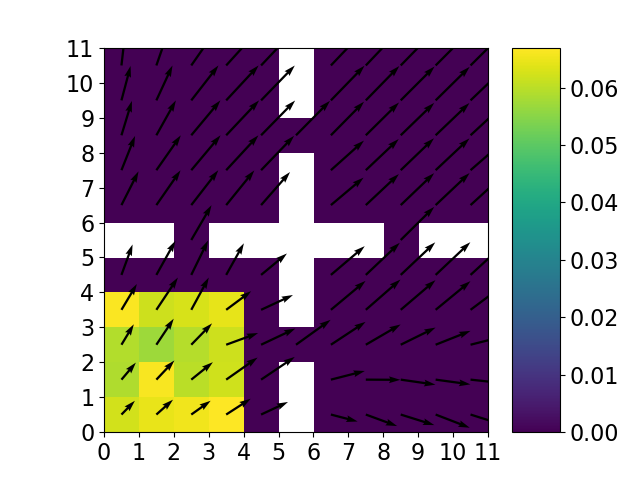}
        \caption{Mean-field distribution under the BPTT optimised policy at time-step $h=3$.}
        \label{fig:bptt_time_3}
    \end{subfigure}
    \begin{subfigure}[b]{0.5\textwidth}
        \centering
        \includegraphics[width=\textwidth]{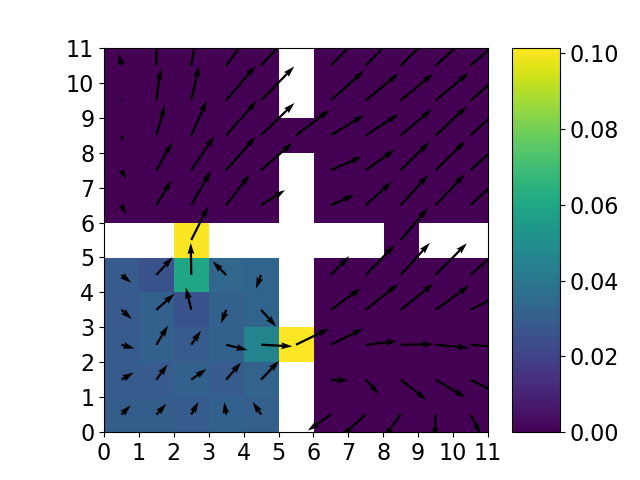}
        \caption{Mean-field distribution under the BPTT optimised policy at time-step $h=5$.}
        \label{fig:bptt_time_5}
    \end{subfigure}
    \begin{subfigure}[b]{0.5\textwidth}
        \centering
        \includegraphics[width=\textwidth]{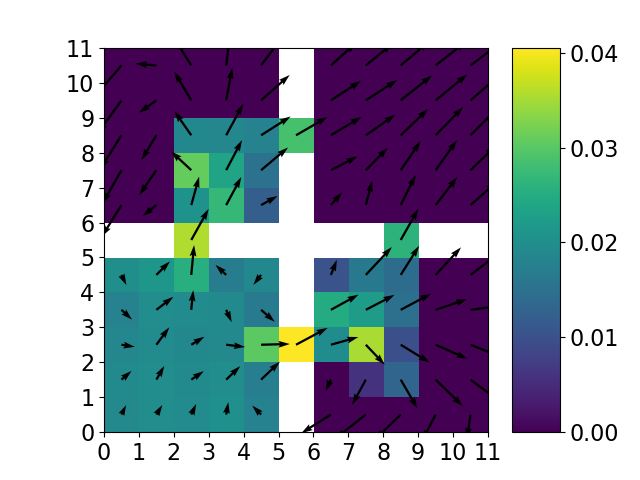}
        \caption{Mean-field distribution under the BPTT optimised policy at time-step $h=8$.}
        \label{fig:bptt_time_8}
    \end{subfigure}
    \begin{subfigure}[b]{0.5\textwidth}
        \centering
        \includegraphics[width=\textwidth]{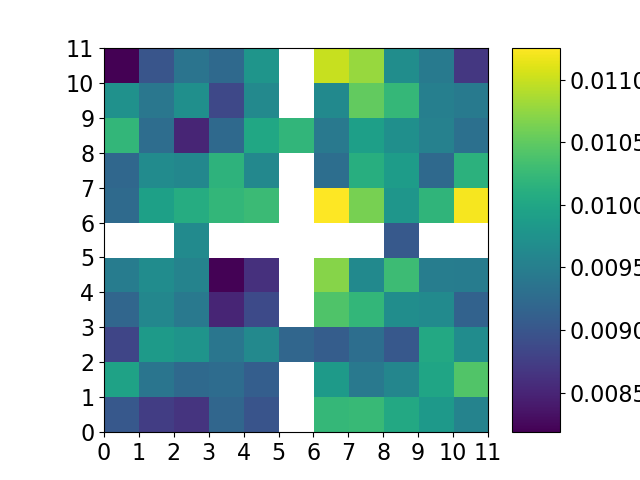}
        \caption{Mean-field distribution under the BPTT optimised policy at time-step $h=20$.}
        \label{fig:bptt_time_20}
    \end{subfigure}
    \caption{Mean-field distribution following the policy optimised via BPTT at four time-steps ($h=3,5,8,20$) in a single policy roll-out. Arrows depict the actions taken by the agents in the corresponding squares. Note that the coloring range changes with the time-steps.}
 \label{fig:bptt_distributions}
 \vspace{-5pt}
\end{figure*}

\cref{fig:bptt_distributions} shows the mean-field distribution over one episode following $\policyProfile{}{BPTT}$ \footnote{The supplementary material includes animation of the whole episode named as \texttt{known\_dynamics\_animation.mp4}}. As shown on \cref{fig:bptt_time_3}, in the first few steps $\policyProfile{}{BPTT}$ spreads the distribution evenly in the bottom-left corner, however, in the next two steps as shown on \cref{fig:bptt_time_5} it concentrates a larger mass into the corridors to the neighbouring rooms. It is an optimal choice of action because the corridors act as bottlenecks hence the more it can push through at once the easier it will be to achieve a uniform distribution in the latter time-steps. In the next $3$ steps until $h=8$, $\policyProfile{}{BPTT}$ further propagates a larger portion of the population mass to the empty top-right room. In the meanwhile, it starts to distribute the population in the top-left and bottom-right rooms as well. In the following time-steps, $\policyProfile{}{BPTT}$ focuses on achieving an even distribution within the rooms without pushing significant population mass through the corridors. Finally, \cref{fig:bptt_time_20} shows the mean-field distribution at the end of the episode, $h=20$, when the population is distributed uniformly and maximized its entropy. We provide a further animation with every time-steps as part of the supplementary material.

\begin{figure*}
    \centering
    \includegraphics[width=0.8\textwidth]{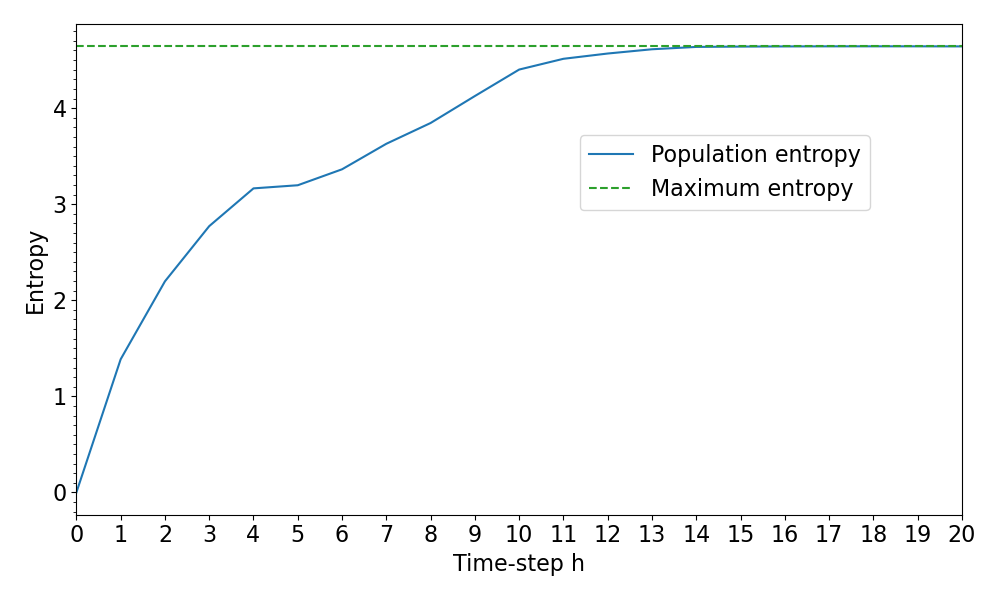}
    \caption{Reward and population action size per time-step in one episode. The policy followed by the agents were optimised via BPTT.}
 \label{fig:bptt_convergence}
 \vspace{-5pt}
\end{figure*}

\cref{fig:bptt_convergence} shows the mean-field distribution entropy over the time-steps of the same episode as in \cref{fig:bptt_distributions}. $\policyProfile{}{BPTT}$ successfully increases the entropy until time-step $14$ where it reaches the maximum achievable by the uniform distribution and then maintains this level. The small stagnation in entropy from $h=4$ to $h=5$ is due to the bottleneck of the corridors as shown on \cref{fig:bptt_time_5}.

\section{Additional Experiments}
\subsection{Comparison to model-free reinforcement learning}
\label{appendix:comparison}

\begin{figure*}
    \centering
    \includegraphics[width=0.8\textwidth]{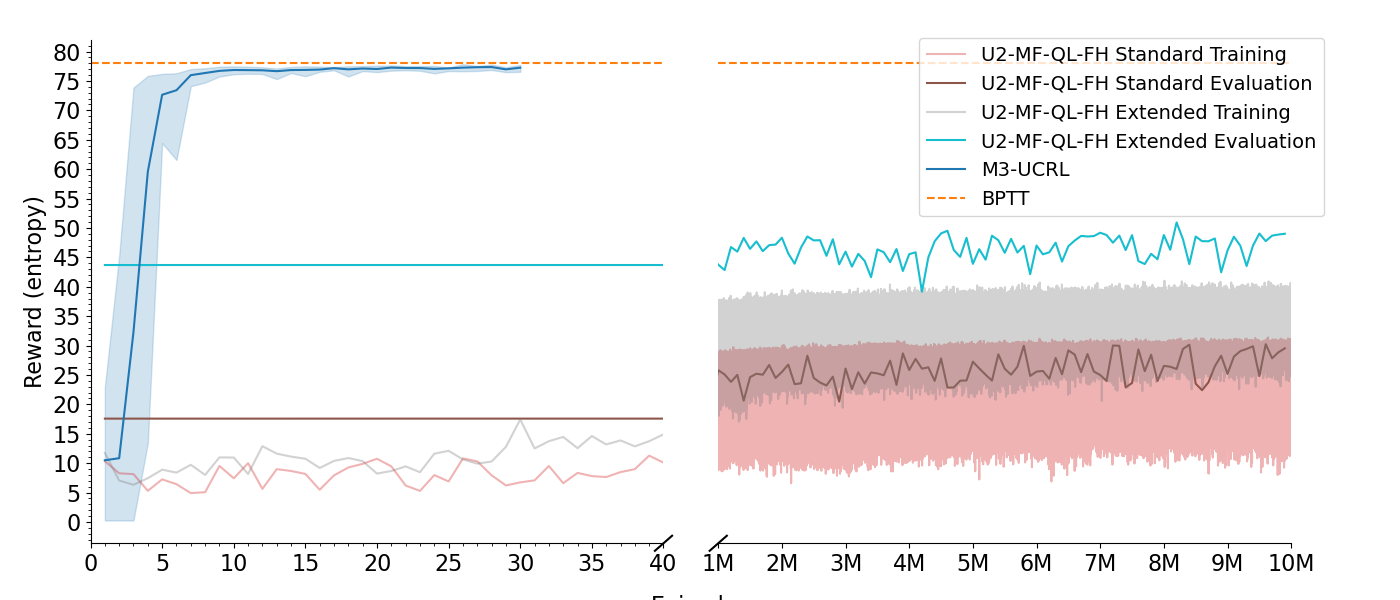}
    \caption{Rewards achieved by \textit{U2-MF-QL-FH} over $10$ million training episodes. \textit{Training Rewards} show the estimated reward by \textit{U2-MF-QL-FH} while \textit{Evaluation Rewards} show the performance of the greedy policy in a roll-out. The policy is evaluated after every million training episodes. The hyperparameters of the training were $\omega^Q = 0.7, \omega^\nu = 0.05$ and $\epsilon = 0.15$.}
 \label{fig:u2mfqlfh_convergence}
 \vspace{-5pt}
\end{figure*}

\mfhucrl is a unique algorithm in the Mean-Field Control literature because it considers continuous state and action spaces and finite time-horizon. As described in \cref{section:ProblemStatement}, the goal in this setting is to find an optimal policy that considers the evolution of the mean-field distribution over one episode for a fixed initial distribution.
In comparison, the majority of the works consider finite spaces and infinite time-horizon in which the goal is to find the optimal stationary pair of policy and mean-field distribution. We note that this is a significantly different solution concept to ours, hence, proposed algorithms are not suitable to solve the exploration via entropy maximization problem described in \cref{section:experiments}.
To the best of our knowledge, the only algorithm that considers a finite time-horizon problem is the \textit{U2-MF-QL-FH} proposed by \citet{Angiuli2021ReinforcementEconomics}. This algorithm is a Q-Learning variant that optimises the Q values via a two timescale approach. Even though, the problem formulation for \textit{U2-MF-QL-FH} is similar to ours described in \cref{section:ProblemStatement}, there are notable differences. First, \textit{U2-MF-QL-FH} is proposed for finite state and action spaces, uses a tabular representation for the Q values and select deterministic actions. Second, it is model-free and assumes access to simulator of one-step transitions. On the other hand, \mfhucrl does not rely on a tabular representation, learns in continuous spaces, and have limited interactions with the environment to reduce costs associated with these interactions.

Due to the formulation of \textit{U2-MF-QL-FH}, we ran experiments in the finite state and action space variant of the Exploration via entropy maximization problem as described in \citet{Geist2021ConcaveViewpoint,LauriereLearningSurvey}. As we demonstrate below, the deterministic policy used by \textit{U2-MF-QL-FH} is a significant limitation, therefore, we also alter its action space to be a set of $9$ distributions over the $5$ potential actions from which \textit{U2-MF-QL-FH} chooses
\footnote{The set of distribution consists of the following: $5$ deterministic actions ($4$ directions plus staying idle), $4$ $50\%-50\%$ split between adjacent moves (up-right, right-down, down-left, left-up) and $1$ distribution that assigns $25\%$ to all $4$ directions. Note that the action space of the \textit{Standard} formulation is a subset of this action space, therefore, it is expected that the \textit{Extended} version achieves better performance.}.
We call the former \textit{Standard} formulation and the latter \textit{Extended}.

As shown on \cref{fig:u2mfqlfh_convergence}, \textit{U2-MF-QL-FH} converges slowly with high variance in rewards for both formulations, however, the \textit{Extended} version outperforms the \textit{Standard} version for both training and evaluation performance as expected. One of the main factors for slow learning is that the Q values are hard to estimate due to the large number of state-action-time pairs and infrequent visits of many states. In particular, the Q value table has $|\stateSpace{}| \times | \actionSpace{}| \times H = 104 \times 5 \times 20 = 10,400$ entries ($18,720$ for the \textit{Extended} version), and states far from the bottom left corner are rarely visited because they can be reached only in the later steps of an episode. While extending the action-space to non-deterministic actions clearly improves performance, it increases significantly the tabular representation leading to even longer training required. Even after $10$ million episodes, we observed large changes in the Q value table indicating that the algorithm would need further iterations, but we had to terminate it due to computational limits. These results further highlights the contribution of \mfhucrl to the literature.

\subsection{Swarm Motion Experiments}
\label{appendix:swarm_motion}

\begin{figure*}[t!]
    \begin{subfigure}[]{\textwidth}
        \centering
        \includegraphics[width=0.475\linewidth]{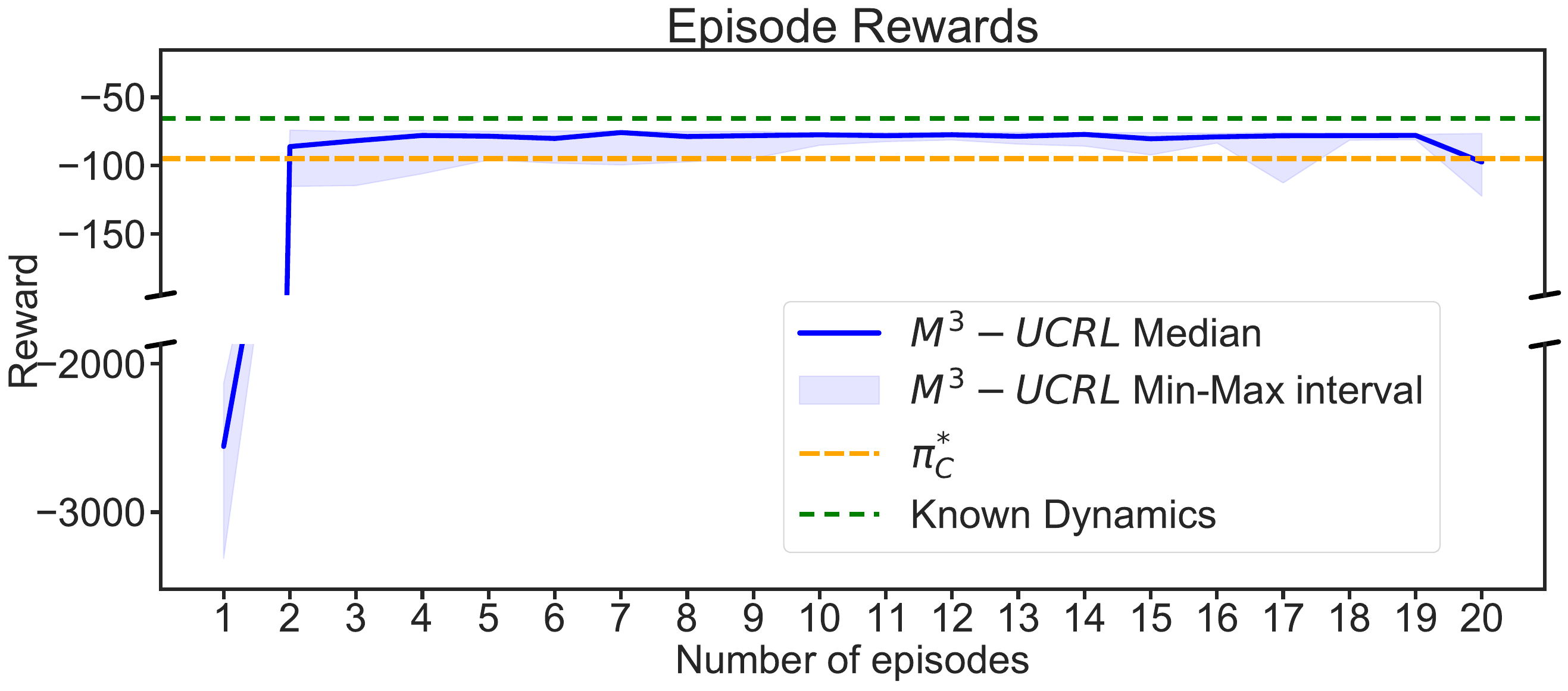}
        \hfill
        \includegraphics[width=0.475\linewidth]{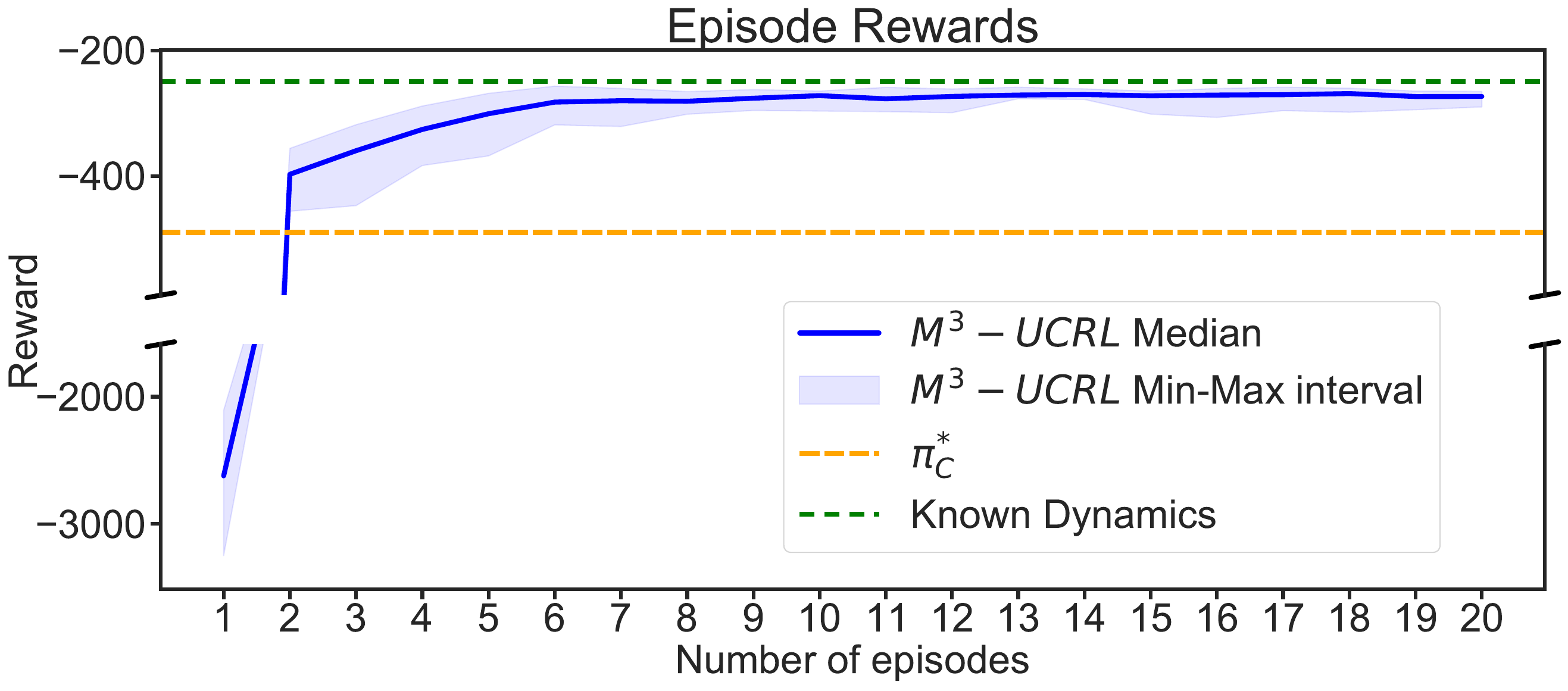}
        \caption{Rewards achieved by \mfhucrl over 20 episodes.}
        \label{fig:first_row}
    \end{subfigure}
    \begin{subfigure}[]{\textwidth}
        \centering
        \includegraphics[width=0.475\linewidth]{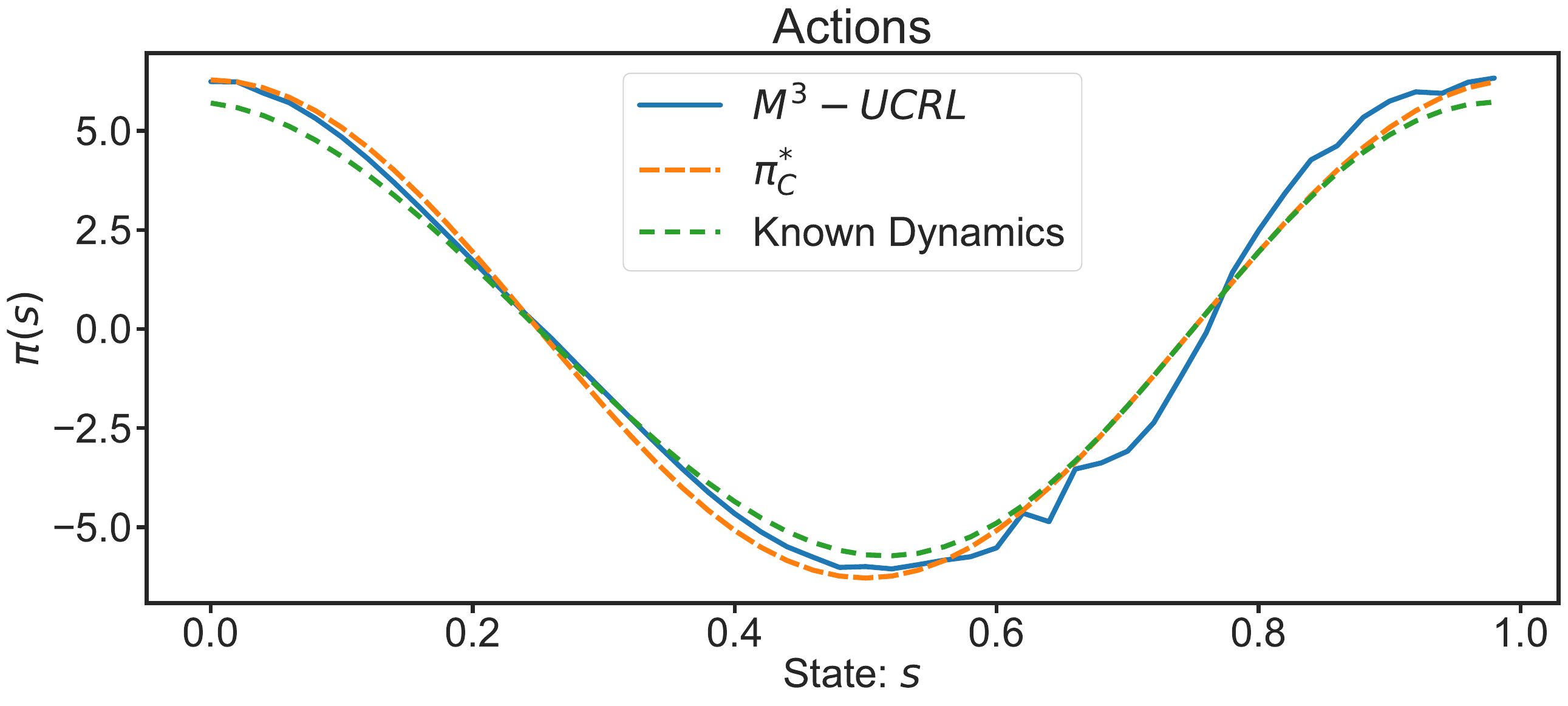}
        \hfill
        \includegraphics[width=0.475\linewidth]{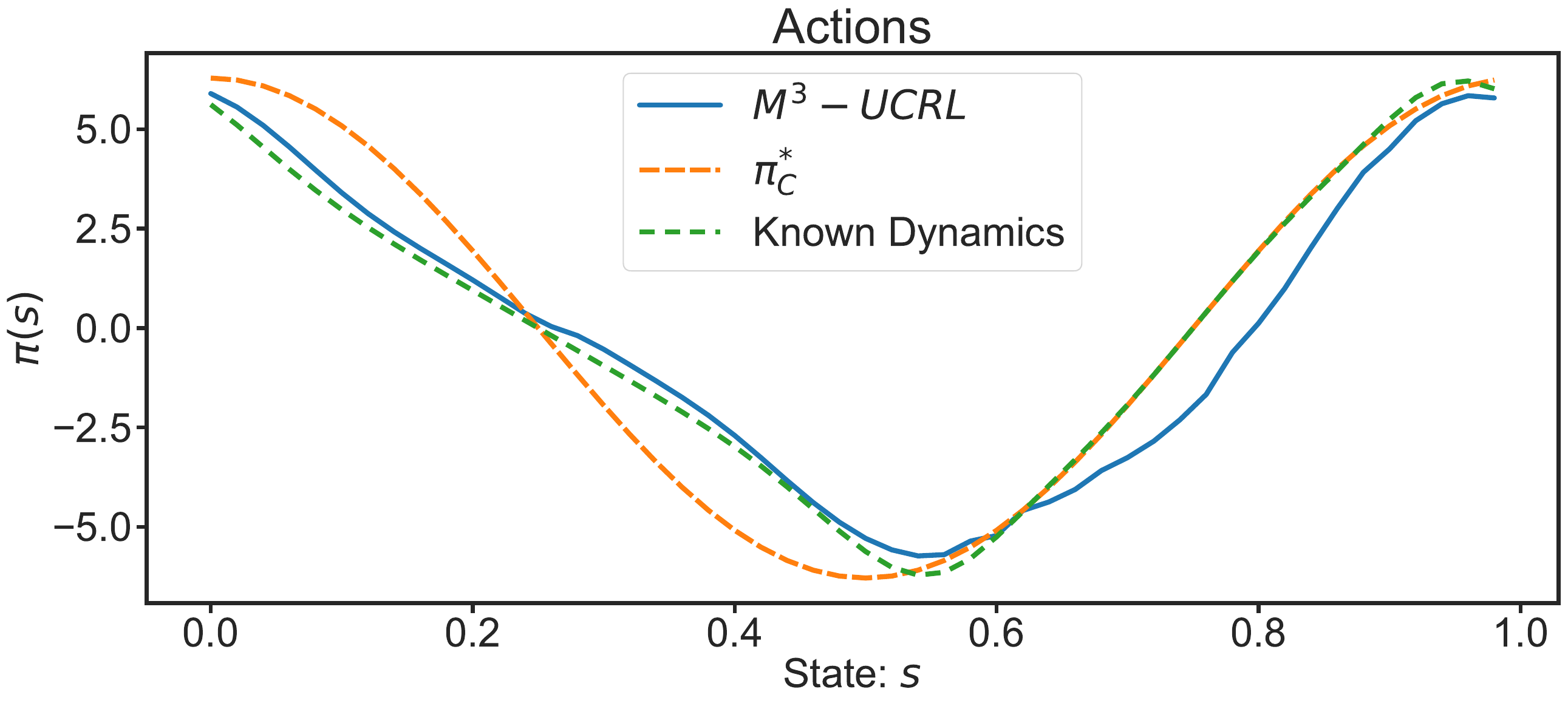}
        \caption{Actions in the ergodic optimal state.}
        \label{fig:second_row}
    \end{subfigure}
    \begin{subfigure}[]{\textwidth}
        \centering
        \includegraphics[width=0.475\linewidth]{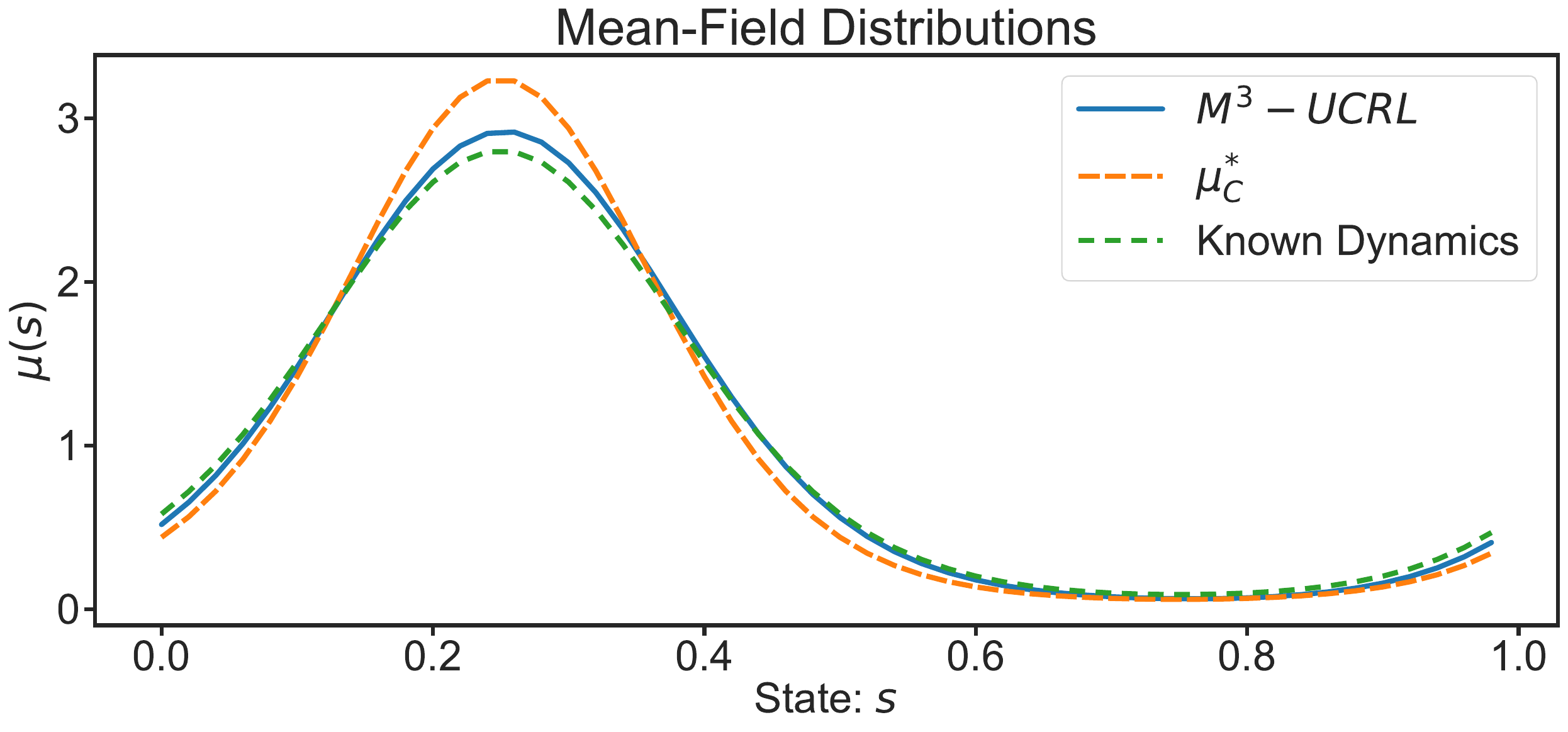}
        \hfill
        \includegraphics[width=0.45\linewidth]{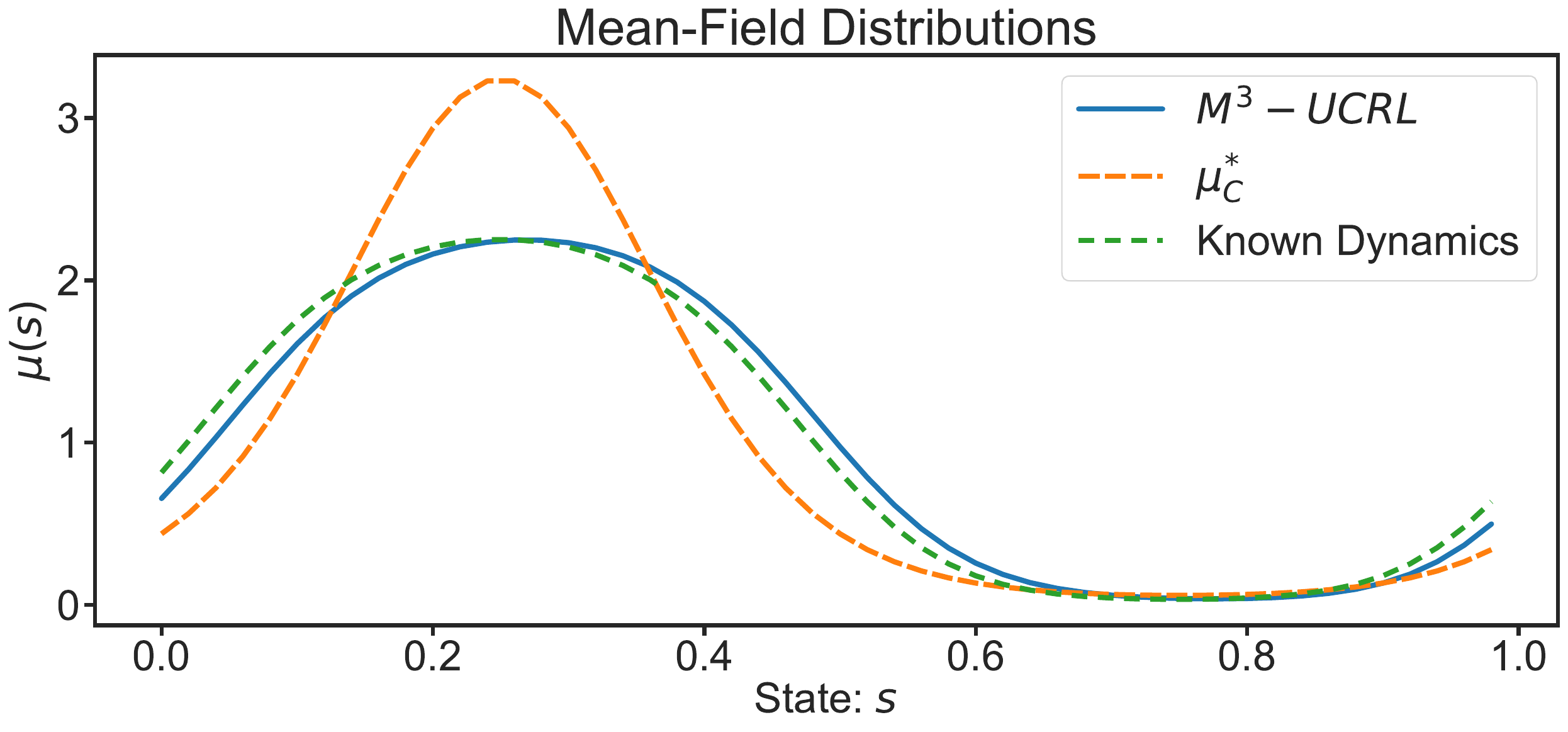}
        \caption{Mean-field distribution in the ergodic optimal
          state.}
          \label{fig:third_row}
    \end{subfigure}
    \caption{Results for dynamics model: \textbf{(Left column)}  $f(\protect\state{}{}, \protect\action{}{}) = \protect\state{}{} + \protect\action{}{} \Delta h$ , and \textbf{(Right column)} $f( \protect\state{}{}, \protect\action{}{}, \protect\stateDist{} ) = \protect\state{}{} + \protect\action{}{} (4-4 \protect\stateDist{} ( \protect\state{}{} )) \Delta h$.
    $( \protect\policy{C}{*}, \protect\stateDist{C}^*)$ represent an analytic solution to the continuous time swarm motion problem, and thus, it is not discrete-time optimal. The main discrete-time benchmark is Known Dynamics that unlike our \mfhucrl knows the true dynamics. In both cases (left and right), \mfhucrl converges in a small number of episodes and discovers a near-optimal policy.}
    \label{fig:swarm_motion1}
\end{figure*}

To complement experiments in \cref{section:experiments}, we also implemented \mfhucrl with Gaussian Process models for the \emph{swarm motion} problem. This setting considers an asymptotically large ($N \to \infty$) population of agents moving around in a space, maximizing a location-dependent reward function, and avoiding congested areas \citep{Almulla2017TwoGames}.

We choose the state space $\stateSpace{}$ as the $[0,1]$ interval with periodic boundary conditions, i.e., the unit torus, and the action space $\actionSpace{}$ as the compact interval $[-7, 7]$. Each episode partitions the unit time length into $H=200$ equal steps of length $\Delta h = 1/200$. The unknown dynamics of the system is given by \looseness=-1
\begin{equation}
\label{eq:swarm_motion_dynamics}
    f(\state{}{}, \action{}{}) = \state{}{} +  \action{}{} \Delta h,
\end{equation}
and $\noise{t,h}{} \sim N(0, \Delta h)$ for all $t$ and $h$. We note that, for now, the dynamics do not depend on the mean-field distribution, but later on in \cref{eq:swarm_motion_dynamics_2}, we also consider such a case. The reward function is of the form
$
    r(\state{}{}, \action{}{}, \stateDist{}) = \varphi(\state{}{}) - \frac{1}{2}|\action{}{}| - \ln (\stateDist{} (\state{}{})),
$
where the second term penalizes large actions, the third term introduces the aversion to crowded regions and the first term defines the reward received at the position $\state{}{}$, namely, $\varphi(\state{}{}) = - 2 \pi^{2} [-\sin(2\pi \state{}{}) + |\cos (2 \pi \state{}{})|^{2}] + 2 \sin (2 \pi \state{}{})$.

The advantage of this task is that one can analytically obtain the following optimal solution in the \emph{continuous-time} setting when $\Delta h \to 0$ and the time horizon is infinite, i.e., $H \to \infty$ \cite{Almulla2017TwoGames}:
\begin{equation}
\label{eq:optimalSolution}
    \begin{split}
        \policy{C}{*}(\state{}{}, \stateDist{}) = 2 \pi \cos (2 \pi \state{}{}), \\
        \quad 
        \stateDist{C}^{*}(\state{}{}) = \frac{\exp (2 \sin (2\pi \state{}{})}{\int \exp (2 \sin (2 \pi \state{}{}')) d\state{}{}'},
    \end{split}  
\end{equation}

where $\policy{C}{*}$ and $\stateDist{C}^{*}$ form an ergodic solution, i.e., $\stateDist{C}^{*} = \Phi(\stateDist{C}^{*}, \policy{C}{*}, f)$.
We use the continuous time optimal solutions from \cref{eq:optimalSolution} (where the subscript $C$ stands for continuous-time solutions) as a benchmark for comparison, however, we note that our time discretization introduces certain deviations. Consequently, $\policy{C}{*}$ might no longer be optimal, and so we also compare our solution to the one obtained under the same setup of discrete time steps and optimization procedure when the true dynamics are \emph{known} (Known Dynamics). This corresponds to our main benchmark.

Finally, we also consider the modified swarm motion problem with dynamics given by
\begin{equation} \label{eq:swarm_motion_dynamics_2}
    f(\state{}{}, \action{}{}, \stateDist{}) = \state{}{} + \action{}{} (4-4\stateDist{}(\state{}{})) \Delta h.
\end{equation}
The additional term that depends on the mean-field distribution models the effect of congestion on the transition. Specifically, in comparison to \cref{eq:swarm_motion_dynamics}, the more congested an area is, the larger the action an agent must exert to achieve the same movement.

While swarm motion resembles the exploration via entropy maximization problem in \cref{section:experiments}, there are meaningful differences besides the lower dimensional state and action spaces. Namely, certain areas in the state space rewarded differently and large actions are explicitly penalized, therefore, agents not only have to explore the state space but also learn the trade-off between areas and actions. \cref{eq:swarm_motion_dynamics_2} further extends the swarm motion problem with a dependency on the mean-field distribution that increases the complexity of the agent's learning problem.

\cref{fig:swarm_motion1} depicts the results when the true dynamics are given by \cref{eq:swarm_motion_dynamics} (Left column) and \cref{eq:swarm_motion_dynamics_2} (Right column). In \cref{fig:first_row} (Left), we show the rewards achieved over $20$ episodes. In \cref{fig:second_row} (Left) and \cref{fig:third_row} (Left), we show the ergodic solution of \mfhucrl and compare it with $(\policy{C}{*}, \stateDist{C}^{*})$ and the solution found when dynamics are known. As expected, the introduced time-discretization results in some deviations from \cref{eq:optimalSolution} which allow the policy found when the dynamics are known to exploit them and achieve higher reward. On the other hand, \mfhucrl converges in a small number of episodes and drives the environment into an ergodic state with policy and mean-field distribution almost identical to the solution under known dynamics (see \cref{fig:second_row} (Left) and \cref{fig:third_row} (Left)) . We note that, while minimal fluctuations in the episodic rewards are present, the \mfhucrl is robust in finding close to optimal solutions.

In \cref{fig:swarm_motion1} (Right column), we show the results obtained for the dynamics from \cref{eq:swarm_motion_dynamics_2}. Even though $\policy{}{*}$ and $\stateDist{}^{*}$ in \cref{eq:optimalSolution} are no longer applicable for this problem, we include them in the figures to show the effect of the introduced congestion term. Similarly to our previous experiments, \mfhucrl successfully converges to the solution obtained under known dynamics.  \cref{fig:second_row} (Right) shows that the two policies are not completely aligned after the same number of episodes as before, which is due to the harder estimation problem and larger confidence estimates.

\begin{figure*}
\centering
 \begin{subfigure}[]{0.475\textwidth}
    \centering
    \includegraphics[width=\columnwidth]{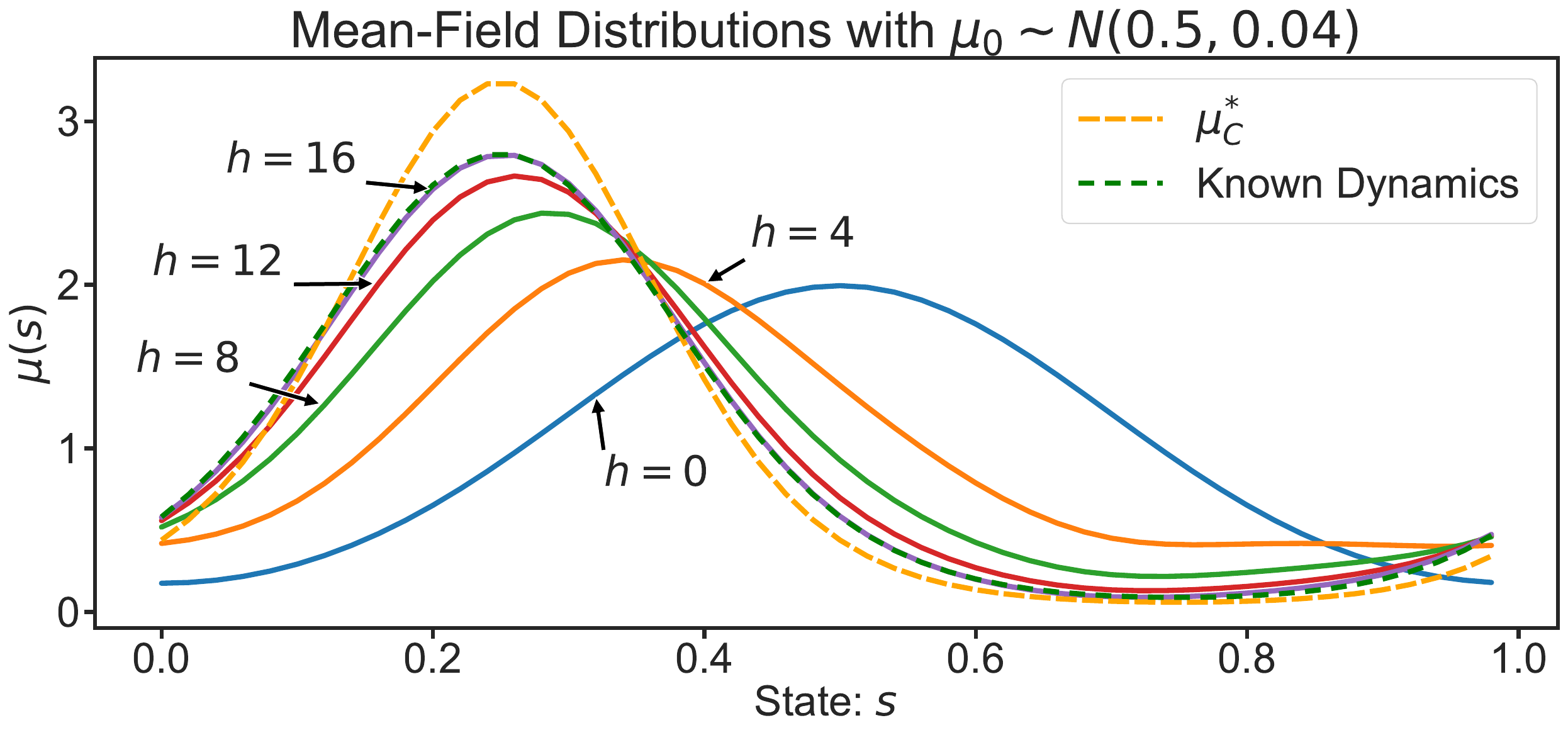}
    \label{fig:figure2a}
 \end{subfigure}
 \hfill
 \begin{subfigure}[]{0.475\textwidth}
    \centering
    \includegraphics[width=\columnwidth]{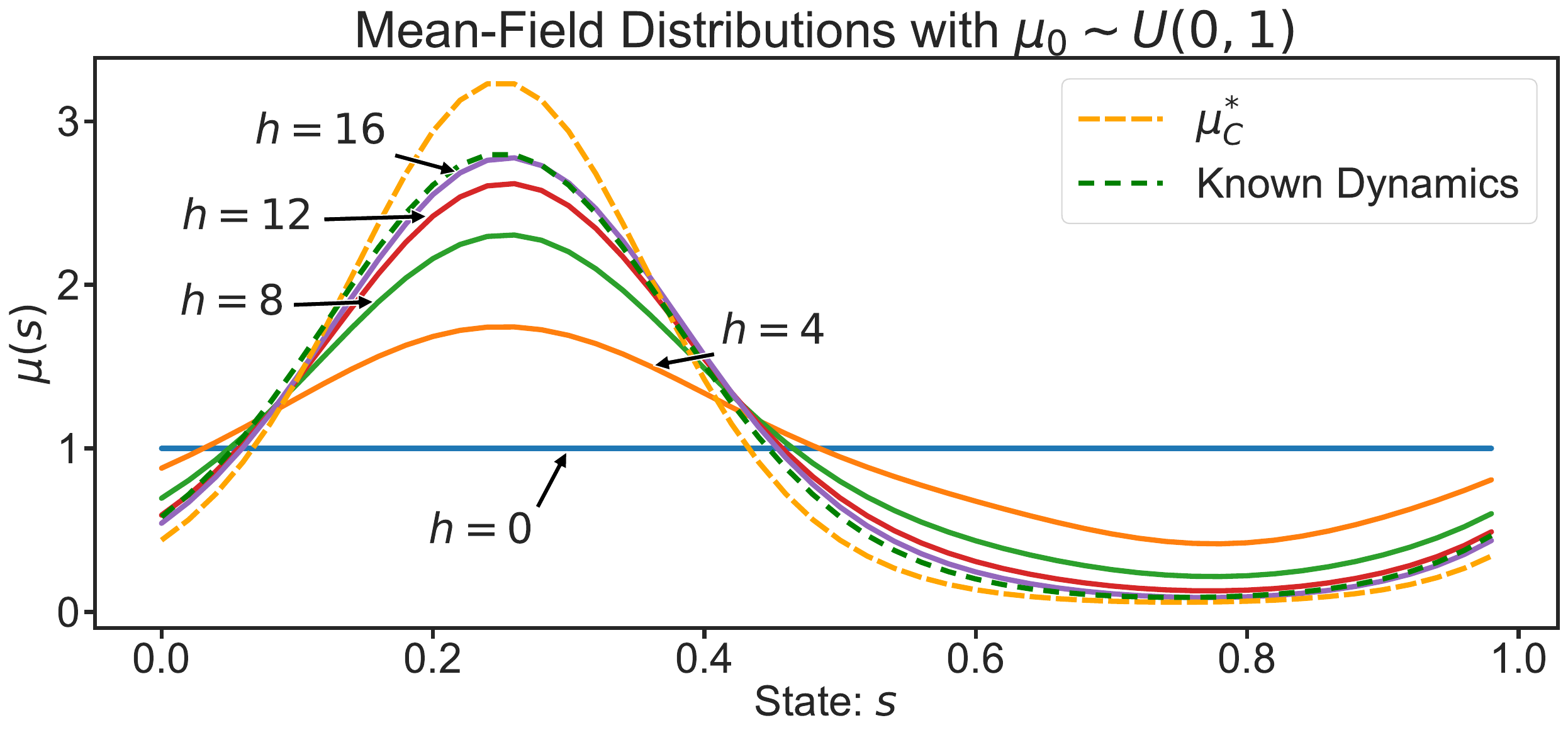}
    \label{fig:figure2b}
 \end{subfigure}
 \vspace{-20pt}
 \caption{Mean-Field distributions over one episode with different initial distributions and dynamics given by \cref{eq:swarm_motion_dynamics} after training has converged. (\textbf{Left side}) $\stateDist{0} \sim \mathcal{N}(0.5, 0.04)$ (\textbf{Right side}) $\stateDist{0} \sim \mathcal{U}(0, 1)$. Already at $h=16$, the distribution induced by \mfhucrl policy nearly-matches the one when dynamics are known.}
 \label{fig:swarm_motion2}
\end{figure*}

In the previous experiments, we selected $\stateDist{}^{*}$ as the initial distribution to focus on finding an optimal policy function $\policy{}{}$. To evaluate the robustness of \mfhucrl and its ability to drive an arbitrary initial distribution towards the optimal solution, we perform additional experiments with uniform and normal initial distributions. \cref{fig:swarm_motion2} shows the subsequent mean-field distributions in one episode after the algorithm reached a stable policy. We observe that \mfhucrl is robust to the initial distribution and swiftly drives the mean-field distribution towards a steady state that maximizes the rewards over time.

\end{document}